\newtheorem{theorem}{Theorem}
\newtheorem{lemma}{Lemma}
\newtheorem{corollary}{Corollary}
\newcommand{\expec}{\mathbb{E}}
\newcommand{\zer}{\mathrm{Zer}}
\newcommand{\dist}{\mathrm{dist}}
\DeclareMathOperator*{\argmin}{arg\,min}
\newcommand{\reals}{\mathbb{R}}
\newcommand{\cC}{\mathcal{C}}
\newcommand{\cF}{\mathcal{F}}
\newcommand{\NN}{\mathbb{N}}
\title{ODE Analysis of Stochastic Gradient Methods with\\ Optimism and Anchoring 
for Minimax Problems}
\author{%
  Ernest K. Ryu$^\mathbf{1}$ \qquad Kun Yuan$^\mathbf{2}$ \qquad Wotao Yin$^\mathbf{1}$\\
  $^\mathbf{1}$Department of Mathematics, UCLA\\
  $^\mathbf{2}$Electrical and Computer Engineering, UCLA\\
  \texttt{eryu@math.ucla.edu, kunyuan@ucla.edu, wotaoyin@math.ucla.edu}
  %\texttt{\{eryu@math.,kunyuan@,wotaoyin@math.\}ucla.edu}
}
\begin{document}
% \nipsfinalcopy is no longer used

\maketitle

\begin{abstract}
Despite remarkable empirical success, the training dynamics of generative adversarial networks (GAN), which involves solving a minimax game using stochastic gradients, is still poorly understood. In this work, we analyze last-iterate convergence of simultaneous gradient descent (simGD) and its variants under the assumption of convex-concavity, guided by a continuous-time analysis with differential equations. First, we show that simGD, as is, converges with stochastic sub-gradients under strict convexity in the primal variable. Second, we generalize optimistic simGD to accommodate an optimism rate separate from the learning rate and show its convergence with full gradients. Finally, we present anchored simGD, a new method, and show convergence with stochastic subgradients.
\end{abstract}

\section{Introduction}
Training of generative adversarial networks (GAN) \cite{goodfellow_gan_2014}, solving a minimax game using stochastic gradients, is known to be difficult.
Despite the remarkable empirical success of GANs, further understanding the global training dynamics empirically and theoretically is considered a major open problem \cite{goodfellow_tutorial,dc_gan_2016,metz2017,gan_training_mescheder2018,odena_blog}.

The \textbf{local} training dynamics of GANs and minimax games are understood reasonably well.
Several works have analyzed convergence assuming the loss functions have linear gradients and assuming the training uses \textbf{full} (deterministic) gradients.
Although the linear gradient assumption is reasonable for local analysis
(even though the loss functions may not be continuously differentiable due to ReLU activation functions) such results say very little about global convergence.
Although the full gradient assumption is reasonable when the learning rate is small, such results say very little about how the randomness affects the training.

% ``The largest problem facing GANs that researchers should try to resolve is the
% issue of non-convergence.'' \cite{goodfellow_tutorial}

% ``GANs can be hard to train
% and in practice it is often observed that gradient descent
% based GAN optimization does not lead to convergence.''\cite{gan_training_mescheder2018}

% The recent article by \cite{odena_blog}
% lists 7 major open problems in the study of GANs.
% Problem 4 is ``What can we say about the global convergence of the training dynamics?''

% There chanllenge of the setup inclues the fact that the cost function is not being a zero-sum game, not convex-concave, not being differentiable (due to ReLU activation functions), and the use of stochastic gradients.
% Much of the recent work perform analysis under the assumption of using full (deterministic) gradients. This is certainly inadequate.

%\paragraph{Contribution and organization.}
This work investigates \textbf{global} convergence of simultaneous gradient descent (simGD)
and its variants for zero-sum games with a convex-concave cost using using \textbf{stochastic} \textbf{sub}gradients.
We specifically study convergence of the last iterates as opposed to the averaged iterates.
%Despite the intense study of GANs, there has been little work establishing global convergence of the training under the convex-concave assumption.

% \paragraph{Problem statement.}
% Consider the cost function $L:\reals^m\times \reals^n\rightarrow \reals$ and the minimax game $\min_x\max_uL(x,u)$.
% % \[
% % \underset{x\in \reals^m}{\mbox{minimize}}\,\,
% % \underset{u\in \reals^n}{\mbox{maximize}}\,\, L(x,u).
% % \]
% We say $(x_\star,u_\star)\in \reals^m\times \reals^n$ is a solution to the minimax game or a \emph{saddle point} of $L$ if
% \[
% L(x_\star,u)\le L(x_\star,u_\star)\le L(x,u_\star),
% \qquad\forall x\in \reals^m,\,u\in \reals^n.
% \]
% We assume
% \begin{equation}
% \text{$L$ is convex-concave and has a saddle point}.
% \label{assump:convex-concavity}\tag{A0}
% \end{equation}
% By convex-concave, we mean $L(x,u)$ is a convex function in $x$ for fixed $u$
% and a concave function in $u$ for fixed $x$.
% In this work, we study methods for finding saddle points of $L$ using gradients and stochastic subgradients.
% In particular, we study simGD and simGD with optimism and anchoring, simple modifications incurring essentially no additional computational cost per iteration.

\paragraph{Organization.}
Section~\ref{s:sssgd} presents convergence of simGD with stochastic subgradients under strict convexity in the primal variable.
The goal is to establish a minimal sufficient condition of global convergence for simGD without modifications.
%(As an aside, simGD is not a ``descent'' method when applied to convex-concave costs, despite the name.)
%To the best of our knowledge, this is the weakest assumption for establishing global convergence of simGD.
Section~\ref{s:optimism} presents a generalization of \emph{optimistic} simGD \cite{daskalakis2018training}, which allows an optimism rate separate from the learning rate.
We prove the generalized optimistic simGD using full gradients converges, and experimentally demonstrate that the optimism rate must be tuned separately from the learning rate when using stochastic gradients.
However, it is unclear whether optimistic simGD is theoretically compatible with stochastic gradients.
Section~\ref{s:anchoring} presents \emph{anchored} simGD, a new method, and presents its convergence with stochastic subgradients.
%Anchoring represents what we consider to be the strongest contribution of this work.
The presentation and analyses of Sections~\ref{s:sssgd}, \ref{s:optimism}, and \ref{s:anchoring} are guided by continuous-time first-order ordinary differential equations (ODE). In particular, we interpret optimism and anchoring as discretizations of certain regularized dynamics.
%Commenting out experimental discussion
%Section~\ref{s:experiments} experimentally demonstrates the benefit of optimism and anchoring for training GANs in some setups.
%Commenting out experimental discussion

% In Section~\ref{s:sssgd} we investigate the minimal assumptions under which we can establish (almost sure) convergence of gradient descent.
% In Section~\ref{s:optimism} we provide an continuous-time interpretation of the optimism mechanism; optimism is the discretization of the with the gradient flow of a regularized operator. We then extend the optimism mechanism to allow levels of optimism. We then show that optimism is incompatible with stochastic gradients.
% In Section~\ref{s:anchoring} we provide a new mechanism called anchoring. We analyze convergence and empirically show that anchoring is compatible with stochastic gradients.

% establishing convergence of simultaneous gradient descent with a minimal set of assumptions;
% continuous-time interpretation of optimism;
% extending optimism to allow varying levels of optimism;
% introducing and analyzing convergence of anchoring.

\paragraph{Contribution.}
The contribution of this work is in Theorems~\ref{thm:sssgd-analysis}, \ref{thm:optimism-conv}, \ref{thm:anchoring_conv}, and \ref{thm:stochastic_anchoring}, the convergence analyses of the discrete algorithms,
and the insight provided by the continuous-time ODE analyses.
We do not present the continuous-time analyses as rigorous mathematical theorems to avoid discussing the existence and uniqueness of solutions to the differential equations.

The \textbf{strongest contribution} of this work is the ODE analysis of Section~\ref{s:anchoring-cont}, which is based on a nonpositive Lyapunov function, and Theorem~\ref{thm:stochastic_anchoring}, which is the first result establishing last-iterate convergence for convex-concave cost functions using stochastic subgradients without assuming strict convexity or analogous assumptions.

\paragraph{Prior work.}
When the goal is to improve the training of GANs, there are several independent directions, such as designing better architectures, choosing good loss functions, or adding appropriate regularizers \cite{dc_gan_2016,bottou2017_wass_gan,sonderby2017,Arjovsky2017,wass_gan_training2017,wei2018,roth2017,gan_training_mescheder2018,NIPS2017_6779,miyato2018spectral}.
In this work, we accept these factors as a given and focus on how to train (optimize) the model effectively.

\emph{Optimism} is a simple modification to remedy the cycling behavior of simGD, which can occur even under the bilinear convex-concave setup \cite{daskalakis2018training,NIPS2018_8136,daskalakis_et_al:LIPIcs:2018:10120,mertikopoulos2019optimistic,gidel2019a,liang2019,mokhtari2019,peng2019}.
These prior work assume the gradients are linear and use full gradients.
Although the recent name `optimism' originates from its use in online optimization \cite{chiang2012,rakhlin2013,rakhlin2013_nips,optimism_game_2015},
the idea dates back to Popov's work in the 1980s \cite{Pop80} and has been studied independently in the mathematical programming community
\cite{Malitsky2014,malitsky2015,malitsky2018,malitsky2018_golden,csetnek2019,rieger2020}.
% Optimism convergence analysis in the bilinear case (or linearized case) or the strongly convex-concave case \cite{daskalakis2018training,NIPS2018_8136,daskalakis_et_al:LIPIcs:2018:10120,mertikopoulos2019optimistic,gidel2019a,liang2019,mokhtari2019,peng2019}.

% XXX Include in final version XXX

We note that there are other mechanisms similar to optimism and anchoring such as ``prediction'' \cite{yadav2018stabilizing}, ``negative momentum'' \cite{gidel_negative_momentum2019}, and  ``extragradient'' \cite{korpelevich1976,tseng2000,chavdarova2019}. In this work, we focus on optimism and anchoring.

%While much of the recent work on training GANs perform local analysis assuming the gradients are linear,
Classical literature on minimax games analyze convergence of the Polyak-averaged iterates (which assigns less weight to newer iterates) when solving convex-concave saddle point problems using stochastic subgradients  \cite{bruck1977,nemirovski1978,nemirovski2009,juditsky2011,gidel2019a}.
For GANs, however, last iterates or exponentially averaged iterates \cite{yazici2019} (which assigns more weight to newer iterates) are used in practice.
Therefore, the classical work with Polyak averaging do not fully explain the empirical success of GANs.

We point out that we are not the first to utilize classical techniques for analyzing the training of GANs and minimax games.
In particular, the stochastic approximation technique \cite{gan_training2017_huesel,duchi2018},
control theoretic techniques \cite{gan_training2017_huesel, nagarajan2017},
ideas from variational inequalities and monotone operator theory \cite{gemp2018,gidel2019a}, and
continuous-time ODE analysis \cite{gan_training2017_huesel,csetnek2019} have been utilized for analyzing GANs and minimax games.

\section{Stochastic simultaneous subgradient descent}
\label{s:sssgd}
Consider the cost function $L:\reals^m\times \reals^n\rightarrow \reals$ and the minimax game $\min_x\max_uL(x,u)$.
% \[
% \underset{x\in \reals^m}{\mbox{minimize}}\,\,
% \underset{u\in \reals^n}{\mbox{maximize}}\,\, L(x,u).
% \]
We say $(x_\star,u_\star)\in \reals^m\times \reals^n$ is a solution to the minimax game or a \emph{saddle point} of $L$ if
\[
L(x_\star,u)\le L(x_\star,u_\star)\le L(x,u_\star),
\qquad\forall x\in \reals^m,\,u\in \reals^n.
\]
We assume
\begin{equation}
\text{$L$ is convex-concave and has a saddle point}.
\label{assump:convex-concavity}\tag{A0}
\end{equation}
By convex-concave, we mean $L(x,u)$ is a convex function in $x$ for fixed $u$
and a concave function in $u$ for fixed $x$.
Define
\[
G(x,u)=\begin{bmatrix}
\partial_xL(x,u)\\
\partial_u(-L(x,u))
\end{bmatrix},
\]
where $\partial_x$ and $\partial_u$ respectively denote the subdifferential with respect to $x$ and $u$.
For simplicity, write $z=(x,u)\in \reals^{m+n}$ and $G(z)=G(x,u)$.
Note that $0\in G(z)$ if and only if $z$ is a saddle point.
Since $L$ is convex-concave, the operator $G$ is monotone \cite{rockafellar19703}:
\begin{align}
(g_1-g_2)^T&(z_1-z_2)\ge 0\qquad \label{eqref:monotone_ineq}
\\
\forall &g_1\in G(z_1),\,g_2\in G(z_2),\,z_1,z_2\in\reals^{m+n}.\nonumber
\end{align}

% In practice, $\partial $ means the gradient where $L$ is differentiable and something reasonably similar where $L$ encounters something like a ReLU non-differentiability.
% In this work
Let $g(z;\omega)$ be a stochastic subgradient oracle, i.e., $\expec_{\omega} g(z;\omega)\in G(z)$ for all $z\in \reals^{m+n}$, where $\omega$ is a random variable.
Consider \emph{Simultaneous Stochastic Sub-Gradient Descent}
\begin{align*}
z_{k+1}&=z_k-\alpha_kg(z_k;\omega_k)
\tag{SSSGD}
\end{align*}
for $k=0,1,\dots$, where $z_0\in \reals^{m+n}$ is a starting point,
$\alpha_0,\alpha_1,\dots$ are positive learning rates, and
$\omega_0,\omega_1,\dots$ are IID random variables.
%In machine learning applications, $\omega_k$ represents random batch indices.
(We read SSSGD as ``triple-SGD''.)
In this section, we provide convergence of SSSGD when $L(x,u)$ is strictly convex in $x$.

% Since the goal of the algorithm is to descend in the $x$-variable and ascend in the $u$-variable, the absence of `ascent' in the name is somewhat misleading.
% However, the phrase simultaneous gradient descent is already used widely (although not universally) in the community, so we follow the convention.

% \begin{align*}
% x_{k+1}&=P_{X}(x_k-\alpha_k g_k),\qquad\expec_k[g_k]\in \partial_x L(x_{k},u_k)\\
% u_{k+1}&=P_{U}(u_k+\alpha_k h_k),\qquad\expec_k[h_k]\in -\partial_v (-L(x_{k},u_k))
% \end{align*}

\subsection{Continuous-time illustration}
To understand the asymptotic dynamics of the stochastic discrete-time system, we consider a corresponding deterministic continuous-time system.
For simplicity, assume $G$ is single-valued and smooth.
Consider
\[
\dot{z}(t)=-g(t), \qquad g(t)=G(z(t))
\]
with an initial value $z(0)=z_0$.
(We introduce $g(t)$ for notational simplicity.)
Let $z_\star$ be a saddle point, i.e., $G(z_\star)=0$. Then $z(t)$ does not move away from $z_\star$:
\[
\frac{d}{dt}\frac{1}{2}\|z(t)-z_\star\|^2= -g(t)^T( z(t)-z_\star) \le 0,
\]
where we used  \eqref{eqref:monotone_ineq}.
However, there is no mechanism forcing $z(t)$ to converge to a solution.
%However, although this inequality implies $z(t)$ never diverges away from $z_\star$, there is no mechanism enforcing $z(t)$ to converge to a solution.

\begin{figure}
    \centering
    \includegraphics[width=0.23\textwidth]{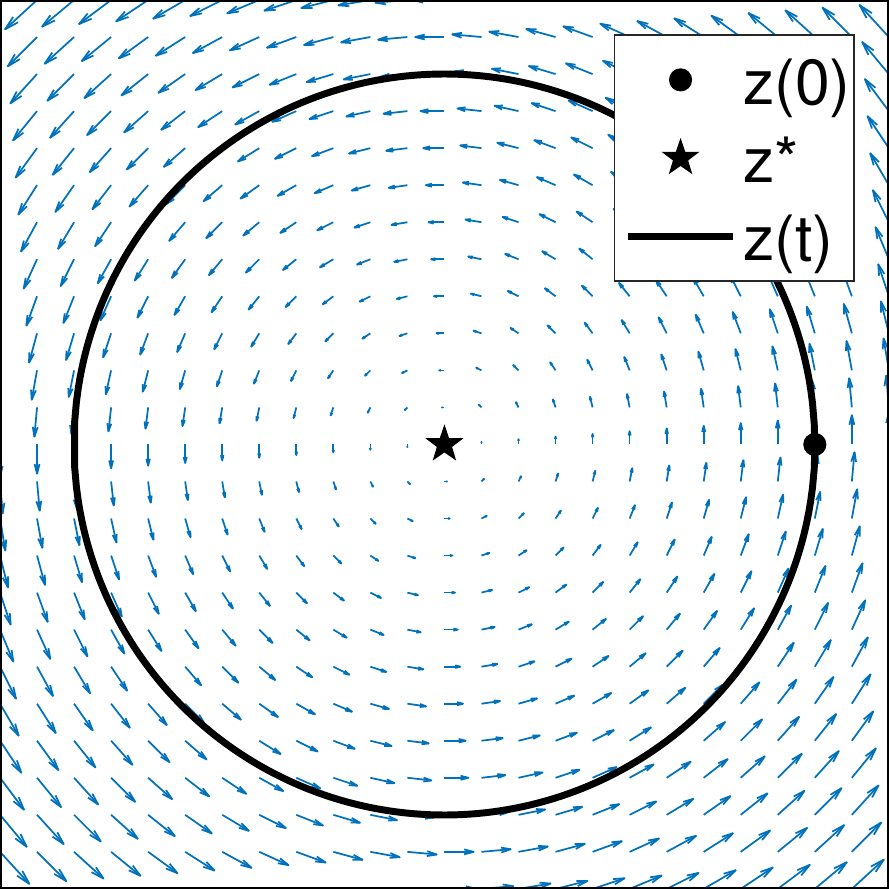}
    \includegraphics[width=0.23\textwidth]{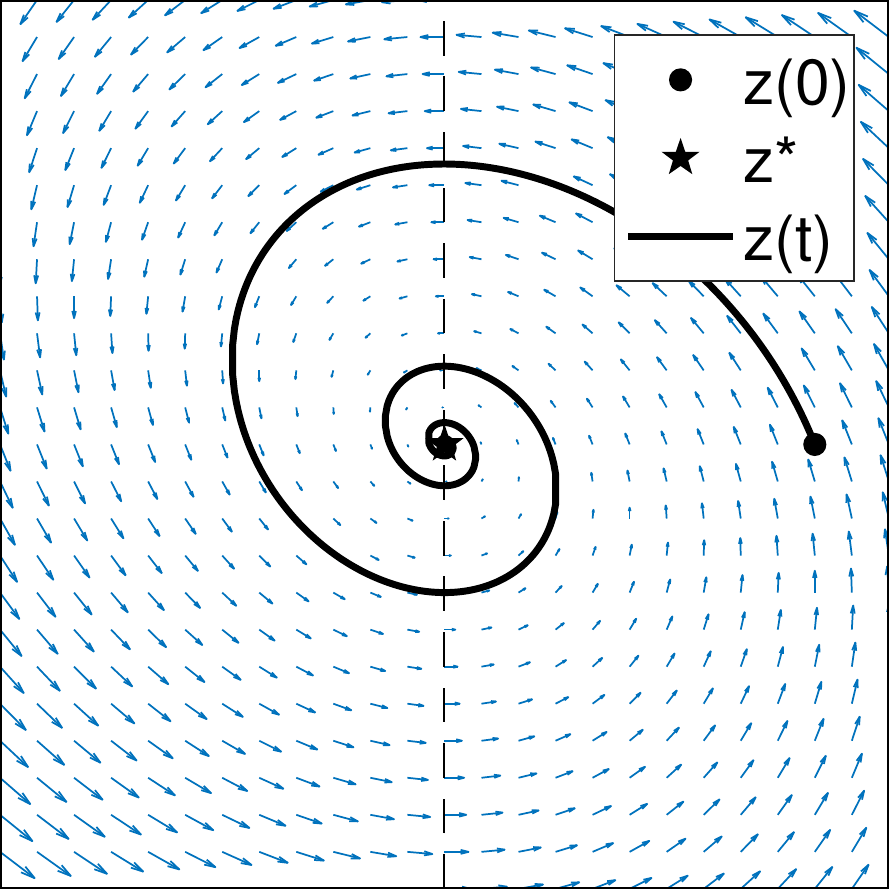}
    \caption{$z(t)$ with $\dot{z}(t)=-G(z(t))$.
    (Left) $L(x,u)=xu$. All points satisfy $G(z)^T(z-z_\star)=0$ so $\|z(t)-z_\star\|$ does not decrease and $z(t)$ forms a cycle.
    %In both cases $z_\star=0$ is the unique saddle-point.
    (Right) $L(x,u)=0.2x^2+xu$. The dashed line denotes where
    $G(z)^T(z-z_\star)=0$, but it is visually clear that $z_\star=0$ is the only cluster point.
    }
    \label{fig:flow}
\end{figure}

Consider the two examples $L_0(x,u)=xu$ and $L_\rho(x,u)=(\rho/2)x^2+xu$ with
\begin{equation}
G_0(x,u)=
\begin{bmatrix}
0&1\\
-1&0
\end{bmatrix}
\begin{bmatrix}
x\\u
\end{bmatrix},
\,\,
G_\rho(x,u)=
\begin{bmatrix}
\rho&1\\
-1&0
\end{bmatrix}
\begin{bmatrix}
x\\u
\end{bmatrix}
\label{eq:g-counter}
\end{equation}
where $x\in\reals$ and $u\in\reals$ and $\rho>0$.
Note that $L_0$ is the canonical counter example that also arises as the Dirac-GAN \cite{gan_training_mescheder2018}.
See Figure~\ref{fig:flow}.

The classical LaSalle--Krasnovskii invariance principle \cite{krasnovskii1959,lasalle1960} states (paraphrased)
if $z_\infty$ is a cluster point of $z(t)$, 
then the dynamics starting at $z_\infty$ will have a constant distance to $z_\star$.
On the left of Figure~\ref{fig:flow}, we can see
$\|z(t)-z_\star\|^2$ is constant as $\frac{d}{dt}\frac{1}{2}\|z(t)-z_\star\|^2=0$ for all $t$.
On the right of Figure~\ref{fig:flow}, we can see that
although 
$\frac{d}{dt}\frac{1}{2}\|z(t)-z_\star\|^2=0$ when $z(t)=(0,u)$ for $u\ne 0$ (the dotted line) this $0$ derivative is temporary as $z(t)$ will soon move past the dotted line.
Therefore, $z(t)$ can maintain a constant constant distance to $z_\star$ only if it starts at $0$, and $0$ is the only cluster point of $z(t)$.

%$G(x,u)=(-u,x)$, where $x,u\in\reals$ is the well-known bilinear example.

% Let $z(\cdot)$ the solution of \eqref{eq:diff-in}.
% cluster points of $z(\cdot)$ are in
% \[
% Z_\infty=
% \bigcap_{z_\star\in \zer(G)}
% \{z\in\mathbb{R}^d\,|\,
% 0\in\langle   G(\mathcal{T}_t(z)),\mathcal{T}_t(z)-z_\star\rangle
% ,\,\forall t\ge 0\}.
% \]
% %If furthermore, $X^\infty=X^\star$, then SMU a.s.\ converges to a (single) limit.
% %Lemma~\ref{lem:LaSalle}

% This result is an adaptation of the classical LaSalle--Krasnovskii invariance principle \cite{krasnovskii1959,lasalle1960}, usually stated for differential equations.

\subsection{Discrete-time convergence analysis}
Consider the further assumptions
\begin{align}
\sum^\infty_{k=0}\alpha_k=\infty,\qquad\sum^\infty_{k=0}&\alpha_k^2<\infty\label{assump:summable}\tag{A1}\\
\expec_{\omega_1,\omega_2} \| g(z_1;\omega_1)-g(z_2;\omega_2)\|^2\le &R_1^2\|z_1-z_2\|^2+R_2^2\nonumber\\
\quad&\forall\,z_1,z_2 \in \reals^{m+n},\label{assump:second-moment}\tag{A2}
\end{align}
where $\omega_1$ and $\omega_2$ are independent random variables and $R_1\ge 0$ and $R_2\ge 0$.
These assumptions are standard in the sense that analogous assumptions are used in convex minimization to establish almost sure convergence of stochastic gradient descent.
%Assumptions analogous to Assumptions~\eqref{assump:convex-concavity}, \eqref{assump:summable}, and \eqref{assump:second-moment} 

% is a standard assumption that limits the growth of the subgradient and the size of the noise. Similar assumptions are assumed in the analysis of SGD-type methods.

% When analyzing almost sure convergence SGD, \eqref{assump:summable} is known to be necessary; the same argument applies to this setup.
% Since the goal of our analysis is establishing almost sure convergence to a solution, 

%Under the stated assumptions, we have the following results.
\begin{theorem}
\label{thm:sssgd-analysis}
Assume \eqref{assump:convex-concavity}, \eqref{assump:summable}, and \eqref{assump:second-moment}.
Furthermore, assume $L(x,u)$ is strictly convex in $x$ for all $u$.
Then SSSGD converges in the sense of
$z_k\stackrel{\textrm{a.s.}}{\rightarrow} z_\star$
where $z_\star$ is a saddle point of $L$.
% \[
% x_k\stackrel{\textrm{a.s.}}{\rightarrow} x_\star,
% \qquad
% u_k\stackrel{\textrm{a.s.}}{\rightarrow} u_\star,
% \]
% where $(x_\star,u_\star)$ is a saddle point of $L$.
\end{theorem}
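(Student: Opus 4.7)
The plan is to run a Robbins--Siegmund stochastic Lyapunov analysis---the discrete-time counterpart of $\tfrac{d}{dt}\tfrac12\|z(t)-z_\star\|^2\le 0$ from Section~\ref{s:sssgd}---to obtain almost-sure Fejer monotonicity of $z_k$ with respect to a fixed saddle $z_\star$, extract a cluster point $z_\infty$ from the attendant summability, invoke strict convexity in the spirit of LaSalle to identify its primal coordinate as $x_\star$, and finally use Opial's lemma to upgrade subsequential to full-sequence convergence.

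First I set $V_k=\|z_k-z_\star\|^2$ and $\tilde g(z_k)=\expec[g(z_k;\omega_k)\mid\mathcal{F}_k]\in G(z_k)$ with $\mathcal{F}_k=\sigma(\omega_0,\dots,\omega_{k-1})$. Expanding the update rule yields $\expec[V_{k+1}\mid\mathcal{F}_k]=V_k-2\alpha_k\tilde g(z_k)^T(z_k-z_\star)+\alpha_k^2\expec[\|g(z_k;\omega_k)\|^2\mid\mathcal{F}_k]$. The cross term is nonnegative by \eqref{eqref:monotone_ineq} and $0\in G(z_\star)$; the last term is bounded by $c_1V_k+c_2$ via \eqref{assump:second-moment} applied at $(z_k,z_\star)$ together with $\|a\|^2\le 2\|a-b\|^2+2\|b\|^2$, where $\expec\|g(z_\star;\omega)\|^2$ is itself finite by \eqref{assump:second-moment} at $(z_\star,z_\star)$. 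Combined with \eqref{assump:summable} this is a Robbins--Siegmund recursion, delivering almost surely: convergence of $V_k$ (hence boundedness of $\{z_k\}$), $\sum_k\alpha_k\tilde g(z_k)^T(z_k-z_\star)<\infty$, and by the same noise bound $\sum_k\|z_{k+1}-z_k\|^2<\infty$.

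Since $\sum\alpha_k=\infty$, the summability forces a subsequence $z_{k_j}\to z_\infty$ with $\tilde g(z_{k_j})^T(z_{k_j}-z_\star)\to 0$; boundedness of $\tilde g(z_{k_j})$ (again from the noise bound) plus closedness of the maximal-monotone graph of $G$ gives $g_\infty\in G(z_\infty)$ with $g_\infty^T(z_\infty-z_\star)=0$. Writing $z_\infty=(x_\infty,u_\infty)$ and chaining the subgradient inequalities for $\partial_xL(\cdot,u_\infty)$ and $\partial_u(-L)(x_\infty,\cdot)$ with the saddle-point sandwich $L(x_\star,u_\infty)\le L(x_\star,u_\star)\le L(x_\infty,u_\star)$ forces the three values to be equal, so strict convexity of $L(\cdot,u_\star)$ gives $x_\infty=x_\star$ and simultaneously $u_\infty$ maximizes $L(x_\star,\cdot)$. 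Because strict convexity also forces $X_\star=\{x_\star\}$, the saddle set has product form $\{x_\star\}\times U_\star$; running Robbins--Siegmund once per element of a countable dense subset of $U_\star$, intersecting the resulting full-measure events, and using boundedness of $\{z_k\}$ promotes the Fejer property to $\|z_k-(x_\star,u')\|^2$ converging for every $u'\in U_\star$, so that Opial's lemma will deliver a single almost-sure limit in $\{x_\star\}\times U_\star$ as soon as every cluster point is known to be a saddle.

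The main obstacle is precisely that last ``every cluster point is a saddle'' step: the algebraic subgradient computation only places $u_\infty$ in the maximizer set of $L(x_\star,\cdot)$, which under only strict convexity in $x$ can strictly contain the dual-optimal set $U_\star$ (e.g., $L(x,u)=(x-1)^2/2+u(x-1)$ has unique saddle $(1,0)$ yet $L(1,u)=0$ for all $u$), so $(x_\star,u_\infty)$ is not a priori a saddle. Closing the gap requires the invariance half of LaSalle transplanted to the discrete stochastic dynamics: $\|z_{k+1}-z_k\|\to 0$ (from $\sum\|z_{k+1}-z_k\|^2<\infty$) together with $\alpha_k\to 0$ couples $\{z_k\}$ to the deterministic flow $\dot z=-G(z)$, and under strict convexity the largest $G$-invariant subset of $\{x=x_\star\}$ is exactly the saddle set, since at any non-saddle $(x_\star,u)$ one has $0\notin\partial_xL(x_\star,u)$ and the flow immediately leaves $\{x=x_\star\}$, forcing every cluster point into the saddle set and enabling Opial.
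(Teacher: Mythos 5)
Your proposal follows essentially the same road-map as the paper's proof, and your diagnosis of where the naive Fej\'er/cluster-point argument breaks down is exactly right. The paper also runs a Robbins--Siegmund-type argument to get a.s.\ Fej\'er monotonicity and boundedness (this appears as the verification of ``Condition (i)'' of Lemma~\ref{thm:functional-approximation}), and the paper's Lemma~\ref{lem:LaSalle} executes precisely the LaSalle--Krasnovskii invariance argument you describe: $x$-coordinate pinned to $x_\star$ by strict convexity, the $u$-coordinate's drift ruled out by constancy of $\|\phi_s(z_\infty)-z_\star\|$. Your counterexample $L(x,u)=(x-1)^2/2+u(x-1)$ cleanly isolates why ``$u_\infty$ maximizes $L(x_\star,\cdot)$'' alone is not ``$z_\infty$ is a saddle,'' which is the exact lacuna the invariance step is there to fill.

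The gap in your write-up is the step you yourself flag as ``transplanting LaSalle to the discrete stochastic dynamics.'' The sentence ``$\|z_{k+1}-z_k\|\to 0$ together with $\alpha_k\to 0$ couples $\{z_k\}$ to the deterministic flow $\dot z=-G(z)$'' is the entire content of a nontrivial stochastic-approximation theorem (here, Theorem~3.7 of Duchi--Ruan \cite{duchi2018}, reproduced as Lemma~\ref{thm:functional-approximation}): one must interpolate the iterates into continuous time with the clock $t_k=\sum_{i\le k}\alpha_i$, prove the martingale noise sum converges (your Robbins--Siegmund bounds do supply the needed square-summability), verify the subdifferential-averaging closure condition~(iv), and establish relative compactness and a.s.\ subsequential convergence of the time-shifted interpolants to solutions of the differential inclusion $\dot z\in -G(z)$. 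None of this follows from $\|z_{k+1}-z_k\|\to 0$ alone; it requires the interpolation construction and the closed-graph/convexity structure of $G$, especially since $G$ here is set-valued (no differentiability of $L$). Until that lemma is in hand, the invariance argument is a heuristic. A secondary, fixable issue is your appeal to ``the largest $G$-invariant subset of $\{x=x_\star\}$'': as stated this could contain points $(x_\star,u)$ with $0\in\partial_xL(x_\star,u)$ but $u$ not a maximizer of $L(x_\star,\cdot)$. The invariance principle applied to the Lyapunov function $\|z-z_\star\|^2$ constrains cluster points to trajectories along which the derivative is identically zero, and it is the constancy of $\|\phi_s(z_\infty)-z_\star\|$, not mere containment in $\{x=x_\star\}$, that forces the $u$-coordinate into the maximizer set and, via $\dot x\equiv 0$ and upper semicontinuity of $\partial_x L(x_\star,\cdot)$, ultimately into the saddle set. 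The paper's Lemma~\ref{lem:LaSalle} carries this out for differential inclusions, and your Opial step at the end is a sound alternative to the paper's subsequence-plus-Fej\'er finish, provided the countable-dense-intersection of full-measure events is handled carefully; but the proposal as written still rests on the un-proved stochastic-approximation lemma.
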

We can alternatively assume $L(x,u)$ is strictly concave in $u$ for all $x$ and obtain the same result.

The proof uses the stochastic approximation technique of \cite{duchi2018}. We show that the discrete-time process converges (in an appropriate topology) to a continuous-time trajectory satisfying a differential inclusion and use the LaSalle--Krasnovskii invariance principle to argue that cluster points are solutions.

% The continuous and discrete-time analysis relies on Assumption~\eqref{assump:convex-concavity}. Monotonicity and the existence of a saddle-point are critical components of the analysis.
% (Neither is a given in the setup of GANs.)

\paragraph{Related prior work.}
Theorem 3.1 of \cite{mertikopoulos2019optimistic} considers the more general mirror descent setup and proves convergence under the assumption of ``strict coherence'', which is analogous to the stronger assumption of strict convex-concavity in both $x$ and $u$.

%Ljung's ODE technique:

% From the counter example of DirecGAN it is seems likely that the averaged converges.
% \begin{theorem}
% Assume \eqref{assump:convex-concavity}, \eqref{assump:summable}, \eqref{assump:saddle-point}, and \eqref{assump:second-moment}.
% Then the averaged iterate of SSSGD converges in the sense of
% \[
% \frac{1}{k}\sum^{k-1}_{i=0}x_i\stackrel{\textrm{a.s.}}{\rightarrow} x_\star,
% \qquad
% \frac{1}{k}\sum^{k-1}_{i=0}u_i\stackrel{\textrm{a.s.}}{\rightarrow} u_\star,
% \]
% where $(x_\star,u_\star)$ is a saddle point of $L$.
% \end{theorem}

% (These results rely on the fact that the oeprator is a subdifferential operator.
% The convergence analysis of the later sections rely on the fact that the subdifferential operator is monotone-Lipschitz,
% but does not directly rely on the convex-concavity structure)
%Ergodic average converges Corollary~5.3.1 of \cite{aubin1984}

%the further assumption of strict concavity of $u$.
%The result of \cite{mertikopoulos2019optimistic} is more general in the sense that it accomodates the mirror descent setup.

% Remark.
% It is probably known that stochastic subgradient descent with assumptions
% resembling A1, A2 converge almost surely,
% we are not aware of a reference that simply states it as is.
% The closest result we are aware of is Leon Bottou's proof \cite{bottou-98x},
% which assumes differentiability.
% His proof can easily be modified to allow subgradients.

% However, our analysis for the non-differentiable saddle functions
% and our contribution in the proof is the the continuous time dynamics and LaSalle's Theorem.

\section{Simultaneous GD with optimism}
\label{s:optimism}
%Consider the problem of finding the saddle point of $L:\reals^m\times\reals^n\rightarrow \reals$, 
Consider the setup where $L$ is continuously differentiable and we access full (deterministic) gradients
\[
G(x,u)=\begin{bmatrix}
\nabla_xL(x,u)\\
-\nabla_uL(x,u)
\end{bmatrix}.
\]
Consider \emph{Optimistic Simultaneous Gradient Descent}
\begin{align*}
z_{k+1}&=z_k-\alpha G(z_k)-\beta (G(z_k)-G(z_{k-1}))
\tag{SimGD-O}
\end{align*}
for $k\ge 0$, where $z_0\in \reals^{m+n}$ is a starting point, $z_{-1}=z_0$, $\alpha>0$ is learning rate, and $\beta>0$ is the \emph{optimism rate}.
Optimism is a modification to simGD that remedies the cycling behavior;
for the bilinear example $L_0$ of \eqref{eq:g-counter}, simGD (case $\beta=0$) diverges while SimGD-O with appropriate $\beta>0$ converges.
In this section, we provide a continuous-time interpretation of SimGD-O as a regularized dynamics and provide convergence for the deterministic setup.

%Optimism is a nice simple modification to simGD, and it improves GAN training in some real GAN setups \cite{daskalakis2018training, mertikopoulos2019optimistic,gidel2019a}.

% \begin{gather}
% L(x,u)\text{ is convex in $x$ and concave in $u$ }
% \label{assump:convex-concavity}\tag{A0}\\
% \sum^\infty_{k=0}\alpha_k=\infty,\qquad\sum^\infty_{k=0}\alpha_k^2<\infty\label{assump:summable}\tag{A1}\\
% \text{$L$ has a saddle point}\tag{A2}\\
% \expec_{\omega} g(z,\omega)\in G(z),\quad \forall z\in Z
% \tag{A2}\\
% \expec_{\omega_1,\omega_2} \|g(z_1,\omega_1))-g(z_1,\omega_2))\|^2\le L^2\|z_1-z_1\|^2+G^2
% \forall\,z_1,z_2,\,\omega_1,\omega_2\text{ independent}
% \end{gather}

% z_{k+1}=\prox_{\alpha_k F} (z_k-\alpha_kg(z_k;\omega_k)-\alpha_{k-1}(g(z_k;\omega_k)-g(z_{k-1};\omega_{k-1})))

% \begin{align*}
% x_{k+1}&=\prox_{\alpha_k f}(x_k-\alpha_k g_k-\alpha_{k-1}(g_k-g_{k-1})),\qquad\expec_k[g_k]\in \partial_x L(x_{k},u_k)\\
% u_{k+1}&=\prox_{\alpha_k g}(u_k+\alpha_k h_k+\alpha_{k-1}(h_k-h_{k-1})),\qquad\expec_k[h_k]\in -\partial_v (-L(x_{k},u_k))
% \end{align*}

\subsection{Continuous-time illustration}
\label{ss:o-cont-time}
Consider the continuous-time dynamics
\[
\dot{z}(t)=-\alpha g(t)-\beta \dot{g}(t), \qquad g(t)=G(z(t)).
\]
The discretization $\dot{z}(t)\approx z_{k+1}-z_k$ and $\dot{g}(t)\approx G(z_k)-G(z_{k-1})$ yields SimGD-O.
We discuss how this system arises as a certain regularized dynamics
and derive the convergence rate
\[
 \|g(t)\|^2\le \mathcal{O}(1/t).
 \]

% where $G_\beta$ is the Moreau--Yosida regularization of $G$.

% Consider the regularized continuous-time dynamics
% \[
% \dot{\zeta}(t)=-\alpha G_\beta(\zeta(t)),
% \]

\paragraph{Regularized gradient mapping.}
The Moreau--Yosida \cite{moreau1965,yosida1948} regularization of $G$ with parameter $\beta>0$ is
\[
G_\beta=\beta^{-1}(I-(I+\beta G)^{-1}).
\]
To clarify, $I:\reals^{m+n}\rightarrow \reals^{m+n}$
%$I:\reals^{m+n}\rightarrow \reals^{m+n}$
is the identity mapping and $(I+\beta G)^{-1}$ is the inverse (as a function) of $I+\beta G$, which is well-defined by Minty's theorem \cite{minty1962}.
It is straightforward to verify that $G_\beta(z)=0$ if and only if $G(z)=0$, i.e., $G_\beta$ and $G$ share the same equilibrium points.
For small $\beta$, we can think of $G_\beta$ as an approximation $G$ that is better-behaved. Specifically, $G$ is merely monotone (satisfies \eqref{eqref:monotone_ineq}), but  $G_\beta$ is furthermore $\beta$-cocoercive, i.e.,
\begin{align}
(G_\beta(z_1)-G_\beta(z_2))^T(z_1-z_2)\ge \beta& \|G_\beta(z_1)-^G_\beta(z_2)\|^2\nonumber\\
&\forall z_1,z_2\in\reals^{m+n}.
\label{eqref:coco_ineq}
\end{align}

%The Moreau--Yosida regularization gives us access to the cocoercivity inequality.

% \[
% \dot{z}(t)=-\alpha g(t)-\beta \dot{g}(t), \qquad g(t)=G(z(t))
% \]
% \[
% z_{k+1}=z_k-\alpha G(z_k)-\beta(G(z_k)-G(z_{k-1})).
% \]

% \[
% \dot{\zeta}(t)=-\alpha G_\beta(\zeta(t))
% \]
\paragraph{Regularized dynamics.}
Consider the regularized dynamics
\[
\dot{\zeta}(t)=-\alpha G_\beta(\zeta(t)).
\]
Reparameterize the dynamics $\dot{\zeta}(t)=-\alpha G_\beta(\zeta(t))$ with $z(t)=(I+\beta G)^{-1}(\zeta(t))$ and $ g(t)=G(z(t))$ to get $\zeta(t)=z (t)+\beta g(t)$ and
\[
\dot{z} (t)+\beta \dot{g}(t)=\dot{\zeta(t)}=
-\frac{\alpha}{\beta}\left(\zeta(t)-z(t)\right)=-\alpha g(t).
\]
This gives us $\dot{z}(t)=-\alpha g(t)-\beta \dot{g}(t)$.

\paragraph{Rate of convergence.}
We now derive a rate of convergence.
Let $z_\star$ satisfy $G(z_\star)=0$ (and therefore $G_\beta (z_\star)=0$).
Then
\begin{align*}
\frac{d}{dt}\frac{1}{2}\|\zeta(t)-z_\star\|^2&=
(\zeta(t)-z_\star)^T\dot{\zeta}(t)\\
&=-\alpha (\zeta(t)-z_\star)^TG_\beta(\zeta(t))\\
&\le -\alpha \beta \|G_\beta (\zeta(t))\|^2,
\end{align*}
where we use cocoercivity, \eqref{eqref:coco_ineq}.
This translates to
\begin{align}
\frac{d}{dt}\frac{1}{2}\|z(t)+\beta g(t)-z_\star\|^2
\le -\alpha \beta \|g(t)\|^2.
\label{eq:optimism_ode_1}
\end{align}
The quantity $\|g(t)\|^2$ is nonincreasing since
\begin{align*}
&\frac{d}{dt}\frac{1}{2}\|g(t)\|^2
=
%g(t)^T\dot{g}(t)=
-\frac{1}{\alpha}\dot{\zeta}(t)^T\dot {g}(t)\\
&=-
\frac{1}{\alpha }
\lim_{h\rightarrow 0}
\frac{1}{h^2}(\zeta(t+h)-\zeta(t))^T(G_\beta(\zeta(t+h))-G_\beta(\zeta(t)))\\
&\le
-
\frac{\beta}{\alpha }
\lim_{h\rightarrow 0}
\frac{1}{h^2}
\|G_\beta(\zeta(t+h))-G_\beta(\zeta(t))\|\\
&=-\frac{\beta}{\alpha}\|\dot{g}(t)\|^2\le 0,
\end{align*}
where we use cocoercivity, \eqref{eqref:coco_ineq}.
Finally, integrating \eqref{eq:optimism_ode_1} on both sides gives us
\begin{align*}
\frac{1}{2}&\|z(t)+\beta g(t)-z_\star\|^2-\frac{1}{2}\|z(0)+\beta g(0)-z_\star\|^2\\
&\le -\alpha \beta \int^t_0\|g(s)\|^2\;ds\le -\alpha \beta t\|g(t)\|^2\\
 \|&g(t)\|^2\le
 \frac{1}{2\alpha \beta t}\|z(0)+\beta g(0)-z_\star\|^2.
\end{align*}

% \[
% \dot{z}(t)=-\alpha \frac{1}{\beta}(I-(I+\beta A)^{-1})z(t)
% \]
% we have
% \[
% z(t)=x(t)+\beta y(t)
% \]
% and we get
% \[
% \dot{x}(t)=-\alpha y(t)-\beta \dot{y}(t)
% \]

\paragraph{Related prior work.}
%The use of the Moreau--Yoshida regularization for the continuous-time analysis was inspired by
Attouch et al.\ \cite{attouch2002,Attouch2019} first used the Moreau--Yosida regularization in continuous-time dynamics (but not for analyzing optimism) and \cite{csetnek2019} interpreted a forward-backward-forward-type method as a discretization of continuous-time dynamics with the Douglas--Rachford operator.
\cite{daskalakis2018training} interprets optimism as augmenting ``follow the regularized leader'' with the (optimistic) prediction that the next gradient will be the same as the current gradient in online learning setup.
\cite{peng2019} interprets optimism as ``centripetal acceleration'' but does not provide a formal analysis with differential equations.

% Let $\alpha=1$ and write $-G(z(t))=v(t)$ ($v$ for velocity) and $\dot{v}(t)=a(t)$ ($a$ for acceleration). Then our discrete-time dynamics roughly corresponds to
% \[
% \dot{z}(t)=v(t)+\beta a(t).
% \]
% When $z(t)$ cycles around a single point, $a(t)$, centripetal acceleration, attracts the dynamics towards the center.
% Peng et al.\ provide an interpretation of optimism with centripetal acceleration \cite{peng2019}.
% However, they do not provide a differential equation analysis to support the interpretation.
% This is an interesting direction. I wonder if it is possible to have a different, second order differential
% equation to establish convergence under this setup.

\subsection{Discrete-time convergenece analysis}
The discrete-time method SimGD-O converges under the assumption
\begin{gather*}
\text{$L$ is  differentiable and $\nabla L$ is  $R$-Lipschitz continuous.}
\tag{A3}
\label{tag:l-diff}
\end{gather*}

\begin{theorem}
\label{thm:optimism-conv}
Assume \eqref{assump:convex-concavity} and \eqref{tag:l-diff}.
If $0<\alpha<2\beta(1-2\beta R)$, then SimGD-O converges in the sense of
% \[
% \min_{i=0,\dots,k}\|G(z_k)\|^2
% \le 
% \frac{1+\beta^2\alpha^2 R^2}{\alpha^2(\beta-1/2-2\beta^2\alpha R)k}
% \|z_0+\beta G(z_{0})-z_\star\|^2.
% \]
\[
\min_{i=0,\dots,k}\|G(z_k)\|^2
\le 
\frac{2+2\beta^2 R^2}{\alpha(2\beta-\alpha-4\beta^2 R)k}
\|z_0+\beta G(z_{0})-z_\star\|^2.
\]
Furthermore, $z_k\rightarrow z_\star$, where $z_\star$ is a saddle point of $L$.
\end{theorem}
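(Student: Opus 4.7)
Guided by Section~\ref{ss:o-cont-time}, I introduce $\zeta_k := z_k + \beta G(z_{k-1})$. A direct substitution in the SimGD-O update yields the clean recursion
\begin{equation*}
\zeta_{k+1} = \zeta_k - \alpha G(z_k),
\end{equation*}
the discrete counterpart of $\dot{\zeta} = -\alpha g$. With the convention $z_{-1} = z_0$, we have $\zeta_0 = z_0 + \beta G(z_0)$, matching the right-hand side of the claimed rate. The plan is to build a quadratic Lyapunov from $\|\zeta_k - z_\star\|^2$, establish a per-iteration decrease of the form $V_{k+1} - V_k \le -c\|G(z_k)\|^2$ under the stated step-size condition, telescope to obtain the $\min$-rate, and finally deduce last-iterate convergence by an Opial-type argument.

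\textbf{Rate via a composite Lyapunov.} Expanding
\begin{equation*}
\|\zeta_{k+1} - z_\star\|^2 = \|\zeta_k - z_\star\|^2 - 2\alpha \langle G(z_k),\, \zeta_k - z_\star\rangle + \alpha^2\|G(z_k)\|^2
\end{equation*}
and splitting $\zeta_k - z_\star = (z_k - z_\star) + \beta G(z_{k-1})$, monotonicity of $G$ together with $G(z_\star)=0$ kills the $\langle G(z_k),\,z_k-z_\star\rangle$ contribution, leaving $-2\alpha\beta\langle G(z_k),\,G(z_{k-1})\rangle$. Writing $G(z_k) = G(z_{k-1}) + (G(z_k)-G(z_{k-1}))$ and bounding the resulting cross term via Cauchy--Schwarz together with the Lipschitz inequality $\|G(z_k)-G(z_{k-1})\| \le R\|z_k - z_{k-1}\|$ and a weighted Young's inequality produces terms proportional to $\|G(z_k)\|^2$, $\|G(z_{k-1})\|^2$, and $\|z_k - z_{k-1}\|^2$. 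The last is controlled by the SimGD-O update
\begin{equation*}
\|z_k - z_{k-1}\|^2 \le 2\alpha^2\|G(z_{k-1})\|^2 + 2\beta^2 R^2\|z_{k-1} - z_{k-2}\|^2,
\end{equation*}
which recurses onto the previous step. This motivates the composite Lyapunov
\begin{equation*}
V_k := \|\zeta_k - z_\star\|^2 + A\|z_k - z_{k-1}\|^2,
\end{equation*}
with $A>0$ chosen (using $2\beta^2R^2<1$, which follows from $\alpha<2\beta(1-2\beta R)$) so that in $V_{k+1}-V_k$ the $\|z_k-z_{k-1}\|^2$ coefficient is non-positive and the $\|G(z_k)\|^2$ coefficient equals $-\alpha(2\beta-\alpha-4\beta^2R)/(2+2\beta^2R^2)$. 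Telescoping from $k=0$ to $k-1$, and noting that $V_0 = \|\zeta_0-z_\star\|^2$ because $z_{-1}=z_0$, yields the claimed $\min_{i\le k}\|G(z_i)\|^2$ bound.

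\textbf{Last-iterate convergence and main obstacle.} The per-iteration decrease also shows $V_k$, hence $\|\zeta_k - z_\star\|$, is bounded and convergent for every saddle point $z_\star$. Summability $\sum_k\|G(z_k)\|^2 < \infty$ forces $\|G(z_k)\| \to 0$, and since $\|z_k - \zeta_k\| = \beta\|G(z_{k-1})\| \to 0$, the sequence $\{z_k\}$ is bounded and $\|z_k - z_\star\|$ converges for every saddle point $z_\star$. Continuity of $G$ then ensures every cluster point of $\{z_k\}$ is itself a saddle point, and Opial's lemma yields $z_k \to z^\star$ for some saddle $z^\star$. The hard part is the constant-matching in step 2: tuning the weight $A$ and the Young's-inequality parameter so that the $\|z_k - z_{k-1}\|^2$ coefficient vanishes \emph{and} the $\|G(z_k)\|^2$ coefficient hits the precise form $-\alpha(2\beta-\alpha-4\beta^2R)/(2+2\beta^2R^2)$; in particular, the linear-in-$R$ factor $4\beta^2 R$ (rather than a cruder $R^2$) indicates that the Lipschitz bound must be invoked inside Cauchy--Schwarz on $\langle G(z_k),\, G(z_k)-G(z_{k-1})\rangle$ before any squaring, with a carefully balanced weight.
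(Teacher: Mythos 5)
Your proposal is structurally aligned with the paper: the reduction to the auxiliary sequence $\zeta_k = z_k + \beta G(z_{k-1})$ (which the paper tracks implicitly via $\|z_k + \beta g_{k-1} - z_\star\|^2$), the Lyapunov of the form $\|\zeta_k - z_\star\|^2 + A\|z_k - z_{k-1}\|^2$ (the paper uses $A = 2\beta^2 R$), and the Opial-style endgame for last-iterate convergence all match. The difference, and where the gap lies, is in the algebra between the monotonicity step and the Lyapunov inequality, and in what quantity the per-step decrease is measured.

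The paper does \emph{not} establish a per-step bound of the form $V_{k+1} - V_k \le -c\|G(z_k)\|^2$. Instead, it uses an exact polarization identity for $-\langle g_k, 2\beta g_{k-1} - g_k\rangle$ written in terms of $\langle g_k - g_{k-1},\, z_k - z_{k+1}\rangle$, $\|z_{k+1}-z_k\|^2$, and $\|g_k - g_{k-1}\|^2$; applies Young's inequality with weight $\varepsilon = R$ to the cross term (which is exactly why the Lipschitz constant appears linearly as $4\beta^2 R$); and obtains
\[
V_{k+1} \le V_k - (2\beta - 1 - 4\beta^2 R)\|z_{k+1}-z_k\|^2.
\]
The conversion from $\|z_{k+1}-z_k\|^2$ to $\|G(z_k)\|^2$ happens \emph{after} telescoping, via $\|g_k\|^2 \le 2\|z_{k+1}-z_k\|^2 + 2\beta^2 R^2\|z_k-z_{k-1}\|^2$ summed over $i$, which is where the factor $2 + 2\beta^2 R^2$ enters. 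In contrast, you expand $\langle G(z_k), G(z_{k-1})\rangle$ around $G(z_{k-1})$, which produces a negative term in $\|G(z_{k-1})\|^2$ but a positive $\alpha^2\|G(z_k)\|^2$ term with the opposite index — so a two-term Lyapunov $\|\zeta_k - z_\star\|^2 + A\|z_k-z_{k-1}\|^2$ cannot absorb both without a further auxiliary term, and the recursion you write for $\|z_k - z_{k-1}\|^2$ points backward to $\|z_{k-1}-z_{k-2}\|^2$, which isn't where the per-step decrease wants to go. You flag the constant-matching as ``the hard part'' and leave it unresolved; that is the genuine gap. Choosing Young/Lyapunov weights so that the $\|z_k - z_{k-1}\|^2$ coefficient vanishes \emph{and} the $\|G(z_k)\|^2$ coefficient lands on $-\alpha(2\beta-\alpha-4\beta^2 R)/(2+2\beta^2 R^2)$ is not a tuning problem within the scheme you set up — the paper's factorization through $\|z_{k+1}-z_k\|^2$ and the separate summed-level conversion are what produce that constant, and a single per-step bound in $\|G(z_k)\|^2$ will not reproduce it.

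A secondary, minor point: your Opial-type conclusion is essentially what the paper does (Fejér monotonicity of the composite Lyapunov for every saddle $z_\star$, cluster points are zeros of $G$ by continuity and $G(z_k)\to 0$, then specialize $z_\star$ to a cluster point), so that part is fine. The substantive issue is the missing derivation of the per-iteration inequality with the stated constant.
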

The proof can be considered a discretization of the continuous-time analysis.

%The discovery of the discretized proof were computer-assisted by the recent performance estimation problem tools \cite{Taylor2017,OSPEP}.

\begin{corollary}
\label{cor:optimism-conv}
In the setup of Theorem~\ref{thm:optimism-conv}, the choice $\alpha=1/(8R)$ and $\beta=2\alpha$ yields
\begin{align*}
\min_{i=0,\dots,k}\|G(z_k)\|^2
&\le 
\frac{136R^2}{k}
\left\|z_0+\beta G(z_{0})-z_\star\right\|^2\\
&\le 
\frac{289R^2}{k}
\left\|z_0-z_\star\right\|^2.
\end{align*}
\end{corollary}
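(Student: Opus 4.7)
The plan is to simply substitute $\alpha = 1/(8R)$ and $\beta = 1/(4R)$ into the bound provided by Theorem~\ref{thm:optimism-conv}, then follow up with a triangle-inequality style bound on $\|z_0 + \beta G(z_0) - z_\star\|^2$ to obtain the second inequality. There is no clever argument required, only book-keeping, so I would treat this as a routine verification.

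First I would check that the hypothesis of Theorem~\ref{thm:optimism-conv} holds: with these choices, $2\beta(1-2\beta R) = (1/(2R))(1 - 1/2) = 1/(4R)$, which strictly exceeds $\alpha = 1/(8R)$, so Theorem~\ref{thm:optimism-conv} applies. Next I would evaluate the constants in the bound. The numerator becomes $2 + 2\beta^2 R^2 = 2 + 1/8 = 17/8$. For the denominator factor, $2\beta - \alpha = 3/(8R)$ and $4\beta^2 R = 1/(4R)$, so $2\beta - \alpha - 4\beta^2 R = 1/(8R)$, and multiplying by $\alpha = 1/(8R)$ gives $1/(64R^2)$. Hence the prefactor in the bound equals $(17/8)/(1/(64R^2)) = 17 \cdot 8 \cdot R^2 = 136 R^2$, which is exactly the first claimed inequality.

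For the second inequality, I would use $\|a+b\|^2 \le 2\|a\|^2 + 2\|b\|^2$ and the fact that $G(z_\star) = 0$ combined with $R$-Lipschitzness of $\nabla L$ (which gives $\|G(z_0)\| = \|G(z_0) - G(z_\star)\| \le R\|z_0 - z_\star\|$). This yields
\[
\|z_0 + \beta G(z_0) - z_\star\|^2 \le 2\|z_0 - z_\star\|^2 + 2\beta^2 \|G(z_0)\|^2 \le (2 + 2\beta^2 R^2)\|z_0 - z_\star\|^2 = \tfrac{17}{8}\|z_0 - z_\star\|^2.
\]
Multiplying by $136 R^2 / k$ gives $136 \cdot 17/8 = 17^2 = 289$, matching the stated $289 R^2 / k$ bound.

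There is no real obstacle here; the only thing to watch is that the chosen constants land exactly on the squared value $17^2 = 289$, which is why the specific tuning $\alpha = 1/(8R)$, $\beta = 2\alpha$ is natural — it makes $2 + 2\beta^2 R^2$ a clean fraction and matches the algebraic structure of the Theorem~\ref{thm:optimism-conv} bound. I would present the derivation as two short arithmetic computations followed by the triangle-inequality step, without any need to revisit the underlying Lyapunov analysis.
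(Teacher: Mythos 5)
Your computation is correct and takes the only natural route: substitute $\alpha = 1/(8R)$, $\beta = 1/(4R)$ into the Theorem~\ref{thm:optimism-conv} bound to get the prefactor $136R^2$, then apply \eqref{beta:ineq} together with $R$-Lipschitzness of $G$ (and $G(z_\star)=0$) to bound $\|z_0+\beta G(z_0)-z_\star\|^2\le (17/8)\|z_0-z_\star\|^2$, giving $136\cdot 17/8 = 289$. This matches the paper's (unwritten but implied) derivation exactly.
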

The parameter choices of Corollary~\ref{cor:optimism-conv} are almost optimal.
The optimal choices that minimize the bound of Theorem~\ref{thm:optimism-conv} are
$\alpha=0.124897/R$ and $\beta=1.94431\alpha$; they provide a factor of $135.771$, a very small improvement over the factor $136$.
%of Corollary~\ref{cor:optimism-conv}.

%\emph{Remark.}
% XXX This proof and result can and should be polished more. XXX
% There are several areas where this analysis could be improved: $\min_{i=1,\dots,k}$ can probably be removed to show a direct rate on $\|\nabla L(z_i)\|^2$ $\le \frac{C}{k}$.
% The other is to show that 
% We believe $\alpha\beta<\frac{1}{3 R}$ and the constant factor in the $\mathcal{O}(1/k)$ rate is not optimal and can be improved with a more refined analysis.
\paragraph{Continuous vs.\ discrete analysis.}
The discrete-time analysis of SimGD-O of Theorem~\ref{thm:optimism-conv} bounds the squared gradient norm of the \emph{best iterate}, while the continuous-time analysis bounds the squared gradient norm of the ``last iterate'' (at terminal time).
The discrepancy comes from the fact that while we have monotonic decrease of $\|g(t)\|$ in continuous-time, we have no analogous monotonicity condition on $\|g_k\|$ in discrete-time.
To the best of our knowledge, there is no result establishing a $\mathcal{O}(1/k)$ rate on the squared gradient norm of the \emph{last iterate} for SimGD-O or the related ``extragradient method'' \cite{korpelevich1976}.
Theorem~\ref{thm:anchoring_conv} is the first result showing a rate close to $\mathcal{O}(1/k)$ on the last literate.

\paragraph{Related prior work.}
\cite{peng2019} show convergence of simGD-O for $\alpha\ne \beta$ and bilinear $L$.
\cite{malitsky2018} and \cite{csetnek2019} show convergence of simGD-O for $\alpha=\beta$ and convex-concave $L$.
Theorem~\ref{thm:optimism-conv} establishes convergence for $\alpha\ne \beta$ and convex-concave $L$ and presents an explicit rate.

\begin{figure}
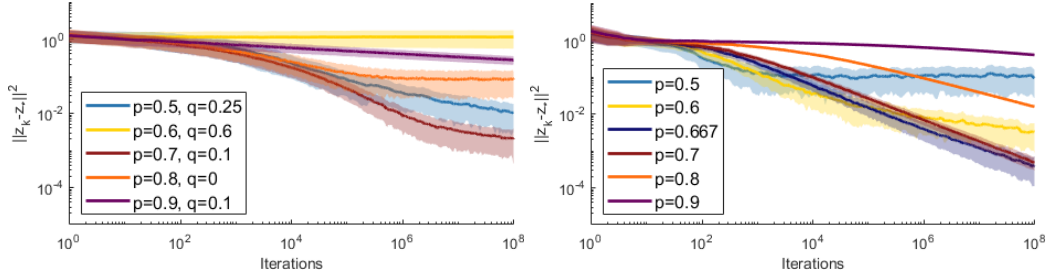

    \centering
    \includegraphics[width=0.49\textwidth]{optimism}
    \includegraphics[width=0.49\textwidth]{anchoring}
    \caption{Plot of $\|z_k-z_\star\|^2$ vs.\ iteration count for simGD-OS (left) and SSSGD-A (right)
    with $\alpha_k=1/k^p$ and $\beta_k=1/k^q$.
    We use $L_0$ of \eqref{eq:g-counter} and Gaussian random noise.
    The shaded region denotes $\pm$ standard error.
    For simGD-OS, we see that neither $q=0$ nor $q=p$ leads to convergence. Rather, $q$ must satisfy $0<q<p$ so that the learning rate diminishes faster than the optimism rate.
%     For SSSGD-A, we use $\varepsilon=0$ and $p=2/3$ optimal.
% (In stochastic convex minimization, $p=2/3$ is know to be optimal \cite{moulines2011,taylor2019}.
}
    \label{fig:conv}
\end{figure}

\subsection{Difficulty with stochastic gradients}
Training in machine learning usually relies on stochastic gradients, rather than full gradients.
We can consider a stochastic variation of SimGD-O:
\begin{align}
z_{k+1}&=z_k-\alpha_k g(z_k;\omega_k)-\beta_k (g(z_k;\omega_k)-g(z_{k-1};\omega_{k-1}))
\tag{SimGD-OS}
\end{align}
with learning rate $\alpha_k$ and optimism rate $\beta_k$.

Figure~\ref{fig:conv} presents experiments of SimGD-OS on a simple bilinear problem.
The choice $\beta_k=\alpha_k$ where $\alpha_k\rightarrow 0$ does not lead to convergence.
Discretizing $\dot{z}(t)=-\alpha g(t)-\beta \dot{g}(t)$ with a diminishing step $h_k$ leads to the choice $\alpha_k=\alpha h_k$ and $\beta_k=\beta$,
but this choice does not lead to convergence either.
Rather, it is necessary to tune $\alpha_k$ and $\beta_k$ separately as in Theorem~\ref{thm:optimism-conv} to obtain convergence and dynamics appear to be sensitive to the choice of $\alpha_k$ and $\beta_k$.
In particular, both $\alpha_k$ and $\beta_k$ must diminish and $\alpha_k$ must diminish faster than $\beta_k$.
One explanation of this difficulty is that the finite difference approximation $\alpha_k^{-1}(g(z_k;\omega_k)-g(z_{k-1};\omega_{k-1}))\approx \dot{g}(t)$ is unreliable when using stochastic gradients.

%and experiments with $L_0$ of \eqref{eq:g-counter} and Gaussian random noise do not converge regardless of how we choose $\alpha_k$ and $\beta_k$.

Whether the observed convergence holds generally in the nonlinear convex-concave setup and whether optimism is compatible with subgradients is unclear.
This motivates anchoring of the following section which is provably compatible with stochastic subgradients.

% \[
% z_{k+1}=z_k-\alpha h_k g_k-\beta(g_k-g_{k-1})
% \]
% However, this does not work experimentally.

% If we try
% \[
% z_{k+1}=z_k-\alpha_k g_k-\alpha_k(g_k-g_{k-1})
% \]
% where $\alpha_k$ is a diminishing learning rate, we observe that this doesn't work.

% \begin{align*}
% \frac{z_{k+1}-z_k}{h_k}&=-\alpha g_k-\beta\frac{g_k-g_{k-1}}{h_k}\\
% \end{align*}
%Optimism seems to be incompatible with stochastic gradients.

%We were unable to adapt our proof of Theorem~\ref{thm:optimism-conv} to the stochastic setting.
%This incompatibility motivates our following study of anchoring.

% These observations are merely circumstantial evidence that by no means prove the incompatibility beyond a reasonable doubt, but we believe this to be a reasonably convincing case.
% However, we would be very happy to see someone successfully using optimism with stochastic gradients.

\paragraph{Related prior work.}
\cite{gidel2019a} show averaged iterates of SimGD-OS converge if iterates are projected onto a compact set.
\cite{mertikopoulos2019optimistic} show almost sure convergence of SimGD-OS under strict convex-concavity (and more generally under ``strict coherence'').
However, such analyses do not provide a compelling reason to use optimism since SimGD without optimism already converges under these setups.

% However, similar convergence results hold for stochastic simGD without optimism (as Gidel et al.\ and Mertikopoulos et al.\ acknowledge).
% Therefore, when these results do not explain whether optimism can be used to establish last-iterate convergence without strict convex-concavity using stochastic gradients.

% Finally, our theoretical analysis does not adapt to the stochastic setup. In particular, we have tried and was able to show that $\expec\|z^k-z^{k+1}\|$ diminishes to a certain constant. In the stochastic setting, however, $\expec\|z^k-z^{k+1}\|$ probably cannot used as a convergence measure as it is lower bounded by the SGD noise.

% First, we performed extensive numerical experiments using the additional freedom of varying the degree of optimism, and obtained negative results.
% In particular, we considered the bilinear setup where
% \[
% L(x_1,x_2)=(x_2,-x_1)
% \]
% and the stochastic gradients have unit Gaussian noise.
% Our ability to vary the level of optimism is an important part of the experiments.
% j

% Second, none of the many prior works accomplished this.
% Despite the papers \cite{daskalakis2018training,NIPS2018_8136,daskalakis_et_al:LIPIcs:2018:10120,mertikopoulos2019optimistic,gidel2019a,liang2019,peng2019} specifically citing GANs as their primary motivation for studying optimism, they do not analyze convergence of the last iterate using stochastic gradients.

\section{Simultaneous GD with anchoring}
\label{s:anchoring}
Consider setup of Section~\ref{s:optimism}.
We propose \emph{Anchored Simultaneous Gradient Descent}
\[
%z_{k+1}=z_k-\frac{p}{k^{1-p}}G(z_k)+\frac{\gamma p}{k}(z_0-z_k)
z_{k+1}=z_k-\frac{1-p}{(k+1)^{p}}G(z_k)+\frac{(1-p)\gamma}{k+1}(z_0-z_k)
\tag{SimGD-A}
\]
for $k\ge 0$, where $z_0\in \reals^{m+n}$ is a starting point, $p\in (1/2,1)$, and $\gamma>0$ is the \emph{anchor rate}.
In this section, we provide a continuous-time illustration of SimGD-A and provide convergence for both the deterministic and stochastic setups.

\subsection{Continuous-time illustration}
\label{s:anchoring-cont}
Consider the continuous-time dynamics
\[
\dot{z}(t)=-g(t)+\frac{\gamma }{t}(z_0-z(t)), \qquad g(t)=G(z(t)).
\]
for $t\ge 0$, where $\gamma\ge 1$ and $z(0)=z_0$.
We will derive the convergence rate
\[
\|g(t)\|^2\le \mathcal{O}(1/t^2).
\]
Discretizing the continuous-time ODE with diminishing steps $(1-p)/(k+1)^{p}$ leads to SimGD-A.

\paragraph{Rate of convergence.}
First note
\begin{align*}
0
%&\le\frac{1}{h^2}\langle z(t+h)-z(t),G(z(t+h))-G(z(t))\rangle\\
\le\frac{1}{h^2}\langle z(t+h)-z(t),g(t+h)-g(t)&\rangle\rightarrow 
\langle \dot{z}(t),\dot{g}(t)\rangle\qquad\\
&\text{ as }h\rightarrow 0.
\end{align*}
Using this, we have
\begin{align*}
%\frac{d}{dt}\frac{1}{2}\left\|g(t)-\frac{1}{t}(z_0-z(t))\right\|^2
\frac{d}{dt}\frac{1}{2}\left\|\dot{z}(t)\right\|^2
%&=\left<g(t)-\frac{1}{t}(z_0-z(t)),\dot{g}(t)+\frac{1}{t}\dot{z}(t)+\frac{1}{t^2}(z_0-z(t))\right>\\
&=
-\left<
\dot{z}(t),\dot{g}(t)+\frac{\gamma }{t}\dot{z}(t)+\frac{\gamma }{t^2}(z_0-z(t))\right>
\\
&=
-\langle \dot{z}(t),\dot{g}(t)\rangle
-\frac{\gamma }{t}\|\dot{z}(t)\|^2+\frac{\gamma }{t^2}\langle z(t)-z_0,\dot{z}\rangle\\
&\le
-\frac{\gamma }{t}\|\dot{z}(t)\|^2+\frac{\gamma }{t^2}\langle z(t)-z_0,\dot{z}\rangle.
\end{align*}
Using $\gamma\ge 1$, we have
\[
\frac{d}{dt}\frac{1}{2}\left\|\dot{z}(t)\right\|^2
+\frac{1 }{t}\|\dot{z}(t)\|^2
\le
\frac{\gamma }{t^2}\langle z(t)-z_0,\dot{z}\rangle.
\]
Multiplying by $t^2$ and integrating both sides gives us
% \int^t_0s^2\frac{d}{ds}\frac{1}{2}\left\|\dot{z}(s)\right\|^2+s\|\dot{z}(s)\|^2\;ds
% \le 
% \int^t_0\langle z(s)-z_0,\dot{z}\rangle\;ds\\
\[
\frac{t^2}{2}\left\|\dot{z}(t)\right\|^2
\le 
\frac{\gamma}{2}\|z(t)-z_0\|^2.
\]
Reorganizing, we get
\begin{align*}
\frac{t^2}{2}\|g(t)\|^2-\gamma t\langle g(t),z_0-z(t)\rangle+\frac{\gamma^2}{2}&\|z(t)-z_0\|^2\\
&\le \frac{\gamma}{2}\|z(t)-z_0\|^2
\end{align*}
Using $\gamma\ge 1$, the monotonicity inequality, and Young's inequality, we get
\begin{align*}
\|g(t)\|^2&\le \frac{2\gamma}{t}\langle g(t),z_0-z(t)\rangle
\le \frac{2\gamma}{t}\langle g(t),z_0-z_\star\rangle\\
&\le
\frac{1}{2}\|g(t)\|^2+\frac{2\gamma^2}{t^2}\|z_0-z_\star\|^2
\end{align*}
and conclude
\[
\|g(t)\|^2\le \frac{4\gamma^2}{t^2}\|z_0-z_\star\|^2.
\]

Interestingly, anchoring leads to a faster rate $\mathcal{O}(1/t^2)$ compared to the rate $\mathcal{O}(1/t)$ of optimism in continuous time.
The discretized method, however, is not faster than $\mathcal{O}(1/k)$.
We further discuss this difference later.

\paragraph{Another interpretation: Nonpositive ``Lyapunov'' function.}
We reorganize the ODE analysis to resemble the Su--Boyd--Candes Lyapunov analysis of Nesterov acceleration \cite{su2014}.
Define
\[
V(t)=t^2\|g(t)\|^2+2\gamma t\langle g(t),z(t)-z_0\rangle
+\gamma(\gamma-1)\|z(t)-z_0\|^2
.
\]
Taking the derivative and plugging in \\
$\dot{z}(t)=-g(t)+\frac{\gamma }{t}(z_0-z(t))$, we get
\begin{align*}
\dot{V}(t)
&=
-2t^2 \langle \dot{g}(t),\dot{z}(t)\rangle
-2t (\gamma-1) \|\dot{z}(t)\|^2\\
&\le
-2t (\gamma-1) \|\dot{z}(t)\|^2\le 0,
\end{align*}
where the equality of the first line can be verified through straightforward, albeit somewhat tedious, calculations and the inequality of the second line follows from monotonicity as discussed in the main analysis.
Note that $V(0)=0$. Since $V(t)$ is monotonically nonincreasing, $V(t)$ is a nonincreasing \textbf{nonpositive} ``Lyapunov'' function.

% \[
% \dot{V}(t)=2t\|g(t)\|^2
% +t\langle \dot{g}(t),tg(t)+2\gamma(z(t)-z_0)\rangle
% +2\gamma \langle g(t),z(t)-z_0+t\dot{z}(t)\rangle.
% \]
% Plugging in 
% \[
% \dot{z}(t)=-g(t)+\frac{\gamma }{t}(z_0-z(t)),
% \]
% we get
% \begin{align*}
% \dot{V}(t)
% &=-2t^2\langle \dot{g}(t),\dot{z}(t)\rangle
% \end{align*}

Finally, we translate the bound $V(t)\le 0$ into a bound on the size of the gradient:
\begin{align*}
0&\ge V(0)\ge V(t)\\ 
&\ge 
t^2\|g(t)\|^2+2\gamma t\langle g(t),z(t)-z_0\rangle\\
&\ge 
t^2\|g(t)\|^2+2\gamma t\langle g(t),z_\star-z_0\rangle\\
&\ge
t^2\|g(t)\|^2
-
\frac{t^2}{2}\|g(t)\|^2
-\frac{\gamma^2}{2}\|z_\star-z_0\|^2\\
&\ge
\frac{t^2}{2}\|g(t)\|^2
-\frac{\gamma^2}{2}\|z_\star-z_0\|^2,
\end{align*}
%where the third line follows from the monotonicity inequality $\langle g(t),z(t)-z_\star\rangle \ge 0$ and the fourth line follows from Young's inequality.
where the third and fourth lines follow from the monotonicity and Young's inequality.
Reorganizing, we obtain
\[
\frac{4\gamma^2}{t^2}\|z_\star-z_0\|^2
\ge
\|g(t)\|^2.
\]

$V(t)$ is a somewhat unusual ``Lyapunov'' function.
Often, Lyapunov functions are nonnegative and are interpreted as the ``energy'' of the system.
In our analysis, $V(t)$ is nonpositive and it is unclear if an insightful physical analogy can be drawn.

\paragraph{Related prior work.}
Anchoring was inspired by Halpern's method \cite{halpern1967,Wittmann1992,lieder_halpern} and James--Stein estimator \cite{1956_Stein,james1961};
these methods pull/shrink the iterates/estimator towards a specified point $z_0$.

\subsection{Discrete-time convergenece analysis}
We now present convergence results with anchoring.
In Theorem~\ref{thm:anchoring_conv}, we use deterministic gradients, and in Theorem~\ref{thm:stochastic_anchoring}, we use stochastic \textbf{sub}gradients.
\begin{theorem}
\label{thm:anchoring_conv}
Assume \eqref{assump:convex-concavity} and \eqref{tag:l-diff}.
If $p\in (1/2,1)$ and $\gamma\ge 2$, then SimGD-A converges in the sense of
\[
\|G(z_k)\|^2\le \mathcal{O}\left(\frac{1}{k^{2-2p}}\right).
\]
%Furthermore, $z_k\rightarrow z_\star$, where $z_\star$ is a saddle point.
\end{theorem}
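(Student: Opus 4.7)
The plan is to discretize the nonpositive Lyapunov analysis of Section~\ref{s:anchoring-cont}. Set $t_k=(k+1)^{1-p}$, so that the SimGD-A update is essentially $z_{k+1}-z_k \approx -(t_{k+1}-t_k)G(z_k)+(\gamma(t_{k+1}-t_k)/t_k)(z_0-z_k)$ up to a $1+o(1)$ factor in the anchoring weight (since $(k+1)^{1-p}-k^{1-p}\sim (1-p)/(k+1)^p = \alpha_k$ and $(1-p)\gamma/(k+1)\sim \gamma\alpha_k/t_k$). Define the discrete Lyapunov function
\[
V_k = t_k^2\|G(z_k)\|^2+2\gamma t_k\langle G(z_k),z_k-z_0\rangle+\gamma(\gamma-1)\|z_k-z_0\|^2,
\]
the natural analog of $V(t)$ from the continuous-time analysis; note that $V_0=\|G(z_0)\|^2$ is bounded.

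The first step is to expand $V_{k+1}-V_k$ and separate it into a leading nonpositive part and a remainder. The leading part consists of (a) a monotonicity term $-2t_k^2\langle G(z_{k+1})-G(z_k),z_{k+1}-z_k\rangle\le 0$, coming from \eqref{eqref:monotone_ineq} and mirroring $-2t^2\langle \dot g,\dot z\rangle$ in continuous time; and (b) a friction contribution of order $-(\gamma-1)t_k\|z_{k+1}-z_k\|^2/\alpha_k$, nonpositive when $\gamma\ge 1$, mirroring $-2t(\gamma-1)\|\dot z\|^2$. The remainder collects second-order terms from expanding $t_k^2\|G(z_k)\|^2$ and the bilinear anchoring cross term in the binomial expansion. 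The Lipschitz hypothesis \eqref{tag:l-diff} gives $\|G(z_{k+1})-G(z_k)\|\le R\|z_{k+1}-z_k\|$, so the dominant remainder is of the form $c R^2\|z_{k+1}-z_k\|^2$ times a polynomial in $t_k$; it is absorbed into the friction reserve exactly when the excess margin $\gamma-1$ exceeds the universal constant coming from the expansion, which is the role of the hypothesis $\gamma\ge 2$. The subdominant remainders are of order $\alpha_k^2\|G(z_k)\|^2$, and the condition $p>1/2$ guarantees $\sum_k\alpha_k^2<\infty$, making their aggregate a summable perturbation.

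Once $V_{k+1}\le V_k+\epsilon_k$ with $\sum_k\epsilon_k<\infty$ is established, telescoping gives the uniform bound $V_k\le V_0+\sum_{j<k}\epsilon_j\le C$. Just as in the continuous case, dropping the $\gamma(\gamma-1)\|z_k-z_0\|^2$ term, using monotonicity $\langle G(z_k),z_k-z_0\rangle\ge \langle G(z_k),z_\star-z_0\rangle$ (since $0\in G(z_\star)$), and then Young's inequality yield
\[
\tfrac{t_k^2}{2}\|G(z_k)\|^2\le V_k+2\gamma^2\|z_0-z_\star\|^2\le C+2\gamma^2\|z_0-z_\star\|^2,
\]
from which $\|G(z_k)\|^2=\mathcal{O}(1/t_k^2)=\mathcal{O}(1/k^{2-2p})$ follows.

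The main obstacle is the bookkeeping of the discretization errors: one must verify that the friction reserve proportional to $\gamma-1$ dominates the Lipschitz-controlled $R^2$ errors with a margin large enough to reach the threshold $\gamma\ge 2$ (contrast with the continuous condition $\gamma\ge 1$, which requires no Lipschitz slack), and simultaneously track the error coming from the $1+o(1)$ mismatch between the anchoring coefficient $(1-p)\gamma/(k+1)$ and its ``clean'' discretization $\gamma\alpha_k/t_k$. No single step is individually deep; the challenge is managing all low-order corrections simultaneously without loosening the final $\mathcal{O}(1/k^{2-2p})$ rate.
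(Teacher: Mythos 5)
Your high-level plan — discretize the nonpositive Lyapunov analysis from Section~\ref{s:anchoring-cont} — is the same idea the paper uses, though the paper packages it differently: rather than defining $V_k$ directly and showing $V_{k+1}\le V_k+\epsilon_k$, the paper establishes a descent-type inequality for $\tfrac12\|z_{k+1}-z_k\|^2$ (the analog of $\tfrac12\|\dot z\|^2$), then multiplies by $k(k+1)$ and telescopes via Lemma~\ref{lem:sum1}, which serves as the discrete analog of $\int_0^t \tfrac{s^2}{2}\dot V+sV\,ds=\tfrac{t^2V}{2}$. Your formulation is a legitimate alternative presentation of the same argument, so the fact that you discretize the ``Lyapunov'' view rather than the ``inner inequality + weighted sum'' view is not by itself a gap.

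There are, however, two concrete problems. First, your explanation of the role of $\gamma\ge 2$ is wrong. You claim it is needed so that the friction reserve proportional to $\gamma-1$ absorbs the Lipschitz-controlled $R^2\|z_{k+1}-z_k\|^2$ error. But the friction reserve is of order $\tfrac{t_k}{\alpha_k}\|z_{k+1}-z_k\|^2\sim (k+1)\|z_{k+1}-z_k\|^2$, while the Lipschitz remainder is of order $t_k^2R^2\|z_{k+1}-z_k\|^2=(k+1)^{2-2p}R^2\|z_{k+1}-z_k\|^2$; the ratio is $(k+1)^{1-2p}R^2\to 0$ for $p>1/2$, so any $\gamma>1$ suffices for this absorption once $k$ is large enough, regardless of $R$. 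In the paper's proof, $\gamma\ge 2$ enters at a completely different step: after telescoping one reaches a coefficient $\tfrac{\gamma(1-p)^2}{2}\bigl(1-\tfrac{\gamma k}{k+1}\bigr)$ multiplying $\|z_k-z_0\|^2$, and requiring this to be nonpositive for \emph{all} $k\ge1$ forces $\gamma\ge\sup_{k\ge1}(k+1)/k=2$. This is a purely arithmetic discretization artifact coming from the index shift $k\leftrightarrow k+1$, not a Lipschitz absorption budget. If your Lyapunov descent really only needed $\gamma>1$, that would prove a stronger theorem than claimed, which should be a red flag that a term has been missed in the expansion of $V_{k+1}-V_k$; the analogous $(k+1)/k$ mismatch appears when comparing the anchoring contribution $\beta_k^2\|z_0-z_k\|^2$ (with $\beta_k=\gamma(1-p)/(k+1)$) against the $\gamma(\gamma-1)\|z_k-z_0\|^2$ term.

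Second, you treat the ``subdominant remainders of order $\alpha_k^2\|G(z_k)\|^2$'' as summable via $\sum_k\alpha_k^2<\infty$, but this requires $\|G(z_k)\|$ to be bounded, which is not a priori known. The paper establishes boundedness of $\{z_k\}$ as a separate preliminary step (Lemma~\ref{lem:main1}, using $p>1/2$ and Lemma~\ref{lem:main0}), and then $\|G(z_k)\|\le R\|z_k-z_\star\|+\|G(z_\star)\|$ is bounded by \eqref{tag:l-diff}. Without an analogous preliminary boundedness argument, your claim that the perturbation is summable is circular: you would be using $V_k\le C$ to bound $\|g_k\|$, while needing $\|g_k\|$ bounded to show $V_k\le C$. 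You need to prove iterate boundedness first, by a separate contraction-type estimate on $\|z_k-z_\star\|^2$.
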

The proof can be considered a discretization of the continuous-time analysis.
%The discovery of the discretized proof were aided by the performance estimation problem tools \cite{Taylor2017,OSPEP}.

Consider the setup of Section~\ref{s:sssgd}. We propose \emph{Anchored Simultaneous Stochastic SubGradient Descent}
\[
z_{k+1}=z_k-\frac{1-p}{(k+1)^{p}}g(z_k;\omega_k)+\frac{(1-p)\gamma }{(k+1)^{1-\varepsilon}}(z_0-z_k)
\tag{SSSGD-A}
\]
(The small $\varepsilon>0$ is introduced for the proof of Theorem~\ref{thm:stochastic_anchoring}.)

%As the experiments of Figure~\ref{fig:conv} indicate, anchoring is compatible with stochastic subgradients.

\begin{theorem}
\label{thm:stochastic_anchoring}
Assume \eqref{assump:convex-concavity} and \eqref{assump:second-moment}.
If $p\in (1/2,1)$, $\varepsilon\in(0,1/2)$, and $\gamma> 0$, then 
SSSGD-A converges in the sense of $z_k\stackrel{L^2}{\rightarrow} z_\star$, where $z_\star$ is a saddle point.
%\stackrel{\textrm{a.s.}}{\rightarrow}
% \[
% z_\star=\argmin_{G(z)=0}\|z-z_0\|,
% \]
% i.e., $z_\star$ is the saddle point closest to $z_0$.
\end{theorem}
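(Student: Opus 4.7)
The plan is to treat SSSGD-A as a stochastic Halpern-type iteration and to track the second moment $a_k=\expec\|z_k-z_\star\|^2$ directly, since in the subgradient regime there is no analogue of the gradient-norm Lyapunov that drove the continuous-time analysis of Section~\ref{s:anchoring-cont}. The target saddle point will be taken as $z_\star=\Pi_{\zer G}(z_0)$, the projection of the anchor onto the solution set, which is the natural Halpern limit. Writing $\alpha_k=(1-p)/(k+1)^p$, $\mu_k=(1-p)\gamma/(k+1)^{1-\varepsilon}$, $w_k=z_k-z_\star$, and $\bar g(z_k)=\expec[g(z_k;\omega_k)\mid\mathcal{F}_k]\in G(z_k)$, the hypotheses $p\in(1/2,1)$ and $\varepsilon\in(0,1/2)$ yield $\sum_k\mu_k=\infty$, $\sum_k\alpha_k^2<\infty$, and crucially $\alpha_k^2/\mu_k=O((k+1)^{-(2p-1+\varepsilon)})\to 0$; this last relation is the quantitative role of $\varepsilon$ and ensures the stochastic variance is asymptotically dominated by the anchor.

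The next step is the one-step recursion. Squaring $w_{k+1}=(1-\mu_k)w_k+\mu_k(z_0-z_\star)-\alpha_k g(z_k;\omega_k)$, conditioning on $\mathcal{F}_k$, expanding via $2\langle z_k-z_\star,z_0-z_\star\rangle=\|z_k-z_\star\|^2+\|z_0-z_\star\|^2-\|z_k-z_0\|^2$, dropping the nonpositive $-2\alpha_k(1-\mu_k)\langle w_k,\bar g(z_k)\rangle$ by monotonicity \eqref{eqref:monotone_ineq}, and bounding $\expec\|g(z_k;\omega_k)\|^2\le 2R_1^2\|w_k\|^2+C$ (obtained by splitting against $g(z_\star;\omega)$ in \eqref{assump:second-moment}) delivers
\begin{align*}
a_{k+1}\le(1-\mu_k+2R_1^2\alpha_k^2)\,a_k+\mu_k\Delta_k+C\alpha_k^2,
\end{align*}
with $\Delta_k=\|z_0-z_\star\|^2-(1-\mu_k)\expec\|z_k-z_0\|^2-2\alpha_k\expec\langle z_0-z_\star,\bar g(z_k)\rangle$. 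A quasi-Fej\'er induction using $\mu_k\to 0$, $\alpha_k^2/\mu_k\to 0$, and $\sum_k\alpha_k^2<\infty$ then gives $\sup_k a_k<\infty$, which in turn makes the leading coefficient $1-\mu_k+2R_1^2\alpha_k^2$ absorb the $2R_1^2\alpha_k^2 a_k$ slack into a summable perturbation.

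The decisive and hardest step is to upgrade $L^2$ boundedness to $a_k\to 0$ via Xu's lemma, which says that a recursion $a_{k+1}\le(1-\mu_k)a_k+\mu_k\delta_k+\epsilon_k$ with $\sum\mu_k=\infty$, $\sum\epsilon_k<\infty$, and $\limsup\delta_k\le 0$ forces $a_k\to 0$. Verifying $\limsup\Delta_k\le 0$ is the real obstacle and is where the specific choice $z_\star=\Pi_{\zer G}(z_0)$ enters: the gradient piece of $\Delta_k$ vanishes because of its extra $\alpha_k\to 0$ factor together with the boundedness of $\expec\|\bar g(z_k)\|$, and what remains is to show $\expec\|z_k-z_0\|^2\ge\|z_\star-z_0\|^2-o(1)$, equivalently $\expec\,\dist(z_k,\zer G)^2\to 0$. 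This would be attempted by a Maing\'e-style two-case analysis on the monotonic versus non-monotonic behaviour of $(a_k)$, together with a subsequence extraction that retains the $(1-\mu_k)$ monotonicity term in the recursion and invokes demiclosedness of $G$. The principal technical difficulty throughout is the absence of nonexpansiveness of the forward step $I-\alpha_k g(\cdot;\omega_k)$ under mere monotonicity; the schedule relations $\alpha_k^2/\mu_k\to 0$ and $\sum_k\alpha_k^2<\infty$ are precisely what is designed to reconcile the classical Halpern argument with the stochastic perturbation.
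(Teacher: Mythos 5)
Your algebraic setup (the decomposition $w_{k+1}=(1-\mu_k)w_k+\mu_k(z_0-z_\star)-\alpha_k g(z_k;\omega_k)$, the one-step recursion $a_{k+1}\le(1-\mu_k+O(\alpha_k^2))a_k+\mu_k\Delta_k+O(\alpha_k^2)$, the schedule relations $\sum\mu_k=\infty$, $\sum\alpha_k^2<\infty$, $\alpha_k^2/\mu_k\to0$) is sound as far as it goes, and the target $z_\star=P_{\zer G}(z_0)$ is the right limit point. But the decisive step — verifying $\limsup_k\Delta_k\le 0$, i.e.\ $\liminf_k\expec\|z_k-z_0\|^2\ge\|z_0-z_\star\|^2$ — is exactly the conclusion you are trying to reach, and the route you sketch (Maing\'e two-case dichotomy plus demiclosedness of $G$) is not carried out and is very unlikely to close in this setting. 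Demiclosedness-based arguments for Halpern-type schemes require a vanishing residual of the form $\|z_k-Tz_k\|\to 0$ for some firmly nonexpansive or averaged $T$ (e.g.\ a resolvent), and then demiclosedness of $I-T$ delivers that weak cluster points lie in $\mathrm{Fix}(T)$. Here the update is an explicit (forward) stochastic subgradient step: $I-\alpha_k g(\cdot;\omega_k)$ is not nonexpansive under mere monotonicity, $G$ may be unbounded and set-valued, and nothing in your recursion gives $\dist(0,G(z_k))\to 0$ in any mode. So the subsequence-plus-demiclosedness step has no foothold, and without it Xu's lemma does not apply.

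The paper avoids this entirely by changing the comparison point from the fixed $z_\star$ to the \emph{moving} resolvent sequence
\[
\zeta_{k}= \Bigl(I+\tfrac{k^{1-p-\varepsilon}}{\gamma}G\Bigr)^{-1}(z_0),
\]
which converges to $P_{\zer G}(z_0)$ with the quantitative rate $\|\zeta_{k+1}-\zeta_k\|=O(1/k)$ (Lemma~\ref{lem:asymp_resolve}). Because $\zeta_k$ is (approximately) a fixed point of the $k$-th noiseless update, the recursion for $\expec\|z_k-\zeta_k\|^2$ is a genuine contraction with factor $1-(1-p)\gamma/(k+1)^{1-\varepsilon}$ plus a summable perturbation and a drift term of order $O(1/k)\cdot\expec[\|z_k-\zeta_k\|^2]^{1/2}$; Lemma~\ref{lem:custom_summability} then gives $\expec\|z_k-\zeta_k\|^2\to 0$ directly, with no cluster-point extraction, no demiclosedness, and no two-case dichotomy. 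This is also where the hypothesis $\varepsilon>0$ earns its keep: it ensures $1-p-\varepsilon>0$ so the resolvent parameter $k^{1-p-\varepsilon}/\gamma\to\infty$, and it makes the drift $O(1/k)$ negligible relative to the contraction $1/(k+1)^{1-\varepsilon}$. Your proposal names this ``the principal technical difficulty'' but leaves it unresolved; the moving-anchor comparison is precisely the missing idea.
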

(To clarify, we do not assume $L$ is differentiable.)

\paragraph{Further discussion.}
There is a discrepancy in the rate between the continuous time analysis $\mathcal{O}(1/t^2)$ and Thoerem~\ref{thm:anchoring_conv}'s discrete time rate $\mathcal{O}(1/k^{2-2p})$ for $p\in(1/2,1)$, which is slightly slower than $\mathcal{O}(1/k)$.
In discretizing the continuous-time calculations to obtain a discrete proof, errors accumulate and prevent the rate from being better than $\mathcal{O}(1/k)$.
This is not an artifact of the proof. Simple tests on bilinear examples show divergence when $p<1/2$.

The small $\varepsilon>0$ in Theorem~\ref{thm:stochastic_anchoring} is introduced to make the proof work, we believe this is an artifact of the proof.
In particular, we conjecture that Lemma~\ref{lem:asymp_resolve} holds with $o(s/\tau)$ rather than $\mathcal{O}(s/\tau)$, and, if so, it is possible to establish convergence with $\varepsilon=0$.

In Figure~\ref{fig:conv}, it seems that that the choice $\varepsilon=0$ and $p=2/3$ is optimal for SSSGD-A.
While we do not have a theoretical explanation for this, we point out that this is not surprising as $p=2/3$ is known to be optimal in stochastic convex minimization \cite{moulines2011,taylor2019}.

\section{Conclusion}
In this work, we analyzed the convergence of SSSGD, Optimistic simGD, and Anchored SSSGD.
Under the assumption that the cost $L$ is convex-concave, Anchored SSSGD provably converges under the most general setup.
%Commenting out experimental discussion
%Through experiments, we showed that the practical GAN training benefits from optimism and anchoring in some (but not all) setups.
%Commenting out experimental discussion

Theorems~\ref{thm:sssgd-analysis}, \ref{thm:optimism-conv}, \ref{thm:anchoring_conv}, and \ref{thm:stochastic_anchoring} use related but different notions of convergence.
Theorems~\ref{thm:sssgd-analysis} and \ref{thm:stochastic_anchoring} are asymptotic (has no rate) while Theorems~\ref{thm:optimism-conv} and \ref{thm:anchoring_conv} are non-asymptotic (has a rate).
Theorems~\ref{thm:sssgd-analysis} and \ref{thm:anchoring_conv} respectively show almost sure and $L^2$ convergence of the iterates.
Theorems~\ref{thm:optimism-conv} and \ref{thm:anchoring_conv} respectively show convergence of the squared gradient norm for the best and last iterates.
We made these choices based on what we could prove.
Analyzing the methods with different notions of convergence is an interesting direction of future work.

Generalizing the results of this work to accommodate projections and proximal operators, analogous to projected and proximal gradient methods, is an interesting direction of future work.
Weight clipping \cite{bottou2017_wass_gan} and spectral normalization \cite{miyato2018spectral} are instances where projections are used in training GANs.
% A more ambitious goal would be to attempt to incorporate proximal operators, similar to how proximal gradient methods generalizes gradient methods.
% This line of work has been recently started by Csetnek, Malitsky, and Tam \cite{csetnek2019}, although not in the context of GANs or machine learning.

Finally, we remark that Theorems~\ref{thm:optimism-conv}, \ref{thm:anchoring_conv}, and \ref{thm:stochastic_anchoring} extend to the more general setup of monotone operators \cite{ryu2016,BCBook} without any modification to their proofs.
In infinite dimensional setups (which is of interest in the field of monotone operators) Theorem~\ref{thm:stochastic_anchoring} establishes strong convergence, while many convergence results (including Theorems~\ref{thm:optimism-conv} and \ref{thm:anchoring_conv}) establish weak convergence.
However, Theorem~\ref{thm:sssgd-analysis} does not extend to monotone operators, as the use of the LaSalle--Krasnovskii principle is particular to convex-concave saddle functions.

% \acks{
% We thank Yura Malitsky and Matthew Tam for the discussion on the reflection mechanism, which inspired our work of Section~\ref{s:optimism}.
% We thank Adrien Taylor who brought to our attention recent work on the Halpern iteration, which inspired our work of Section~\ref{s:anchoring}.
% We thank Howard Heaton for pointing out the connection of our analysis to the LaSalle--Krasnovskii invariance principle.
% We gratefully acknowledge the support of NVIDIA Corporation with the donation of the Titan Xp GPU used for this research.
% This work was partially supported by AFOSR MURI FA9550-18-10502, NSF DMS-1720237, and ONR N0001417121.}

% \newpage
\subsubsection*{Acknowledgments}

% We thank Yura Malitsky and Matthew Tam for the discussion on the reflection mechanism, which inspired our work of Section~\ref{s:optimism}.
% We thank Adrien Taylor who brought to our attention recent work on the Halpern iteration, which inspired our work of Section~\ref{s:anchoring}.
% We thank Howard Heaton for pointing out the connection of our analysis to the LaSalle--Krasnovskii invariance principle.
% We gratefully acknowledge the support of NVIDIA Corporation with the donation of the Titan Xp GPU used for this research.
% This work was partially supported by AFOSR MURI FA9550-18-10502, NSF DMS-1720237, and ONR N0001417121.

\bibliographystyle{plain}
\bibliography{bibliography}
\newpage

\appendix

\section{Notation and preliminaries}
\label{s:prelim}
Write $\reals_+$ to denote  the set of nonnegative real numbers
and $\langle\cdot,\cdot\rangle$ to denote inner product, i.e., $\langle u,v\rangle=u^Tv$ for $u,v\in \reals^{m+n}$.

We say $A$ is a point-to-set mapping on $\reals^d$ if $A$ maps points of $\reals^d$ to subsets of $\reals^d$.
For notational simplicity, we write 
\[
\langle A(x)-A(y),x-y\rangle=\{\langle u-v,x-y\rangle\,|\,u\in A(x),\,v\in A(y)\}.
\]
%\langle Ax-Ay,x-y\rangle
Using this notation, we define monotonicity of $A$  with
\[
\langle A(x)-A(y),x-y\rangle\ge 0\quad\forall x,y\in \reals^{d},
\]
where the inequality requires every member of the set to be nonnegative.
We say a monotone operator $A$ is maximal if there is no other monotone operator $B$ such that the containment
\[
\{(x,u)\,|\,u\in A(x)\}
\subset
\{(x,u)\,|\,u\in B(x)\}
\]
is proper.
If $L:\reals^m\times\reals^n\rightarrow\reals$ is convex-concave, then the subdifferential operator
\[
G(x,u)=
\begin{bmatrix}
\partial_xL(x,u)\\
\partial_u(-L)(x,u)
\end{bmatrix}
\]
is maximal monotone \cite{rockafellar19703}.
By \cite{BCBook} Proposition 20.36, $G(z)$ is closed-convex for any $z\in \reals^{m+n}$.
By \cite{BCBook} Proposition 20.38(iii), maximal monotone operators are upper semicontinuous in the sense that 
if $G$ is maximal monotone, then $g_k\in G(z_k)$ for $k=0,1,\dots$ and $(z_k,g_k)\rightarrow (z_\infty,g_\infty)$ imply $g_\infty\in G(z_\infty)$. (In other words, the graph of $G$ is closed.)
% By \cite{rockafellar19703}, $G$ is maximal monotone.
Define $\zer(G)=\{z\in \reals^d\,|\,0\in G(z)\}$, which is the set of saddle-points or equilibrium points.
When $G$ is maximal monotone, $\zer(G)$ is a closed convex set.
Write 
\[
P_{\zer(G)}(z_0)=\argmin_{z\in\zer(G)}\|z-z_0\|
\]
for the projection onto $\zer(G)$.

Write $\cC(\reals_+,\reals^d)$ for the space of $\reals^d$-valued continuous functions on $\reals_+$.
For $f_k:\reals_+\rightarrow\reals^{m+n}$, we say $f_k\rightarrow f$ in $\cC(\reals_+,\reals^d)$ if $f_k\rightarrow f$ uniformly on bounded intervals, i.e.,
for all $T<\infty$, we have
\[
\lim_{k\rightarrow\infty}\sup_{t\in [0,T]}\|f_k(t)-f(t)\|=0.
\]
In other words, we consider the topology of uniform convergence on compact sets.

We rely on the following inequalities, which hold for any $a,b\in \reals^{m+n}$ any $\varepsilon>0$.
\begin{gather}
\langle a,b\rangle \le \frac{1}{2\varepsilon} \|a\|^2+\frac{\varepsilon}{2}\|b\|^2
\label{young:ineq}\\
\|a+b\|^2\le 2\|a\|^2+2\|b\|^2.
\label{beta:ineq}
\end{gather}
Both inequalities are called Young's inequality. 
(Note, \eqref{beta:ineq} follows from \eqref{young:ineq} with $\varepsilon=1$.)

\begin{lemma}[Theorem 5.3.33 of \cite{dembo_notes}]
\label{thm:martingale}
Let $\{\mathcal{F}_k\}_{k\in \NN_+}$ be an increasing sequence of $\sigma$-algebras.
Let $(m_k,\mathcal{F}_k)$ be a martingale such that
\[
\expec[\|m_k\|^2]<\infty
\]
for all $k\ge 0$ and
\[
\sum^\infty_{k=0}\expec\left[\|m_{k+1}-m_k\|^2\,|\,\mathcal{F}_k\right]<\infty
\]
then $m_k$ converges almost surely to a limit.
\end{lemma}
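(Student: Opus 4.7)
The plan is to show $z_k\xrightarrow{L^2}z_\star$ with $z_\star:=P_{\zer(G)}(z_0)$, the projection of the anchor point onto the set of saddle points (the natural Halpern limit). Writing $a_k:=(1-p)(k+1)^{-p}$ and $b_k:=(1-p)\gamma(k+1)^{-(1-\varepsilon)}$, SSSGD-A rearranges to
\[
z_{k+1}=(1-b_k)z_k+b_k z_0-a_k g(z_k;\omega_k),
\]
a stochastic Halpern-type convex combination perturbed by a subgradient step. The hypotheses $p\in(1/2,1)$ and $\varepsilon\in(0,1/2)$ give the four summabilities $\sum a_k=\infty$, $\sum a_k^2<\infty$, $\sum b_k=\infty$, $\sum b_k^2<\infty$, all of which will be used. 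Let $\mathcal{F}_k=\sigma(\omega_0,\dots,\omega_{k-1})$, $\bar g_k:=\expec[g(z_k;\omega_k)\mid\mathcal{F}_k]\in G(z_k)$, and $V_k:=\expec\|z_k-z_\star\|^2$.

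First I would establish $L^2$-boundedness, $\sup_k V_k<\infty$. Expanding $\|z_{k+1}-z_\star\|^2$, taking conditional expectation, and using (i) monotonicity with $0\in G(z_\star)$, so that $\langle z_k-z_\star,\bar g_k\rangle\ge 0$, (ii) the quadratic second-moment bound derived from \eqref{assump:second-moment} via \eqref{beta:ineq}, and (iii) Young's inequality \eqref{young:ineq} on the mixed term $-2a_kb_k\langle z_0-z_\star,g(z_k;\omega_k)\rangle$, I obtain a recursion of the Robbins--Siegmund form
\[
V_{k+1}\le (1-b_k)V_k+b_k\|z_0-z_\star\|^2+C(a_k^2+b_k^2)(1+V_k),
\]
whose solution stays bounded because $\sum(a_k^2+b_k^2)<\infty$ (this is the contribution of $p>1/2$ and $\varepsilon<1/2$).

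Next I would reinterpret the explicit stochastic step as a perturbed Halpern--resolvent iteration. Invoking the paper's asymptotic-resolvent estimate (Lemma~\ref{lem:asymp_resolve}), one may write $z_k-a_kg(z_k;\omega_k)=J_{a_k}(z_k)+\xi_k+\eta_k$, where $J_{a_k}:=(I+a_kG)^{-1}$ is the resolvent of $G$ (single-valued, firmly nonexpansive, with $\zer J_{a_k}=\zer G$), $\xi_k$ is a conditionally mean-zero increment whose second moment is controlled by \eqref{assump:second-moment}, and $\eta_k$ is a deterministic remainder of small $L^2$ size. Substituting gives
\[
z_{k+1}=(1-b_k)J_{a_k}(z_k)+b_k z_0+(1-b_k)\xi_k+(1-b_k)\eta_k.
\]
Using firm nonexpansiveness of $J_{a_k}$, the projection identity $\langle z_0-z_\star,y-z_\star\rangle\le 0$ for all $y\in\zer G$, and $\mathcal{F}_k$-conditioning to kill martingale cross-terms, one refines the step-one recursion into
\[
V_{k+1}\le(1-b_k)V_k+b_k\sigma_k+\epsilon_k,\qquad \sum_k\epsilon_k<\infty,
\]
where $\sigma_k$ collects directional residuals of the form $\expec\langle z_0-z_\star,J_{a_k}(z_k)-z_\star\rangle$. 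Showing $\limsup_k\sigma_k\le 0$ and then applying the standard deterministic Xu--Chen--Reich sequence lemma (cf.\ \cite{BCBook}, Lemma~5.31) to $V_k$ yields $V_k\to 0$.

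The main obstacle will be $\limsup_k\sigma_k\le 0$. In deterministic Halpern arguments this follows from demiclosedness of $I-J_{a_k}$ at $0$ together with $z_k-J_{a_k}(z_k)\to 0$; in the stochastic, subgradient setting one must instead take $L^2$-weak cluster points of $\{z_k\}$, pass to the limit inside the maximal monotone $G$ using the upper-semicontinuity of $\mathrm{gr}(G)$ recalled in Appendix~\ref{s:prelim}, and then invoke the projection characterization of $z_\star$. The hypothesis $\varepsilon>0$ enters precisely here: it keeps the resolvent-approximation remainder $\eta_k$ (of order $\mathcal{O}(s/\tau)$ in Lemma~\ref{lem:asymp_resolve}) strictly smaller than the anchor drive $b_k$, so that the Xu-lemma dissipation $(1-b_k)V_k$ absorbs the perturbation; as the authors note, a sharper $o(s/\tau)$ version of the lemma would eliminate the need for $\varepsilon>0$.
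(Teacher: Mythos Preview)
Your proposal does not address the stated lemma at all. The statement in question is Lemma~\ref{thm:martingale}, the classical $L^2$ martingale convergence theorem, which the paper does not prove but simply cites as Theorem~5.3.33 of \cite{dembo_notes}. What you have written is instead a proof sketch for Theorem~\ref{thm:stochastic_anchoring} (the $L^2$ convergence of SSSGD-A). These are unrelated statements: one is an abstract probability lemma about martingales with summable conditional square increments; the other is a convergence theorem for a specific stochastic iteration. As a proof of the martingale lemma, your proposal is simply off-topic.

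If your intention was actually Theorem~\ref{thm:stochastic_anchoring}, your route also diverges from the paper's and carries a real gap. The paper does not rewrite $z_{k+1}$ as a perturbed Halpern--resolvent step $J_{a_k}(z_k)$ applied to the running iterate. Instead it introduces the deterministic auxiliary sequence $\zeta_k=(I+\gamma^{-1}(k+1)^{1-p-\varepsilon}G)^{-1}(z_0)$, i.e.\ the resolvent applied to the fixed anchor $z_0$, uses Lemma~\ref{lem:asymp_resolve} to get $\zeta_k\to P_{\zer(G)}(z_0)$ and $\|\zeta_{k+1}-\zeta_k\|=\mathcal{O}(1/k)$, and then derives a one-step recursion for $\expec\|z_k-\zeta_k\|^2$ that feeds directly into Lemma~\ref{lem:custom_summability}. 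This completely bypasses the demiclosedness\,/\,weak-cluster-point step you identify as the ``main obstacle.'' In your scheme that step is genuinely problematic: passing $L^2$-weak cluster points of a stochastic sequence through the graph of a maximal monotone $G$ is not justified by the upper-semicontinuity recalled in Appendix~\ref{s:prelim} (which is for strong, not weak, convergence of pairs), and you have not produced the needed control on $z_k-J_{a_k}(z_k)$. The paper's $\zeta_k$ device is precisely what avoids this difficulty.
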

\begin{lemma}[\cite{ROBBINS1971233}]
\label{thm:quasi-martingale}
Let $\{\mathcal{F}_k\}_{k\in \NN_+}$ be an increasing sequence of $\sigma$-algebras.
Let $\{V_k\}_{k\in \NN_+}$, $\{S_k\}_{k\in \NN_+}$, $\{U_k\}_{k\in \NN_+}$, and $\{\beta_k\}_{k\in \NN_+}$ be nonnegative $\mathcal{F}_k$-measurable random sequences satisfying
\[
\expec \left[ V_{k+1}\,|\,\mathcal{F}_k\right]\le  (1+\beta_k)V_k - S_k+U_k.
\]
If
\[
\sum^\infty_{k=1}\beta_k<\infty,\qquad
\sum^\infty_{k=1}U_k<\infty
\]
holds almost surely, then
\[
V_k\rightarrow V_\infty,\qquad S_k\rightarrow 0
\]
almost surely, where $V_\infty$ is a random limit.
\end{lemma}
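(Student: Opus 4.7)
The plan is to convert the almost-supermartingale recursion into a genuine supermartingale by (a) rescaling to absorb the multiplicative $(1+\beta_k)$ factor and (b) subtracting a telescoping correction to absorb the additive perturbation $U_k$, and then invoking Doob's supermartingale convergence theorem. A localization step will be needed, because the rescaled, corrected process is bounded below only by an a.s.\ finite random quantity rather than by a deterministic constant.

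Concretely, set $\alpha_k=\prod_{j=0}^{k-1}(1+\beta_j)^{-1}$ with $\alpha_0=1$. Since each $\beta_j$ is $\mathcal{F}_j$-measurable, $\alpha_k$ is $\mathcal{F}_{k-1}$-measurable; since $\log(1+\beta_j)\le \beta_j$ and $\sum_j\beta_j<\infty$ almost surely, $\alpha_k$ decreases to a strictly positive limit $\alpha_\infty\in(0,1]$ almost surely. Define
\[
W_k=\alpha_kV_k+\sum_{j=0}^{k-1}\alpha_{j+1}S_j,\qquad \tilde W_k=W_k-\sum_{j=0}^{k-1}\alpha_{j+1}U_j.
\]
The key identity $\alpha_{k+1}(1+\beta_k)=\alpha_k$, combined with the hypothesis on $\expec[V_{k+1}\,|\,\mathcal{F}_k]$, yields (after a short direct computation) $\expec[\tilde W_{k+1}\,|\,\mathcal{F}_k]\le \tilde W_k$, so $\tilde W_k$ is a supermartingale.

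Next I localize: for each $N\in\NN$ let $\tau_N=\inf\{k:\sum_{j=0}^{k-1}U_j>N\}$. On $\{k\le\tau_N\}$ one has $\tilde W_{k\wedge\tau_N}\ge -N$, so $\tilde W_{k\wedge\tau_N}+N$ is a nonnegative supermartingale and therefore converges a.s.\ to a finite limit by Doob's theorem. The hypothesis $\sum U_j<\infty$ a.s.\ gives $\{\sum U_j<\infty\}=\bigcup_N\{\tau_N=\infty\}$, an event of probability one, on which the stopped process equals $\tilde W_k$; taking a union over $N$ shows $\tilde W_k$ converges a.s.\ to a finite random limit. Since $\sum_j\alpha_{j+1}U_j\le\sum_j U_j<\infty$ a.s., the process $W_k$ also converges to some finite $W_\infty$ a.s.

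Finally, write $W_k=\alpha_kV_k+T_k$ with $T_k=\sum_{j=0}^{k-1}\alpha_{j+1}S_j$ nondecreasing and nonnegative. If $T_k\to\infty$, then $\alpha_kV_k=W_k-T_k\to-\infty$, contradicting $\alpha_kV_k\ge 0$; hence $T_k\to T_\infty$ with $T_\infty<\infty$ a.s. Since $\alpha_{j+1}\ge \alpha_\infty>0$ a.s., this forces $\sum_jS_j<\infty$ and in particular $S_k\to 0$ almost surely. Dividing $\alpha_kV_k=W_k-T_k$ by $\alpha_k\to\alpha_\infty>0$ yields $V_k\to V_\infty:=(W_\infty-T_\infty)/\alpha_\infty$ almost surely. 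The main obstacle I expect is the localization step: $\tilde W_k$ is only a.s.\ bounded below rather than bounded below by a constant, so the classical nonnegative-supermartingale convergence theorem must be applied on the stopped processes and then patched back together. The rest is mechanical, but it crucially needs $\alpha_\infty>0$ a.s., which is exactly where the hypothesis $\sum\beta_k<\infty$ (as opposed to merely $\beta_k\to 0$) is used.
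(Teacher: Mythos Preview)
The paper does not prove this lemma; it is simply quoted from Robbins--Siegmund (reference \cite{ROBBINS1971233}) as a preliminary tool. Your proposal is essentially the classical proof of the Robbins--Siegmund theorem: rescale by $\alpha_k=\prod_{j<k}(1+\beta_j)^{-1}$ to kill the multiplicative perturbation, subtract the partial sums of $\alpha_{j+1}U_j$ to kill the additive perturbation, localize to obtain a genuinely bounded-below supermartingale, and then read off the conclusions from Doob's theorem. This is correct in outline and matches the original source.

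One small technical slip: with your definition $\tau_N=\inf\{k:\sum_{j=0}^{k-1}U_j>N\}$, the bound $\tilde W_{k\wedge\tau_N}\ge -N$ can fail precisely at $k=\tau_N$, since by definition $\sum_{j=0}^{\tau_N-1}U_j>N$ there. The standard fix is either to use $\tau_N=\inf\{k:\sum_{j=0}^{k}U_j>N\}$ (so that $k\le\tau_N$ implies $\sum_{j\le k-1}U_j\le N$), or to also truncate $\beta_k$ in the stopping time and appeal to the fact that on $\{\tau_N=\infty\}$ the stopped and unstopped processes coincide. Either way the patching-over-$N$ argument you describe goes through unchanged. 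Otherwise the argument is sound, and in fact you obtain the slightly stronger conclusion $\sum_k S_k<\infty$ a.s., of which the paper's stated $S_k\to 0$ is an immediate consequence.
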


Define 
\[
\tilde{G}(z)=\expec_\omega g(z;\omega)\in G(z).
\]
Note that $0\ne \tilde{G}(z_\star)$ is possible even if $0\in G(z_\star)$ when $L$ is not continuously differentiable.
\begin{lemma}
\label{lem:derived_g_bound}
Under Assumptions~\eqref{assump:convex-concavity} and \eqref{assump:second-moment}, we have
\[
\expec_\omega \|g(z;\omega)\|^2 \le R_3^2\|z-z_\star\|^2+R_4^2
\]
for some $R_3>0$ and $R_4>0$.
\end{lemma}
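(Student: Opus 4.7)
The plan is to reduce $\expec_\omega \|g(z;\omega)\|^2$ to a quantity that assumption \eqref{assump:second-moment} directly controls, by centering around the value of the oracle at the saddle point $z_\star$. First, introduce an independent copy $\omega'$ of $\omega$. Writing
\[
g(z;\omega)=\bigl(g(z;\omega)-g(z_\star;\omega')\bigr)+g(z_\star;\omega'),
\]
applying \eqref{beta:ineq}, and taking expectation in both $\omega$ and $\omega'$ yields
\[
\expec_\omega\|g(z;\omega)\|^2\le 2\expec_{\omega,\omega'}\|g(z;\omega)-g(z_\star;\omega')\|^2+2\expec_{\omega'}\|g(z_\star;\omega')\|^2.
\]
The first term is bounded by $2R_1^2\|z-z_\star\|^2+2R_2^2$ directly by \eqref{assump:second-moment}. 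What remains is to show that $\expec\|g(z_\star;\omega)\|^2$ is a finite constant depending only on $z_\star$.

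To handle the second term, I apply \eqref{assump:second-moment} with $z_1=z_2=z_\star$ and two IID copies $\omega_1,\omega_2$, which gives
\[
\expec\|g(z_\star;\omega_1)-g(z_\star;\omega_2)\|^2\le R_2^2.
\]
Expanding the left-hand side using the fact that $\omega_1,\omega_2$ are IID with $\expec g(z_\star;\omega_i)=\tilde G(z_\star)$, the cross term simplifies and one obtains
\[
2\expec\|g(z_\star;\omega)\|^2-2\|\tilde G(z_\star)\|^2\le R_2^2,
\]
so $\expec\|g(z_\star;\omega)\|^2\le R_2^2/2+\|\tilde G(z_\star)\|^2$. Note that even though $z_\star$ is a saddle point (so $0\in G(z_\star)$), the vector $\tilde G(z_\star)=\expec_\omega g(z_\star;\omega)$ need not vanish when $L$ is nonsmooth, but it is a fixed element of $G(z_\star)$ and hence a finite constant.

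Combining the two pieces gives
\[
\expec_\omega\|g(z;\omega)\|^2\le 2R_1^2\|z-z_\star\|^2+3R_2^2+2\|\tilde G(z_\star)\|^2,
\]
so the conclusion holds with $R_3^2=2R_1^2$ and $R_4^2=3R_2^2+2\|\tilde G(z_\star)\|^2$. There is no real obstacle here; the only point requiring care is recognizing that $\tilde G(z_\star)$ is well-defined and finite under \eqref{assump:convex-concavity} and \eqref{assump:second-moment} even without differentiability of $L$, which is exactly why one needs to invoke \eqref{assump:second-moment} a second time at $z_1=z_2=z_\star$ rather than simply assuming the noise has zero mean at the saddle.
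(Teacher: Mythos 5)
Your proof is correct and follows essentially the same route as the paper: both center $g(z;\omega)$ around an independent copy $g(z_\star;\omega')$, invoke \eqref{beta:ineq} and \eqref{assump:second-moment}, and control the residual constant via $\tilde G(z_\star)$. The only difference is bookkeeping: the paper uses the exact identity $\expec_\omega\|g(z;\omega)\|^2+\expec_{\omega'}\|g(z_\star;\omega')-\tilde G(z_\star)\|^2=\expec_{\omega,\omega'}\|g(z;\omega)-g(z_\star;\omega')+\tilde G(z_\star)\|^2$ (cross term vanishes because $g(z_\star;\omega')-\tilde G(z_\star)$ is zero-mean and independent of $\omega$) and then applies Young's once, yielding $R_4^2=2R_2^2+2\|\tilde G(z_\star)\|^2$, whereas you apply Young's first and then invoke \eqref{assump:second-moment} a second time at $z_1=z_2=z_\star$ to bound $\expec\|g(z_\star;\omega)\|^2$, yielding the slightly larger $R_4^2=3R_2^2+2\|\tilde G(z_\star)\|^2$ — immaterial since the lemma only asserts existence of such constants.
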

\begin{proof}
Let $z_\star$ be a saddle point, which exists by Assumption~\eqref{assump:convex-concavity}.
Let $\omega$ and $\omega'$ be independent and identically distributed. Then
\begin{align*}
\expec_\omega \|g(z;\omega)\|^2
&\le
\expec_{\omega} \|g(z;\omega)\|^2
+\expec_{\omega'}\|g(z_\star;\omega')-\tilde{G}(z_\star)\|^2
\\
&=
\expec_{\omega,\omega'} \|g(z;\omega)-g(z_\star;\omega')+\tilde{G}(z_\star)\|^2\\
&\le
\expec_{\omega,\omega'} 2\|g(z;\omega)-g(z_\star;\omega')\|^2+2\|\tilde{G}(z_\star)\|^2\\
&\le 
2R_1^2\|z-z_\star\|^2+2R_2^2+2\|\tilde{G}(z_\star)\|^2
\end{align*}
where we use the fact that $g(z_\star;\omega')-\tilde{G}(z_\star)$ is a zero-mean random variable,
Assumption~\eqref{assump:second-moment}, and \eqref{beta:ineq}.
The stated result holds with $R_3^2=2R_1^2$ and $R_4^2=2R_2^2+2\|\tilde{G}(z_\star)\|^2$.
\end{proof}

\section{Analysis of Theorem~\ref{thm:sssgd-analysis}}
For convenience, we restate the update, assumptions, and the theorem:
\begin{align*}
z_{k+1}&=z_k-\alpha_kg(z_k;\omega_k)
\tag{SSSGD}
\end{align*}
\begin{gather}
\text{$L$ is convex-concave and has a saddle point}\tag{A0}\\
\sum^\infty_{k=0}\alpha_k=\infty,\qquad\sum^\infty_{k=0}\alpha_k^2<\infty\tag{A1}\\
\expec_{\omega_1,\omega_2} \|g(z_1;\omega_1)-g(z_2;\omega_2)\|^2\le R_1^2\|z_1-z_2\|^2+R_2^2
\quad\forall\,z_1,z_2 \in \reals^{m+n},\tag{A2}
\end{gather}

\paragraph{Theorem~\ref{thm:sssgd-analysis}.}
\emph{
Assume \eqref{assump:convex-concavity}, \eqref{assump:summable}, and \eqref{assump:second-moment}.
Furthermore, assume $L(x,u)$ is strictly convex in $x$ for all $u$.
Then SSSGD converges in the sense of
$z_k\stackrel{\textrm{a.s.}}{\rightarrow} z_\star$
where $z_\star$ is a saddle point of $L$.}

\paragraph{Differential inclusion technique.}
We use the differential inclusion technique of \cite{duchi2018}, also recently used in \cite{Davis2019}.
The high-level summary of the technique is very simple and elegant:
(i) show the discrete-time process converges to a continuous-time trajectory satisfying a differential inclusion, 
(ii) show any solution of the differential inclusion has a desirable property, and 
(iii) translate the conclusion in continuous-time to discrete-time.
However, the actual execution of this technique does require careful and technical considerations.

\paragraph{Proof outline.}
%The outline of the proof is as follows.
For step (i), we adapt the LaSalle--Krasnovskii principle to show that a solution of the continuous-time differential inclusion converges to a saddle point. (Lemma~\ref{lem:LaSalle}.)
Then we carry out step (ii) showing the time-shifted interpolated discrete time process converges to a solution of the differential inclusion. (Lemma~\ref{thm:functional-approximation}.)
Finally, step (iii), the ``Continuous convergence to discrete convergence'', combines these two pieces to conclude that the discrete time process converges to a saddle point.
The contribution and novelty of our proof is in our steps (i) and (iii).

\paragraph{Preliminary definitions and results.}
Consider the \emph{differential inclusion}
\begin{equation}
\dot{z}(t)\in -G(z(t))
\label{eq:diff-in}
\end{equation}
with the initial condition $z(0)=z_0$.
We say $z:[0,\infty)\rightarrow \reals^{m+n}$ satisfies \eqref{eq:diff-in} if there is a Lebesgue integrable $\zeta:[0,\infty)\rightarrow \reals^{m+n}$ such that
\begin{equation}
z(t)=z_0+\int^t_{0}\zeta(s)\;ds,\qquad
\zeta(t)\in -G(z(t)),\,\forall \,t\ge 0.
\label{eq:diff-in-2}
\end{equation}

%While it is common and natural to ask whether a given differential equation or differential inclusion has a
%For our analysis, we do not need to address the question of whether \eqref{eq:diff-in} has a unique solution. (By Theorem~5.2.1 \cite{aubin1984}, it does.)
%Existence is not relevant and existence follows from Lemma~\ref{thm:functional-approximation}.
Write $z(t)=\phi_t(z_0)$ and call $\phi_t:\reals^{m+n}\rightarrow \reals^{m+n}$ the \emph{time evolution operator}.
In other words, $\phi_t$ maps the initial condition of the differential inclusion to the point at time $t$, which is well defined by the following result.
\begin{lemma}[Theorem~5.2.1 of \cite{aubin1984}]
\label{lem:existence-uniqueness-continuity}
If $G$ is maximal monotone, the solution to \eqref{eq:diff-in} exists and is unique.
Furthermore, $\phi_t:\reals^{m+n}\rightarrow \reals^{m+n}$ is $1$-Lipschitz continuous for all $t\ge 0$.
% \[
% \|\phi_t(z_0)-\phi_t(z_0')\|\le \|z_0-z_0'\|
% \]
% for all $t\ge 0$, i.e., $\phi_t(\cdot)$ is $1$-Lipschitz continuous for all $t\ge 0$.
\end{lemma}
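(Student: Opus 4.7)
The plan is to split the proof into two parts: establishing uniqueness together with the $1$-Lipschitz property of $\phi_t$, and separately establishing existence. The first part is short and follows almost immediately from monotonicity of $G$. The second part is the hard one, and the natural route is to construct the solution as a limit of solutions to a regularized ODE, using the Yosida approximation $G_\lambda = \lambda^{-1}(I-(I+\lambda G)^{-1})$ already introduced in Section~\ref{s:optimism}.

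For uniqueness and the Lipschitz estimate, suppose $z_1,z_2$ are two solutions of \eqref{eq:diff-in} (with possibly different initial conditions) and let $\zeta_1,\zeta_2$ be the corresponding integrable selections from \eqref{eq:diff-in-2}. Since each $z_i$ is absolutely continuous, $\|z_1(t)-z_2(t)\|^2$ is absolutely continuous and differentiable almost everywhere with
\[
\frac{d}{dt}\frac{1}{2}\|z_1(t)-z_2(t)\|^2 = \langle \zeta_1(t)-\zeta_2(t),\,z_1(t)-z_2(t)\rangle \le 0,
\]
where the inequality applies \eqref{eqref:monotone_ineq} to $-\zeta_i(t)\in G(z_i(t))$. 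Integrating gives $\|z_1(t)-z_2(t)\|\le \|z_1(0)-z_2(0)\|$, which simultaneously yields uniqueness (take equal initial data so the right-hand side is $0$) and $1$-Lipschitz continuity of $\phi_t$.

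For existence, the operator $G_\lambda$ is single-valued, monotone, and $(1/\lambda)$-Lipschitz, so by Picard--Lindel\"of the ODE $\dot{z}_\lambda=-G_\lambda(z_\lambda)$ with $z_\lambda(0)=z_0$ admits a unique global $C^1$ solution. Using that $G_\lambda$ is monotone, standard a priori estimates give $\{z_\lambda\}$ uniformly bounded on each compact interval $[0,T]$. To show $\{z_\lambda\}_{\lambda>0}$ is Cauchy in $\cC([0,T],\reals^{m+n})$ one computes $\frac{d}{dt}\|z_\lambda(t)-z_\mu(t)\|^2$ and uses the resolvent identity $G_\lambda(z)\in G(J_\lambda z)$, where $J_\lambda=(I+\lambda G)^{-1}$, to bound the cross terms by an expression controlled by $(\lambda+\mu)$ times the (uniformly bounded) norm of $G_\lambda(z_\lambda)$. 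The uniform limit $z(t)$, together with a weak-star limit $\zeta(t)$ of $-G_\lambda(z_\lambda(\cdot))$ in $L^2([0,T],\reals^{m+n})$, yields the integral representation \eqref{eq:diff-in-2}.

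The main obstacle is the final step: verifying $\zeta(t)\in -G(z(t))$ for almost every $t$. This is not a soft compactness statement; it requires the closure/upper semicontinuity of the graph of the maximal monotone operator $G$ (as recorded in Appendix~\ref{s:prelim}) combined with the fact that $J_\lambda z_\lambda \to z$ uniformly as $\lambda\to 0$ (since $\|J_\lambda z_\lambda - z_\lambda\|=\lambda\|G_\lambda(z_\lambda)\|\to 0$). This is precisely where \emph{maximality} of $G$ is essential; if $G$ were only monotone, the limit need not be a selection of $-G$. If $G$ were globally Lipschitz, the entire statement would collapse to classical ODE theory and the Yosida detour would be unnecessary. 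Since the result is cited from \cite{aubin1984}, we would refer to that reference for the full technical verification rather than reproduce it here.
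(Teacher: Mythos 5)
The paper does not prove this lemma at all; it cites Theorem~5.2.1 of \cite{aubin1984} as an external result and uses it as a black box in the proof of Lemma~\ref{lem:LaSalle}. So there is no in-paper proof to compare against. That said, your outline is correct and is precisely the canonical argument underlying the cited theorem (it is the Br\'ezis/Crandall--Liggett route that \cite{aubin1984} itself follows): uniqueness and the nonexpansiveness of the flow come in one line from monotonicity, and existence is obtained by solving the regularized ODE $\dot z_\lambda=-G_\lambda(z_\lambda)$, establishing a Cauchy estimate via the resolvent identity and the uniform bound on $\|G_\lambda(z_\lambda(\cdot))\|$, and then passing to the limit using the demiclosedness of the graph of $G$ (which you correctly flag as the step where maximality, not mere monotonicity, is essential). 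You also correctly close the loop by noting that $J_\lambda z_\lambda\to z$ uniformly, which is what lets the weak limit of $-G_\lambda(z_\lambda)$ be identified as a selection of $-G(z)$. Two small remarks: in finite dimensions the weak-star compactness step can be replaced by a pointwise-a.e.\ subsequence argument, and the uniform bound on $\|G_\lambda(z_\lambda(t))\|$ uses that this quantity is nonincreasing in $t$ and that $\|G_\lambda(z_0)\|\le\|G^0(z_0)\|$, which is available here since $G$ has full domain. Given that the paper itself defers to the reference, your decision to sketch and cite rather than reproduce the full Cauchy estimate is a reasonable match for the paper's level of rigor.
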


\subsection{Proof of Theorem~\ref{thm:sssgd-analysis}}
%Note that zer provides a stationary solution. This stationary solution is the unique solution by Lemma~\ref{lem:existence-uniqueness-continuity}.
Lemma~\ref{lem:LaSalle} and its proof can be considered an adaptation of the LaSalle--Krasnovskii invariance principle \cite{krasnovskii1959,lasalle1960} to the setup of differential inclusions.
The standard result applies to differential equations.

\begin{lemma}[LaSalle--Krasnovskii]
\label{lem:LaSalle}
Assume \eqref{assump:convex-concavity}. Assume $L(x,u)$ is strictly convex in $x$ for all $u$.
If $z(\cdot)$ satisfies \eqref{eq:diff-in}, then $z(t)\rightarrow z_\infty$ as $t\rightarrow\infty$ and $z_\infty\in \zer(G)$.
% Then 
% \[
% z_\infty\in 
% \{z\in\mathbb{R}^d\,|\,
% 0\in\langle   G(\phi_t(z)),\phi_t(z)-z_\star\rangle
% ,\,\forall t\ge 0\}.
% \]
%If furthermore, $X^\infty=X^\star$, then SMU a.s.\ converges to a (single) limit.
\end{lemma}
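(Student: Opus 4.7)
The plan is a LaSalle--Krasnovskii-style argument adapted to the differential inclusion. The three pieces are (a) Fejer monotonicity of $\|z(t)-z_\star\|^2$ for every saddle point $z_\star$, (b) an invariance property showing that any cluster point $z_\infty$ of the trajectory produces a semigroup orbit along which the distance to every $z_\star$ is constant, and (c) a rigidity step that uses strict convexity in $x$ to conclude this orbit consists of saddle points. Once $z_\infty\in\zer(G)$ is established, Fejer monotonicity applied with $z_\star:=z_\infty$ upgrades subsequential convergence to full convergence.

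Part (a) is immediate from $\dot z(t)\in -G(z(t))$ combined with monotonicity of $G$ and $0\in G(z_\star)$; it yields $\|z(t)-z_\star\|\downarrow d_\star$ for some $d_\star\ge 0$, hence boundedness of $z(\cdot)$ and existence of a cluster point $z_\infty=\lim_k z(t_k)$. For (b), set $\tilde z(s):=\phi_s(z_\infty)$; the semigroup identity $\phi_s(z(t_k))=z(t_k+s)$ and the $1$-Lipschitz continuity of $\phi_s$ from Lemma~\ref{lem:existence-uniqueness-continuity} give $z(t_k+s)\to\tilde z(s)$, while $\|z(t_k+s)-z_\star\|\to d_\star$. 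Therefore $\|\tilde z(s)-z_\star\|\equiv d_\star$ in $s$, and differentiating in $s$ shows that the monotonicity inequality between $\tilde z(s)$ and $z_\star$ is saturated for a.e. $s$.

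The crux is (c). On the a.e. set where saturation holds, writing $z_\star=(x_\star,u_\star)$, $\tilde z(s)=(\tilde x(s),\tilde u(s))$, and summing the two subgradient inequalities (convexity of $L(\cdot,\tilde u(s))$ and of $-L(\tilde x(s),\cdot)$) bounds the saturated inner product below by the primal--dual gap $\mathcal{L}(s):=L(\tilde x(s),u_\star)-L(x_\star,\tilde u(s))$, which is itself nonnegative by the saddle property. Saturation therefore forces $\mathcal L(s)=0$ a.e., hence everywhere by continuity of $L$ and $\tilde z$. Strict convexity of $L(\cdot,u_\star)$ makes $x_\star$ its unique minimizer, so $L(\tilde x(s),u_\star)=L(x_\star,u_\star)$ pins down $\tilde x(s)=x_\star$; the residual identity $L(x_\star,\tilde u(s))=L(x_\star,u_\star)$ then makes $\tilde u(s)$ a maximizer of $L(x_\star,\cdot)$, i.e., $\tilde z(s)\in\zer(G)$. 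Taking $s=0$ gives $z_\infty\in\zer(G)$, and part (a) applied with $z_\star:=z_\infty$ together with $\|z(t_k)-z_\infty\|\to 0$ delivers $z(t)\to z_\infty$.

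I expect step (c) to be the main obstacle: strict convexity in $x$ does \emph{not} upgrade $G$ to a strictly monotone operator, since saddle points may still be non-unique in the $u$-component, so the equality case of monotonicity cannot be resolved at $\tilde z(s)$ alone. Routing the argument through the mixed evaluations $L(\tilde x(s),u_\star)$ and $L(x_\star,\tilde u(s))$ is what lets the strict convexity hypothesis bite. Minor technical care is also needed to promote the a.e. identities on $t$ and $s$ to pointwise statements via absolute continuity of $z(\cdot)$ and continuity of $L$ on $\reals^{m+n}$.
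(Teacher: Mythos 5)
Your proposal is correct and follows essentially the same three-part structure as the paper's proof: Fejér monotonicity giving a bounded trajectory with a limit distance $\chi$ to each saddle point, the invariance step showing the forward orbit $\phi_s(z_\infty)$ stays at distance $\chi$ so the monotone pairing with $z_\star$ vanishes a.e., and then strict convexity in $x$ pinning down the $x$-component followed by the maximality of the $u$-component. The only cosmetic difference is that you name the primal–dual gap $\mathcal{L}(s)$ explicitly and show it vanishes before pinning down $\tilde x(s)=x_\star$, whereas the paper first uses strict convexity directly to force $\tilde x(s)=x_\star$ (via the strict subgradient inequality in $x$) and then handles the $u$-component; these are the same estimate written in a different order.
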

%Lemma~\ref{lem:LaSalle}
\begin{proof}
Consider any $z_\star\in\zer(G)$, which exists by Assumption~\eqref{assump:convex-concavity}.
Since $z(t)$ is absolutely continuous, so is $\|z(t)-z_\star\|^2$, and we have
\[
\frac{d}{dt}\frac{1}{2}\|z(t)-z_\star\|^2
=
\langle \zeta(t),z(t)-z_\star\rangle\le 0
\]
for almost all $t>0$, where $\zeta(\cdot)$ is as defined in \eqref{eq:diff-in-2} and the inequality follows from \eqref{eqref:monotone_ineq}, monotonicity of $G$.
Therefore, $\|z(t)-z_\star\|^2$ is a nonincreasing function of $t$,  and
\[
\lim_{t\rightarrow\infty}
\|z(t)-z_\star\|= \chi
\]
for some limit $\chi\ge 0$.
Since $z(t)$ is a bounded sequence, it has at least one cluster point.

%We say $z_\infty$ is a \emph{cluster point} of $z(\cdot)$ if there exists time sequence $t_k\rightarrow\infty$ such that $z(t_k)\rightarrow z_\infty$.
Let $t_k\rightarrow\infty$ such that $z(t_k)\rightarrow z_\infty$, i.e., $z_\infty$ is a cluster point of $z(\cdot)$.
Then, $\|z_\infty-z_\star\|^2=\chi$.
Since $\phi_t(\cdot)$  (with fixed $t$) is continuous by Lemma~\ref{lem:existence-uniqueness-continuity}, we have
\[
\lim_{k\rightarrow\infty}
\phi_{s+t_k}(z_0)=
\lim_{k\rightarrow\infty}\phi_{s}(\phi_{t_k}(z(0)))=
\phi_{s}(z_\infty)
\]
for all $s\ge 0$.
This means $\phi_{s}(z_\infty)$ is also a cluster point of $z(\cdot)$ and
\[
\|\phi_{s}(z_\infty)-z_\star\|=\chi
\]
for all $s\ge 0$.
Therefore
\begin{equation}
0=\frac{d}{ds}\|\phi_{s}(z_\infty)-z_\star\|^2\in
-\langle   G(\phi_s(z_\infty)),\phi_s(z_\infty)-z_\star\rangle
\label{eq:deriv-zero}
\end{equation}
for almost all $s\ge 0$.

Write $z_\infty=(x_\infty,u_\infty)$ and let $z_\star=(x_\star,u_\star)\in \zer(G)$.
Write $(\phi^x_s(z_\star),\phi^u_s(z_\star))=(\phi_s(z_\star))$. 
If $\phi^x_s(z_\star)\ne x_\star$
\[
\langle   G(\phi_s(z_\infty)),\phi_s(z_\infty)-z_\star\rangle>0
\]
by strict convexity, and, in light of \eqref{eq:deriv-zero}, we conclude $\phi^x_s(z_\star)= x_\star$ for almost all $s\ge 0$.
Then for almost all $s\ge0$, we have
\begin{align*}
0&\in
\langle   G(\phi_s(z_\infty)),\phi_s(z_\infty)-z_\star\rangle\\
&=
\left<
%\partial_u(-L)(x_\star,\phi_s^u(z_\infty))-\partial_u(-L)(x_\star,\phi_s(z_\star)),
\partial_u(-L)(x_\star,\phi_s^u(z_\infty)),
\phi_s^u(z_\infty)-u_\star
\right>\\
&\ge 
-L(x_\star,\phi_s^u(z_\infty))+L(x_\star,u_\star)\\
&\ge 0,
\end{align*}
where the first inequality follows from concavity of $L(x,u)$ in $u$ and the second inequality follows from the fact that $u_\star$ is a maximizer
when $x_\star$ is fixed.
Therefore, we have equality throughout,
and $L(x_\star,\phi_s^u(z_\infty))=L(x_\star,u_\star)$, i.e., $\phi_s^u(z_\infty)$ also maximizes $L(x_\star,\cdot)$.

Remember that $\phi_s(z_\infty)$ is a continuous function of $s$ for all $s\ge 0$.
Therefore, that $\phi^x_s(z_\infty)=x_\star$ and that $\phi_s^u(z_\infty)$ maximizes $L(x_\star,\cdot)$ for almost all $s\ge 0$ imply that the conditions hold for $s=0$.
In other words, $x_\infty=x^\star$ and $u_\infty$ maximizes $L(x_\star,\cdot)$, and therefore $z_\infty\in \zer G$.

Finally, since $z_\infty$ is a solution, $\|z(t)-z_\infty\|$ converges to a limit as $t\rightarrow\infty$. Since $\|z(t_k)-z_\infty\|\rightarrow 0$, we conclude that
$\|z(t)-z_\infty\|\rightarrow 0$ as $t\rightarrow\infty$.
\end{proof}

%We won't bother trying to relate this lemma to LaSalle's invariance principle, but the connection should be clear to readers already familiar with the result.
% Although the result is a trivial extension of the prior result, we nevertheless provide a proof since  all proofs we are aware of are only done for differential equations, rather than differential inclusions.
% For further information on LaSalle's invariance principle, see \cite[Chapter 8]{wiggins2003}.

The following lemma is the crux of the differential inclusion technique.
It makes precise in what sense the discrete-time process converges to a solution of the continuous-time differential inclusion.

%Lemma~\ref{thm:functional-approximation}.

\begin{lemma}[Theorem 3.7 of \cite{duchi2018}]
\label{thm:functional-approximation}
Consider the update
\[
z_{k+1}=z_k-\alpha_k(\zeta_k+\xi_k),\qquad\zeta_k\in G(z_k).
\]
Define
$t_k=\sum^k_{i=1}\alpha_i$ and
\[
z_\mathrm{interp}(t)=z_k
+\frac{t-t_k}{t_{k+1}-t_k}(z_{k+1}-z_k),\quad
t\in [t_k,t_{k+1}).
\]
Define the time-shifted process
\[
z^\tau_\mathrm{interp}(\cdot)=z_\mathrm{interp}(\tau+\cdot).
\]

Let the following conditions hold:
\begin{itemize}
\item[(i)] The iterates are bounded, i.e., $\sup_k\|z_k\|<\infty$ and $\sup_k\|\zeta_k\|<\infty$.
\item[(ii)] The stepsizes $\alpha_k$ satisfy Assumption~\eqref{assump:summable}.
\item[(iii)] The weighted noise sequence converges:
$\sum^\infty_{k=0}\alpha_k\xi_k=v$ for some $v\in \reals^d$.
\item[(iv)]
For any increasing sequence $n_k$ such that $z_{n_k}\rightarrow z_\infty$, we have
\[
\lim_{n\rightarrow\infty}
\dist\left(
\frac{1}{m}\sum^m_{k=1}\zeta_{n_k}\,,\,G( z_\infty)
\right)=0.
\]
\end{itemize}
Then for any sequence $\{\tau_k\}_{k=1}^\infty\subset \reals_+$, the sequence of functions $\{z^{\tau_k}_\mathrm{interp}(\cdot)\}$ is relatively compact in $\cC(\reals_+,\reals^d)$.
If $\tau_k\rightarrow\infty$, all cluster points of $\{z^{\tau_k}_\mathrm{interp}(\cdot)\}$ satisfy the differential inclusion \eqref{eq:diff-in-2}.
\end{lemma}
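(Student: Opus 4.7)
The plan is to prove the two conclusions in sequence: first the relative compactness of $\{z_{\mathrm{interp}}^{\tau_k}(\cdot)\}$ in $\cC(\reals_+,\reals^d)$ by an Arzelà--Ascoli argument, then the identification of any uniform-on-compacts cluster point with a solution of the differential inclusion by passing the discrete update to the limit and extracting a $G$-selection using hypothesis (iv).

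For relative compactness, uniform pointwise boundedness is immediate from (i), since the piecewise-linear interpolant stays in the convex hull of $\{z_k\}$. For equicontinuity on a compact $[0,T]$, I would fix $0\le s<t\le T$ and write
\[
z_{\mathrm{interp}}^{\tau_k}(t)-z_{\mathrm{interp}}^{\tau_k}(s)=-\sum_{i\in I(\tau_k,s,t)}\alpha_i\zeta_i-(V_{k_2}-V_{k_1})+E_k,
\]
where $I(\tau_k,s,t)$ indexes full grid cells inside $[\tau_k+s,\tau_k+t]$, $V_k=\sum_{j\le k}\alpha_j\xi_j$, and $E_k$ is a boundary remainder of at most two fractional cells, bounded by $2\alpha_{k^\ast}\sup_i(\|\zeta_i\|+\|\xi_i\|)$. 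The drift sum is bounded by $(t-s)\sup_i\|\zeta_i\|$, the noise term is the difference of partial sums of the series convergent by (iii), and the edge term vanishes because $\alpha_k\to 0$ by \eqref{assump:summable}. This yields an equicontinuity modulus of the form $C(t-s)+\varepsilon_k$ with $\varepsilon_k\to 0$; Arzelà--Ascoli on each $[0,T]$ combined with a diagonal argument gives relative compactness in $\cC(\reals_+,\reals^d)$.

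For the differential-inclusion characterization, suppose $\tau_k\to\infty$ and $z_{\mathrm{interp}}^{\tau_{k_j}}\to z$ uniformly on compacts. The same decomposition at $s=0$ gives, in the limit,
\[
z(t)-z(0)=-\lim_{j\to\infty}\int_0^t\bar\zeta_j(r)\,dr,
\]
where $\bar\zeta_j(\cdot)$ is the piecewise-constant selection equal to $\zeta_i$ on the $i$-th shifted cell. The family $\{\bar\zeta_j\}$ is bounded in $L^\infty([0,T])$ by (i), so Banach--Alaoglu yields a subsequence converging weakly in $L^2([0,T],\reals^d)$ to some $\gamma$; testing against $\mathbf{1}_{[0,t]}$ shows $z(t)=z(0)-\int_0^t\gamma(r)\,dr$, so $z$ is absolutely continuous with $\dot z(r)=-\gamma(r)$ almost everywhere.

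The main obstacle is pinning down $\gamma(r)\in G(z(r))$ at almost every $r$, which is exactly where hypothesis (iv) enters. I would argue locally at a Lebesgue point $r$ of $\gamma$: choose a shrinking window $[r-\delta,r+\delta]$; since $z$ is Lipschitz from Step one, $z_{\mathrm{interp}}^{\tau_{k_j}}(s)$ lies within $O(\delta)$ of $z(r)$ on the window for large $j$, so any iterate $z_n$ contributing to the window satisfies $z_n\to z(r)$ along a subsequence. Condition (iv) then forces the Cesàro averages of the corresponding $\zeta_n$ to lie within $o(1)$ of $G(z(r))$. Combining this with Mazur's lemma (to upgrade the weak $L^2$-limit $\gamma$ to strong limits of convex combinations of $\bar\zeta_j$), and with closed-convexness and graph-closedness of the maximal monotone $G$, a careful order-of-limits argument first sending $j\to\infty$ and then $\delta\to 0$ pins down $\gamma(r)\in G(z(r))$. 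This localization step marrying (iv) to upper semicontinuity of $G$ is the delicate part; once it is established the remaining noise and drift bookkeeping is routine.
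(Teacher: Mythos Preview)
The paper does not provide its own proof of this lemma: it is stated as a direct citation of Theorem~3.7 of \cite{duchi2018} and is used as a black box in the proof of Theorem~\ref{thm:sssgd-analysis}. There is therefore no proof in the paper to compare yours against.

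That said, your sketch follows the standard stochastic-approximation route (Bena\"im, Bena\"im--Hofbauer--Sorin, Duchi--Ruan) and is essentially correct. Two small points worth tightening. First, your edge-term bound $2\alpha_{k^\ast}\sup_i(\|\zeta_i\|+\|\xi_i\|)$ is not justified: nothing in the hypotheses makes $\sup_i\|\xi_i\|$ finite. What you actually have for a fractional cell is a term of size at most $\|\alpha_k\zeta_k\|+\|\alpha_k\xi_k\|$, and each piece tends to zero (the first by $\alpha_k\to 0$ and~(i), the second because convergence of $\sum_k\alpha_k\xi_k$ forces $\alpha_k\xi_k\to 0$). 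Second, the modulus $C(t-s)+\varepsilon_k$ with $\varepsilon_k\to 0$ gives equicontinuity only for large $k$; you should note explicitly that the finitely many remaining $z_{\mathrm{interp}}^{\tau_k}$ are piecewise linear, hence individually uniformly continuous, so the family is equicontinuous. With these adjustments the compactness step is clean, and your weak-$L^2$/Mazur/Lebesgue-point argument combined with~(iv) and the closed graph of the maximal monotone $G$ is the standard way to pin down $\gamma(r)\in G(z(r))$.
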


We verify the conditions of Lemma~\ref{thm:functional-approximation} and make the argument that the noisy discrete time process is close to the noiseless continuous time process and the two processes converge to the same limit.

\paragraph{Verifying conditions of Lemma~\ref{thm:functional-approximation}.}\mbox{ }\\
\emph{Condition (i).}
Let $z_\star\in \zer(G)$. Write $\mathcal{F}_k$ for the $\sigma$-field generated by $\omega_0,\dots,\omega_{k-1}$. 
Write $\tilde{G}(z)=\expec g(z;\omega)\in G(z)$.
Then 
\begin{align*}
    \|z_{k+1}-z_\star\|^2 &= \|z_{k}-z_\star\|^2- 2\alpha_k\langle z_k-z_\star,
    g(z_k;\omega_k)\rangle + \alpha_k^2\|g(z_k;\omega_k)\|^2\\
    \expec\left[
    \|z_{k+1}-z_\star\|^2\,|\,\mathcal{F}_k\right]
    &\le  \|z_{k}-z_\star\|^2- 2\alpha_k\langle z_k-z_\star,
    \tilde{G}(z_k)\rangle + \alpha_k^2
    \left(
    R_3^2\|z_k-z_\star\|^2+R_4^2
    \right)
    \\
    &=
    (1+\alpha_k^2R_3^2)\|z_{k}-z_\star\|^2- 
    2\alpha_k\langle z_k-z_\star,\tilde{G}(z_k)\rangle
    + \alpha_k^2R_4^2,
\end{align*}
where we used Assumption~\eqref{assump:second-moment} and Lemma~\ref{lem:derived_g_bound}.
Since $\sum_{k=0}^\infty\alpha_k^2<\infty$ by Assumption~\eqref{assump:summable}, this inequality and Lemma~\ref{thm:quasi-martingale} tells us 
\[
\|z_{k}-z_\star\|^2\rightarrow\text{limit}
\]
for some limit, which implies $z_k$ is a bounded sequence.
Since $z_k$ is bounded, so is $\tilde{G}(z_k)$ since
\begin{align*}
\|\tilde{G}(z_k)\|^2&\le \expec_\omega\|g(z_k;\omega)\|^2\le R_3^2\sup_{k}\|z_k-z_\star\|^2+R_4^2
\end{align*}
by Lemma~\ref{lem:derived_g_bound}.

%(As an aside, we can also conclude that $\min_{i=0,\dots,k}\langle z_i-z_\star,\tilde{G}(z_i)\rangle \stackrel{\mathrm{a.s.}}{\rightarrow}0$, but this by itself does not provide us with any conclusion.)

\emph{Condition (ii).}
This condition is assumed.

\emph{Condition (iii).}
%This follows from condition (i) and Assumption~\ref{assump:second-moment}.
Define
\[
\xi^k=g(z_k;\omega_k)-\tilde{G}(z_k)
\]
and
\[
m_k=\sum^k_{i=0}\alpha_i\xi_i.
\]
Then $(m_k,\mathcal{F}_k)$ is a martingale and
\begin{align*}
\sum^\infty_{k=0}\expec \left[\|m_{k+1}-m_k\|^2\,|\,\cF_{k}\right]
&=\sum^\infty_{k=0}
\alpha_k^2
\expec\left[\|\xi_k\|^2\,|\,\cF_{k}\right]\\
&\le
\sum^\infty_{k=0}
\alpha_k^2
\expec\left[
\|g(z_k;\omega_k)\|^2
\,|\,\cF_{k}\right]\\
&\le
\sum^\infty_{k=0}
\alpha_k^2
\left(
R_3^2\|z_k-z_\star\|^2+R_4^2\right)
\\
&\le
\sum^\infty_{k=0}
\alpha_k^2
\left(\sup_k2R_3^2\|z_k\| + 2R_3^2\|z_\star\|^2+R_4^2\right)
<\infty
\end{align*}
almost surely,
where the first inequality is the second moment upper bounding the variance,
the second inequality is Lemma~\ref{lem:derived_g_bound},
and the third inequality is \eqref{beta:ineq} and condition (i).
Finally, we have (iii)  by Lemma~\ref{thm:martingale}.

\emph{Condition (iv).}
% %Condition (iv) follows from maximal monotonicity of $G$.
% Assume for contradiction that (iv) does not hold.
% This means there is an increasing sequence $m_j$ such that
% $\frac{1}{m_j}\sum^{m_j}_{k=1}\zeta_{n_k}\rightarrow \zeta_\infty$ and $\dist(\zeta_\infty,G(z_\infty))>0$ 
% (since $\zeta_k$ is bounded due to condition (i)).
As discussed in Section~\ref{s:prelim}, $G$ is maximal monotone, which implies $G$ is upper semicontinuous, i.e.,
$(z_{{n_k}},g_{{n_k}})\rightarrow (z_\infty,g_\infty)$ implies $g_\infty\in G(z_\infty)$,
and $G(z_\infty)$ is a closed convex set.
Therefore,
$\dist(\zeta_{n_k},G(z_\infty))\rightarrow 0$
as otherwise we can find a further subsequence such that converging to $ \zeta_\infty$ such that $\dist(\zeta_\infty,G(z_\infty))> 0$.
(Here we use the fact that $\zeta_k$ is bounded due to condition (i)).
Since $G(z_\infty)$ is a convex set, 
\[
\dist(\zeta_{n_k},G(z_\infty))\rightarrow 0
\Rightarrow
\frac{1}{m}\sum^m_{k=1}\dist(\zeta_{n_k},G(z_\infty))\rightarrow 0
\Rightarrow
\dist\left(\frac{1}{m}\sum^m_{k=1}\zeta_{n_k},G(z_\infty)\right)\rightarrow 0.
\]

In the main proof, we show that cluster points of $z_\mathrm{interp}(\cdot)$ are solutions.
We need the following lemma to conclude that these cluster points are also cluster points of the original discrete time process $z_k$.

\begin{lemma}
\label{lem:same-limit}
Under the conditions of Lemma~\ref{thm:functional-approximation}, $z_\mathrm{interp}(\cdot)$ and $z_k$ share the same cluster points.
\end{lemma}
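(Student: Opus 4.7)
The plan is to prove set equality by showing the two inclusions between the cluster set of $z_\mathrm{interp}(\cdot)$ (under $t\to\infty$) and the cluster set of the discrete sequence $\{z_k\}$. The key auxiliary fact I will establish is that consecutive iterates get arbitrarily close, i.e.\ $\|z_{k+1}-z_k\|\to 0$; given this, the interpolation cannot drift far from the nearest grid value $z_k$, so cluster points of the continuous and discrete processes coincide.

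First I would handle the easy inclusion. Since $z_\mathrm{interp}(t_k)=z_k$ by construction, and $t_k=\sum_{i=1}^k\alpha_i\to\infty$ by Assumption~\eqref{assump:summable}, any subsequence $z_{n_k}\to z_\infty$ gives $z_\mathrm{interp}(t_{n_k})\to z_\infty$ along the diverging times $t_{n_k}$. Hence every cluster point of $\{z_k\}$ is a cluster point of $z_\mathrm{interp}(\cdot)$.

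For the reverse inclusion, I would first show $\|z_{k+1}-z_k\|\to 0$. By definition $z_{k+1}-z_k=-\alpha_k(\zeta_k+\xi_k)$, so
\begin{equation*}
\|z_{k+1}-z_k\|\le \alpha_k\|\zeta_k\|+\|\alpha_k\xi_k\|.
\end{equation*}
The first term vanishes since $\alpha_k\to 0$ (implied by $\sum\alpha_k^2<\infty$ in Assumption~\eqref{assump:summable}) and $\sup_k\|\zeta_k\|<\infty$ by condition (i) of Lemma~\ref{thm:functional-approximation}. The second term vanishes because the series $\sum_{k=0}^\infty \alpha_k\xi_k$ converges by condition (iii), so its general term $\alpha_k\xi_k\to 0$.

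Now suppose $z_\infty$ is a cluster point of $z_\mathrm{interp}(\cdot)$, so that $z_\mathrm{interp}(\tau_j)\to z_\infty$ for some $\tau_j\to\infty$. For each $j$, let $n_j$ be the unique index with $\tau_j\in[t_{n_j},t_{n_j+1})$; necessarily $n_j\to\infty$ since $t_k\to\infty$. From the linear interpolation formula,
\begin{equation*}
\|z_\mathrm{interp}(\tau_j)-z_{n_j}\|
=\frac{\tau_j-t_{n_j}}{t_{n_j+1}-t_{n_j}}\,\|z_{n_j+1}-z_{n_j}\|
\le \|z_{n_j+1}-z_{n_j}\|\to 0.
\end{equation*}
Therefore $z_{n_j}\to z_\infty$, proving $z_\infty$ is a cluster point of $\{z_k\}$. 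Combining both inclusions yields the claim. The only genuinely non-routine step is establishing $\|z_{k+1}-z_k\|\to 0$, and this is immediate from the three cited pieces of Lemma~\ref{thm:functional-approximation}'s hypotheses.
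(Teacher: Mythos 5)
Your proof is correct and follows essentially the same route as the paper's: the forward inclusion is immediate from $z_\mathrm{interp}(t_k)=z_k$, and the reverse inclusion hinges on $\|z_{k+1}-z_k\|\to 0$, derived from the boundedness of $\|\zeta_k\|$ (condition (i)), $\alpha_k\to 0$ (condition (ii)), and the vanishing of $\alpha_k\xi_k$ as the general term of the convergent series in condition (iii). If anything your write-up is slightly more careful in the bookkeeping than the paper's, which leaves a stray $\alpha_k$ factor in the interpolation bound.
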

\begin{proof}
If $z_\infty$ is a cluster point of $z_k$, then it is a cluster point of $z_\mathrm{interp}(\cdot)$ by definition.
Assume $z_\infty$ is a cluster point of $z_\mathrm{interp}(\cdot)$, i.e., assume there is a sequence $\tau_j\rightarrow\infty$ such that $z_\mathrm{interp}(\tau_j)\rightarrow z_\infty$.
Define $k_j\rightarrow\infty$ with
\[
t_{k_j}\le \tau_j<t_{k_j+1}.
\]
Then 
\begin{align*}
\|z_\mathrm{interp}(\tau_j)-z_{k_j}\|&\le \alpha_k\|z_{k_j+1}-z_{k_j}\|\\
&\le \alpha_k(\|\zeta_k\|+\|\xi_k\|)\\
&\rightarrow 0
\end{align*}
where we use the assumption (i) which states that $\|\zeta_k\|$ is bounded 
and assumption (iii) which states that $\alpha_k\xi_k\rightarrow 0$.
We conclude $z_{k_j}\rightarrow z_\infty$.
\end{proof}

\paragraph{Continuous convergence to discrete convergence.}
Let $k_j\rightarrow \infty$ be a subsequence such that $z_{k_j}\rightarrow z_\infty$.
Let $k_j'\rightarrow\infty$ be a further subsequence such that
\[
\lim_{k_j'\rightarrow\infty}
z^{t_{k_j'}}_\mathrm{interp}(T)=\phi_{T}(z_\infty)
\]
for all $T\ge 0$, which exists by Lemma~\ref{thm:functional-approximation}.
(The time-shifted interpolated process converges to a solution of the differential inclusion.)
%$\lim_{k_j'\rightarrow\infty}\|z^{t_{k_j'}}_\mathrm{interp}(T)-\phi_T z_{k_j'}\|<\varepsilon$.
By Lemma~\ref{lem:LaSalle},
\[
\lim_{T\rightarrow\infty}
\phi_Tz_\infty\rightarrow \phi_\infty z_\infty
\]
where $\phi_t z_\infty\rightarrow \phi_\infty z_\infty$ as $t\rightarrow\infty$ and $\phi_\infty z_\infty$ is a saddle point.
(The solution to the differential inclusion converges to a solution.)

These facts together imply that for any $\varepsilon>0$, there exists $k_j'$ and $\tau_j$ large enough that
\[
\|z^{t_{k_j'}}_\mathrm{interp}(\tau_j)-\phi_{\tau_j}(z_\infty)\|<\varepsilon/2
\]
and 
\[
\|\phi_{\tau_j}z_\infty- \phi_\infty z_\infty\|<\varepsilon/2.
\]
Together, these imply
\[
\|z_\mathrm{interp}(t_{k_j'}+\tau_j)- \phi_\infty z_\infty\|<\varepsilon.
\]
since $z^\tau_\mathrm{interp}(\cdot)=z_\mathrm{interp}(\tau+\cdot)$.
Therefore, $\phi_\infty z_\infty$ is a cluster point of $z_\mathrm{interp}(\cdot)$, and, by Lemma~\ref{lem:same-limit}, $\phi_\infty z_\infty$ is a cluster point of $z_k$.

Since $\|z_k-\phi_\infty z_\infty\|$ converges to a limit and converges to $0$ on this further subsequence, we conclude $\|z_k-\phi_\infty z_\infty\|\rightarrow 0$ almost surely.
\hfill\ensuremath{\blacksquare}

\section{Analysis of Theorem~\ref{thm:optimism-conv}}
For convenience, we restate the update, assumptions, and the theorem:
\begin{align*}
z_{k+1}&=z_k-\alpha G(z_k)-\beta (G(z_k)-G(z_{k-1}))
\tag{SimGD-O}
\end{align*}
\begin{gather*}
\text{$L$ is convex-concave and has a saddle point}\tag{A0}\\
\text{$L$ is  differentiable and $\nabla L$ is  $R$-Lipschitz continuous}
\tag{A3}
\end{gather*}
\paragraph{Theorem \ref{thm:optimism-conv}.}
\emph{
Assume \eqref{assump:convex-concavity} and \eqref{tag:l-diff}.
If $0<\alpha<2\beta(1-2\beta R)$, then SimGD-O converges in the sense of
% \[
% \min_{i=0,\dots,k}\|G(z_k)\|^2
% \le 
% \frac{1+\beta^2\alpha^2 R^2}{\alpha^2(\beta-1/2-2\beta^2\alpha R)k}
% \|z_0+\beta G(z_{0})-z_\star\|^2.
% \]
\[
\min_{i=0,\dots,k}\|G(z_k)\|^2
\le 
\frac{2+2\beta^2 R^2}{\alpha(2\beta-\alpha-4\beta^2 R)k}
\|z_0+\beta G(z_{0})-z_\star\|^2.
\]
Furthermore, $z_k\rightarrow z_\star$, where $z_\star$ is a saddle point of $L$.
}

\subsection{Proof of Theorem~\ref{thm:optimism-conv}}
Throughout this section, write $g_k=G(z_k)$ for $k\ge -1$.
% \begin{align*}
% z_{k+1}=z_{k}-g_k-\beta (g_k-g_{k-1})
% \end{align*}
Since we can define $\tilde{G}=\alpha G$ and $\tilde{\beta}=\beta/\alpha$ and write the iteration as
\[
z_{k+1}=z_k-\tilde{G}(z_k)-\tilde{\beta}(\tilde{G}(z_k)-\tilde{G}(z_{k-1})),
\]
we assume $\alpha=1$ without loss of generality.
Then
\begin{align*}
\|z_{k+1}+\beta g_k-z_\star\|^2
&=
\|z_k+\beta g_{k-1}-z_\star\|^2-2\langle g_k,z_k-z_\star\rangle
-\langle g_k,2\beta g_{k-1}-g_k\rangle\\
&\le 
\|z_k+\beta g_{k-1}-z_\star\|^2
-\langle g_k,2\beta g_{k-1}-g_k\rangle,
\end{align*}
where the inequality follows from \eqref{eqref:monotone_ineq}, monotonicity of $G$,
and
\begin{align*}
-\langle g_k,2\beta g_{k-1}-g_k\rangle
&=4\beta^2\langle g_k-g_{k-1},z_k-z_{k+1} \rangle-(2\beta-1)\|z_{k+1}-z_k\|^2
-\beta^2(1+2\beta)\|g_k-g_{k-1}\|^2\\
&\le 4\beta^2\langle g_k-g_{k-1},z_k-z_{k+1} \rangle-(2\beta-1)\|z_{k+1}-z_k\|^2.
\end{align*}
We can bound
\begin{align*}
4\beta^2\langle g_k-g_{k-1},z_k-z_{k+1} \rangle
&\le 
\frac{2\beta^2}{R}\| g_k-g_{k-1}\|^2+2\beta^2R\|z_{k+1}-z_k\|^2\\
&\le 
2\beta^2R\| z_k-z_{k-1}\|^2+2\beta^2R\|z_{k+1}-z_k\|^2,
\end{align*}
where the first inequality follows from \eqref{young:ineq}, Young's inequality, with $\varepsilon=R$
and the second inequality follows from Assumption~\eqref{tag:l-diff}, $R$-Lipschitz continuity of $G$.
Putting these together we get
\begin{align}
\|z_{k+1}+\beta g_k-z_\star\|^2
&\le 
\|z_k+\beta g_{k-1}-z_\star\|^2\nonumber\\
&\qquad+
2\beta^2R\| z_k-z_{k-1}\|^2
-
\left(2\beta-1-2\beta^2R\right)\|z_{k+1}-z_k\|^2.
\label{eq:optimism-key}
\end{align}
Since $\beta>1/2$ and $R<(2\beta-1)/(4\beta^2)$ is assumed for Theorem~\ref{thm:optimism-conv}, we have
\[
2\beta^2R< \left(2\beta-1-2\beta^2R\right).
\]
By summing \eqref{eq:optimism-key}, we have
\begin{gather}
\left(2\beta-1-2\beta^2R\right)\sum^k_{i=0}\|z_{i+1}-z_i\|^2
-
2\beta^2R\sum^k_{i=0}\|z_{i}-z_{i-1}\|^2
\le 
\|z_0+\beta g_{-1}-z_\star\|^2\nonumber\\
\left(2\beta-1-4\beta^2L\right)
\sum^k_{i=0}\|z_{i+1}-z_i\|^2
\le 
\|z_0+\beta g_{-1}-z_\star\|^2,\label{optimism:key2}
\end{gather}
where we use $z_0=z_{-1}$.

Next, 
\begin{align*}
\|g_k\|^2&=\|z_{k+1}-z_k+\beta(g_k-g_{k-1})\|^2\\
&\le 2\|z_{k+1}-z_k\|^2+2\beta^2\|g_k-g_{k-1}\|^2
\\
&\le 2\|z_{k+1}-z_k\|^2+2\beta^2R^2\|z_k-z_{k-1}\|^2,
\end{align*}
where we use \eqref{beta:ineq}.
Using \eqref{optimism:key2}, we get
\[
\sum^k_{i=1}\left(2\|z_{i+1}-z_i\|^2+2\beta^2R^2\|z_{i}-z_{i-1}\|^2\right)
\le 
\frac{2+2\beta^2R^2}{2\beta-1-4\beta^2R}
\|z_0+\beta g_{-1}-z_\star\|^2.
\]
Therefore, $2\|z_{k+1}-z_k\|^2+2\beta^2R^2\|z_{k}-z_{k-1}\|^2\rightarrow 0$
and $\|g_k\|^2\rightarrow 0$.
Moreover, we have
\[
\min_{i=0,\dots,k}\|g_i\|^2
\le 
\frac{2+2\beta^2R^2}{(2\beta-1-4\beta^2R)k}
\|z_0+\beta g_{-1}-z_\star\|^2.
\]
By scaling $G$ by $\alpha$, we get the first stated result.

By summing \eqref{eq:optimism-key}, we have
\[
\|z_k+\beta g_{k-1}-z_\star\|^2
\le 
\|z_0+\beta g_{-1}-z_\star\|^2,
\]
and using the triangle inequality we get
\begin{align*}
\|z_k -z_\star\|
&\le
\|z_0+\beta g_{-1}-z_\star\|
+\beta
\|g_{k-1}\|\rightarrow
\|z_0+\beta g_{-1}-z_\star\|
\end{align*}
as $k\rightarrow \infty$. (Remember $g_k\rightarrow 0$.)
So $z_k$ is a bounded sequence, and let $z_\infty$ be the limit of a convergent subsequence $z_{n_k}$.
Since $G$ is a continuous mapping with $g_{n_k}=G(z_{n_k})$, $z_{n_k}\rightarrow z_\infty$, and $g_{n_k}\rightarrow 0$, we have $G(z_\infty)=0$.

Finally, we show that the entire sequence $z_k$ converges to $z_\infty$.
Reorganizing \eqref{eq:optimism-key}, we get
\begin{align*}
\|z_{k+1}+\beta g_k-z_\star\|^2+2\beta^2R\| z_{k+1}-z_{k}\|^2
&\le 
\|z_k+\beta g_{k-1}-z_\star\|^2+
2\beta^2R\| z_k-z_{k-1}\|^2\nonumber\\
&\qquad
-
\underbrace{\left(2\beta-1-4\beta^2R\right)}_{>0}\|z_{k+1}-z_k\|^2.
\end{align*}
So $\|z_{k+1}+\beta g_k-z_\star\|^2+2\beta^2R\| z_{k+1}-z_{k}\|^2$ is a nonincreasing sequence, and the following limit exists
\begin{align*}
\lim_{k\rightarrow\infty}\|z_{k}+\beta g_{k-1}-z_\star\|^2+2\beta^2R\| z_k-z_{k-1}\|^2=\|z_{\infty}-z_\star\|^2.
\end{align*}
Since $z_\star$ can be any equilibrium point, we let $z_\star=z_\infty$.
This proves $\|z_{k}-z_\infty\|^2\rightarrow 0$, i.e., $z_k\rightarrow z_\infty$.
\hfill\ensuremath{\blacksquare}

\section{Analysis of Theorem~\ref{thm:anchoring_conv}}
\subsection{Preliminary lemmas}
We quickly state a few identities and inequalities we later use.
As the verification of these results are elementary, we only provide a short summary of their proofs.
\begin{lemma}
\label{lem:kp-ineq1}
For $p\in (0,1)$ and $k\ge 1$, 
\[
\frac{p}{k}-\frac{p(1-p)}{2k^2}
<
\frac{(k+1)^p-k^p}{k^p}< \frac{p}{k}.
\]
\end{lemma}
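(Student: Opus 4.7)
The plan is to substitute $x = 1/k \in (0,1]$ and rewrite the claim as
\[
px - \tfrac{p(1-p)}{2}x^2 \;<\; (1+x)^p - 1 \;<\; px,
\]
since $\frac{(k+1)^p - k^p}{k^p} = (1+1/k)^p - 1$. Both bounds will then follow from elementary calculus applied to $f(x) = (1+x)^p$, using that $f'(x) = p(1+x)^{p-1}$ and $f''(x) = -p(1-p)(1+x)^{p-2}$, with $f''(x) < 0$ on $(0, \infty)$ because $p \in (0,1)$.

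For the upper bound, I would use strict concavity of $f$: the graph lies strictly below the tangent at $0$, so $f(x) < f(0) + f'(0) x = 1 + px$ for all $x > 0$. This immediately gives the right-hand inequality.

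For the lower bound, I would apply Taylor's theorem with the Lagrange remainder: there exists $\xi \in (0, x)$ with
\[
(1+x)^p = 1 + px + \tfrac{1}{2} f''(\xi) x^2 = 1 + px - \tfrac{p(1-p)}{2}(1+\xi)^{p-2} x^2.
\]
Since $p - 2 < 0$ and $\xi > 0$, we have $(1+\xi)^{p-2} < 1$, and since $p(1-p) > 0$, the remainder satisfies $-\tfrac{p(1-p)}{2}(1+\xi)^{p-2} x^2 > -\tfrac{p(1-p)}{2} x^2$. This yields the left-hand inequality strictly.

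There is no substantive obstacle here; the only thing to be careful about is the direction of the strict inequality in the remainder term, which is why I explicitly note $p - 2 < 0$ (to flip $(1+\xi)^{p-2} < 1$) and $p(1-p) > 0$ (so that multiplying by $-p(1-p)/2$ reverses the inequality back). Substituting $x = 1/k$ recovers the stated bounds.
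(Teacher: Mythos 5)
Your proof is correct and follows essentially the same route as the paper, which reduces the claim to the inequality $1+px-\tfrac{p(1-p)}{2}x^2 \le (1+x)^p \le 1+px$ for $x\in[0,1]$, $p\in(0,1)$ and then sets $x=1/k$. You simply fill in the calculus (strict concavity for the upper bound, Taylor with Lagrange remainder for the lower bound) that the paper leaves as "a basic application."
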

The proof follows from a basic application of the inequality
\[
1+px-\frac{p(1-p)}{2}x^2\le (1+x)^p\le 1+px
\]
for $x\in [0,1]$ and $p\in (0,1)$.

\begin{lemma}
\label{lem:kp-ineq2}
For $p\in (0,1)$ and $k\ge 1$, 
\[
\frac{p}{k+1}
<
\frac{(k+1)^p-k^p}{k^p}.
\]
\end{lemma}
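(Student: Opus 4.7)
The plan is to obtain a lower bound on $(k+1)^p - k^p$ by exploiting the fact that $x \mapsto x^{p-1}$ is strictly decreasing when $p \in (0,1)$. Writing the difference as a definite integral,
\[
(k+1)^p - k^p = \int_k^{k+1} p\, x^{p-1}\,dx,
\]
monotonicity of the integrand gives $p x^{p-1} > p(k+1)^{p-1}$ on $[k,k+1)$, and hence $(k+1)^p - k^p > p(k+1)^{p-1}$. Dividing through by $k^p$ then yields
\[
\frac{(k+1)^p - k^p}{k^p} > \frac{p(k+1)^{p-1}}{k^p} = \frac{p}{k+1}\left(\frac{k+1}{k}\right)^p.
\]

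To finish, observe that $(k+1)/k > 1$ and $p > 0$, so the factor $((k+1)/k)^p$ is strictly larger than $1$, and the right-hand side strictly exceeds $p/(k+1)$. Equivalently, one could apply the mean value theorem to $f(x) = x^p$ on $[k,k+1]$: there exists $\xi \in (k,k+1)$ with $(k+1)^p - k^p = p\,\xi^{p-1}$, and since $p-1 < 0$ and $\xi < k+1$, we have $\xi^{p-1} > (k+1)^{p-1}$; then the same division and simplification go through.

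There is no substantive obstacle here — this is a one-line calculus estimate. The only point requiring care is the direction of the inequality $\xi^{p-1} > (k+1)^{p-1}$, which is where the hypothesis $p < 1$ (so that $p-1 < 0$) is used. Strictness is automatic from $k < k+1$, so no boundary cases need to be chased.
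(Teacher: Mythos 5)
Your argument is correct and follows essentially the same route the paper indicates: both proofs write $(k+1)^p - k^p = \int_k^{k+1} p\,x^{p-1}\,dx$ and exploit the fact that the integrand $p\,x^{p-1} = p/x^{1-p}$ is decreasing to get a strict lower bound. Your additional observation that the surplus factor $((k+1)/k)^p > 1$ cleanly delivers strictness is a fine way to finish.
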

The proof follows from integrating the decreasing function $p/x^{1-p}$ from $k$ to $k+1$.
% \begin{proof}
% Note that
% \[
% \frac{p}{x^{1-p}}
% \]
% is a decreasing function of $x$. Integrating it from $k$ to $k+1$ gives us the inequality
% \[
% \frac{p}{(k+1)^{1-p}}\le (k+1)^p-k^p
% \]
% which gives us
% \[
% \frac{p}{k+1}\le \frac{(k+1)^p-k^p}{(k+1)^p}\le \frac{(k+1)^p-k^p}{k^p}.
% \]
% \end{proof}

\begin{lemma}
\label{lem:c1kp_ineq}
For $p\in (0,1)$ and $k\ge 1$, 
\[
0\le \frac{p}{k(k+1)}-\frac{(k+1)^p-k^p}{k^p(k+1)}\le \frac{p(1-p)}{2k^3}.
\]
\end{lemma}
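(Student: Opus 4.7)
The plan is to reduce the claim directly to Lemma~\ref{lem:kp-ineq1}, which already controls the quantity $\frac{(k+1)^p-k^p}{k^p}$ both from above and below in terms of $p/k$.

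First, I would factor out $\frac{1}{k+1}$ from the expression to rewrite
\[
\frac{p}{k(k+1)}-\frac{(k+1)^p-k^p}{k^p(k+1)}
=
\frac{1}{k+1}\left(\frac{p}{k}-\frac{(k+1)^p-k^p}{k^p}\right).
\]
Since $k\ge 1$ implies $k+1>0$, the sign of the full expression is the same as that of the quantity in parentheses, and the upper bound on the full expression is $\frac{1}{k+1}$ times any upper bound on that quantity.

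Second, I would apply Lemma~\ref{lem:kp-ineq1}. The upper inequality there, $\frac{(k+1)^p-k^p}{k^p}<\frac{p}{k}$, immediately gives
\[
\frac{p}{k}-\frac{(k+1)^p-k^p}{k^p}>0,
\]
which establishes the left-hand inequality of the lemma. The lower inequality in Lemma~\ref{lem:kp-ineq1}, rearranged, gives
\[
\frac{p}{k}-\frac{(k+1)^p-k^p}{k^p}<\frac{p(1-p)}{2k^2}.
\]

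Third, I would combine these with the factor $\frac{1}{k+1}$ to obtain
\[
0<\frac{1}{k+1}\left(\frac{p}{k}-\frac{(k+1)^p-k^p}{k^p}\right)<\frac{p(1-p)}{2k^2(k+1)}
\le \frac{p(1-p)}{2k^3},
\]
where the last step uses $k+1\ge k$. This yields exactly the claimed two-sided bound.

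There is no substantial obstacle here: the result is essentially an algebraic repackaging of Lemma~\ref{lem:kp-ineq1} scaled by $\frac{1}{k+1}$, with a trivial use of $k+1\ge k$ to pass from $k^2(k+1)$ in the denominator to $k^3$. The only thing worth double-checking is sign conventions in the factored form, since swapping the two terms inside the parentheses would flip the inequality directions.
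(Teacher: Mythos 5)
Your proof is correct and takes exactly the approach the paper intends: the paper's own proof is the one-line remark that the result ``follows from Lemma~\ref{lem:kp-ineq1},'' and your factoring of $1/(k+1)$, the application of the two-sided bound from Lemma~\ref{lem:kp-ineq1}, and the final use of $k+1\ge k$ is precisely how that derivation goes.
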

The proof follows from Lemma~\ref{lem:kp-ineq1}.

% \begin{lemma}[\S2 of \cite{polyak1987}]
% \label{lem:prod_induction}
% Let $\gamma_0,\gamma_1,\dots\in\reals_+$ and $\delta_0,\delta_1,\dots\in\reals_+$.
% Assume 
% \[
% \prod_{j=0}^\infty\gamma_j\le \gamma_\mathrm{prod}<\infty.
% \]
% If
% \[
% V_{k}\le \gamma_kV_{k-1}+\delta_kV_0
% \]
% holds, then 
% \[
% V_k\le 
% \gamma_\mathrm{prod}\left(1+\sum^k_{j=1}\delta_j\right)V_0.
% \]
% \end{lemma}
% The proof follows from basic induction.
% \begin{proof}
% By induction, we have
% \[
% V_k\le \left(\prod_{j=1}^k\gamma_j+\sum^k_{j=1}\delta_j\prod_{i=j+1}^k\gamma_j\right)V_0
% \]
% \end{proof}

\begin{lemma}
\label{lem:sum1}
Given any $V_0,V_1,\ldots\in \reals$, we have
\[
\sum^k_{j=1}
\left(\frac{j(j+1)}{2}\left(V_j-V_{j-1}\right)+jV_{j-1}\right)=
\frac{k(k+1)}{2}V_k
\]
\end{lemma}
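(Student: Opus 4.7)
The plan is to recognize the summand as a telescoping difference after the appropriate identification of coefficients. Set $a_j = \frac{j(j+1)}{2}$, and observe the elementary identity $a_j - a_{j-1} = j$. This is exactly the coefficient multiplying $V_{j-1}$ in the second term of the summand, which strongly suggests a summation-by-parts (Abel summation) rearrangement.

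With this choice, I would rewrite each term as
\[
\frac{j(j+1)}{2}(V_j - V_{j-1}) + jV_{j-1}
= a_j(V_j - V_{j-1}) + (a_j - a_{j-1})V_{j-1}
= a_j V_j - a_{j-1} V_{j-1}.
\]
The sum then telescopes immediately:
\[
\sum_{j=1}^{k}\bigl(a_j V_j - a_{j-1} V_{j-1}\bigr) = a_k V_k - a_0 V_0 = \frac{k(k+1)}{2}V_k,
\]
since $a_0 = 0$. This gives the stated identity.

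There is no real obstacle here; the only ``trick'' is noticing that the integer coefficient $j$ appearing in front of $V_{j-1}$ is precisely the forward difference of the quadratic weight $j(j+1)/2$, which is what makes the telescoping work. The lemma is purely algebraic and requires no assumptions on the sequence $\{V_j\}$, which matches how it will presumably be used later in the discrete Lyapunov-style analysis of SimGD-A, where $V_j$ will play the role of a (possibly nonpositive) discrete Lyapunov value and the weight $j(j+1)/2$ mirrors the $t^2$ factor appearing in the continuous-time argument of Section~\ref{s:anchoring-cont}.
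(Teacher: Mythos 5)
Your telescoping argument is correct and is precisely the ``basic calculation'' the paper leaves to the reader; identifying $j$ as the forward difference of $a_j = j(j+1)/2$ and collapsing the sum to $a_kV_k - a_0V_0$ is exactly the discrete counterpart of the fundamental-theorem-of-calculus identity $\int_0^t \frac{d}{ds}\bigl(\tfrac{s^2}{2}V(s)\bigr)\,ds = \tfrac{t^2}{2}V(t)$ that the paper cites as the continuous analogue.
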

The proof follows from basic calculations.
This result can be thought of as the discrete analog of
\[
\int^t_0\frac{s^2}{2}\dot{V}(s)+sV(s)\;ds=\frac{t^2V}{2}.
\]

\begin{lemma}
\label{lem:sum2}
Let $z_0,z_1,\ldots \in \reals^{m+n}$ be an arbitrary sequence. Then for any $k=0,1,\dots$, 
\[
\frac{1}{2}\|z_{k+1}-z_0\|^2-\frac{1}{2}\|z_{k}-z_0\|^2
=\left< z_{k+1}-z_k,
\frac{1}{2}(z_{k+1}+z_k)-z_0
\right>.
\]
\end{lemma}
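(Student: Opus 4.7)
The plan is to recognize this as the polarization identity $\|a\|^2 - \|b\|^2 = \langle a-b,\, a+b\rangle$ in disguise, applied to the shifted vectors $a = z_{k+1} - z_0$ and $b = z_k - z_0$. First I would compute the difference $a - b = z_{k+1} - z_k$ (the $z_0$ terms cancel) and the sum $a + b = z_{k+1} + z_k - 2z_0$. Substituting these into the polarization identity, dividing by $2$, and pulling the factor of $\frac{1}{2}$ inside the second slot of the inner product gives exactly $\langle z_{k+1}-z_k,\ \tfrac{1}{2}(z_{k+1}+z_k) - z_0\rangle$, which matches the right-hand side of the claim.

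Alternatively, and equivalently, one may verify the identity by direct expansion: write $\|z_{k+1}-z_0\|^2 = \|z_{k+1}\|^2 - 2\langle z_{k+1}, z_0\rangle + \|z_0\|^2$ and similarly for $\|z_k - z_0\|^2$, subtract, and compare with the expanded right-hand side $\langle z_{k+1}, \tfrac{1}{2}(z_{k+1}+z_k)\rangle - \langle z_k, \tfrac{1}{2}(z_{k+1}+z_k)\rangle - \langle z_{k+1}-z_k, z_0\rangle$. The cross terms match term by term. There is no genuine obstacle here; the lemma is purely an algebraic identity that holds for any sequence in an inner product space, with no monotonicity, convexity, or iteration structure required. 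The role this identity plays in the broader argument is presumably to rewrite one-step differences of the squared distance-to-anchor $\|z_k - z_0\|^2$ in terms of an inner product against the midpoint, mirroring the continuous-time chain rule $\frac{d}{dt}\tfrac{1}{2}\|z(t)-z_0\|^2 = \langle \dot z(t), z(t)-z_0\rangle$ used repeatedly in Section~\ref{s:anchoring-cont}.
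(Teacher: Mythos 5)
Your proof is correct and matches the paper's approach, which simply notes that the identity follows from basic calculations and is the discrete analog of $\frac{d}{dt}\tfrac{1}{2}\|z(t)-z_0\|^2 = \langle \dot z(t), z(t)-z_0\rangle$. The polarization/direct-expansion argument you give is exactly what "basic calculations" amounts to here.
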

The proof follows from basic calculations.
This result can be thought of as the discrete analog of
\[
\frac{d}{dt}\frac{1}{2}\|z(t)-z_0\|^2=\left<\dot{z}(t),z(t)-z_0\right>.
\]

%\subsection{Analysis}
% \begin{align*}
% \frac{1}{2}\|z_{k+1}-z_k\|^2-\frac{1}{2}\|z_{k}-z_{k-1}\|^2=
% \left<
% \frac{1}{2}(z_{k+1}-z_{k-1}),z_{k+1}-2z_k+z_{k-1}
% \right>
% \end{align*}

% The proof follows from elemetary manipulations. (XXX what about $-\zeta_0$?)
% What we want:
% \[
% \frac{1}{2}\|z_{k+1}-z_k\|^2-\frac{1}{2}\|z_{k}-z_{k-1}\|^2+\frac{1}{k+1}\|z_{k}-z_{k-1}\|^2-
% \frac{p^2}{k(k+1)}\left<
% z_{k}-z_{k-1},\frac{1}{2}(z_{k}+z_{k-1})-z_0
% \right>\le 0
%\]
% Lemma:
% \[
% \frac{1}{2(k+2)}\le
% \frac{\sqrt{k+2}-\sqrt{k+1}}{\sqrt{k+1}}\le \frac{1}{2(k+1)}
% \]

% \[
% \frac{1}{2(k+1)}-\frac{1}{8(k+1)^2}\le
% \frac{\sqrt{k+2}-\sqrt{k+1}}{\sqrt{k+1}}\le \frac{1}{2(k+1)}
% \]

\subsection{Convergent sequence lemmas}
In the proofs of Theorems~\ref{thm:anchoring_conv} and \ref{thm:stochastic_anchoring}, we establish certain descent inequalities.
The following lemmas state that these inequalities imply boundedness or convergence.

%Lemmas~\ref{lem:main0} and \ref{lem:custom_summability}.

\begin{lemma}
\label{lem:main0}
Let $\{V_k\}_{k\in \NN_+}$ and $\{U_k\}_{k\in \NN_+}$ be nonnegative (deterministic) sequences satisfying
\[
V_{k+1}\le  
\left(
1-\frac{C_1}{k^{1-\varepsilon}}
+f(k)
\right)
V_k 
+\frac{C_2}{k^{1-\varepsilon}}\sqrt{V_k}
+U_k
\]
where $C_1>0$, $C_2>0$, $f(k)=o(1/k^{1-\varepsilon})$ with $\varepsilon\in [0,1)$, and
\[
\sum^\infty_{k=1}U_k<\infty.
\]
Then $\limsup_{k\rightarrow\infty} V_{k}\le C^2_2/C_1^2$.
\end{lemma}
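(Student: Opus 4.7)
The plan is to prove this in two stages: first show that $V_k$ is uniformly bounded (so that $f(k) V_k$ becomes a genuine $o(1/k^{1-\varepsilon})$ perturbation), then perform an algebraic manipulation around the attractor $K := C_2^2/C_1^2$ to obtain a contractive recursion for $V_k - K$ and finish with a standard Robbins--Siegmund-style argument.

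For boundedness, I would apply Young's inequality $(C_2/k^{1-\varepsilon})\sqrt{V_k} \le (C_1/(4k^{1-\varepsilon})) V_k + C_2^2/(C_1 k^{1-\varepsilon})$ and absorb $f(k)$ into the first-order coefficient. For $k$ large enough that $|f(k)| \le C_1/(4k^{1-\varepsilon})$, this yields $V_{k+1} \le (1 - C_1/(2k^{1-\varepsilon})) V_k + C_2^2/(C_1 k^{1-\varepsilon}) + U_k$. The effective attractor of this linear recursion is $2K$, and a one-line computation gives $V_{k+1} - 2K \le (1 - C_1/(2k^{1-\varepsilon}))(V_k - 2K) + U_k$, hence the max-iteration bound $V_{k+1} \le \max(V_k, 2K) + U_k$; summability of $U_k$ then yields $\sup_k V_k \le \bar M < \infty$.

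The heart of the argument is the following algebraic step. Using $C_1 K = C_2\sqrt{K}$ and the factorization $\sqrt{V_k}-\sqrt{K} = (V_k-K)/(\sqrt{V_k}+\sqrt{K})$, I rewrite the recursion as
\[
V_{k+1} - K \le \left(1 - \frac{C_1}{k^{1-\varepsilon}}\right)(V_k - K) + \frac{C_2(V_k - K)}{k^{1-\varepsilon}(\sqrt{V_k}+\sqrt{K})} + f(k) V_k + U_k.
\]
The key observation I plan to verify is that $\frac{C_2}{\sqrt{V_k}+\sqrt K}(V_k - K) \le \frac{C_1}{2}(V_k - K)$ holds irrespective of the sign of $V_k - K$: when $V_k \ge K$ the coefficient $C_2/(\sqrt{V_k}+\sqrt K)$ is at most $C_1/2$ while $V_k - K \ge 0$, whereas when $V_k < K$ the coefficient exceeds $C_1/2$ but the negative sign of $V_k - K$ flips the inequality. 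Setting $u_k := V_k - K$ and $r_k := f(k) V_k + U_k$, this gives the unified bound $u_{k+1} \le (1 - C_1/(2k^{1-\varepsilon})) u_k + r_k$.

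Passing to the positive part $u_k^+ := \max(u_k, 0)$, a routine case split shows $u_{k+1}^+ \le (1 - C_1/(2k^{1-\varepsilon})) u_k^+ + r_k^+$, where boundedness of $V_k$ makes $r_k^+ \le |f(k)|\bar M + U_k \le \delta/k^{1-\varepsilon} + U_k$ for any prescribed $\delta > 0$ and all $k$ sufficiently large. A standard contraction argument using $\sum C_1/(2k^{1-\varepsilon}) = \infty$ and $\sum U_k < \infty$ then yields $\limsup u_k^+ \le 2\delta/C_1$, and letting $\delta \to 0^+$ concludes $\limsup V_k \le K$. The main obstacle will be the algebraic trick in the second stage, since a naive Young's inequality on the $\sqrt{V_k}$ term delivers only $\limsup V_k \le 2K$; extracting the sharp constant $C_2^2/C_1^2$ requires the sign-flip verification above to keep the coefficient $1 - C_1/(2k^{1-\varepsilon})$ uniform across both regimes $V_k \ge K$ and $V_k < K$.
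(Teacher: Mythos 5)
Your proof is correct, and it reaches the sharp constant $C_2^2/C_1^2$ by a genuinely different algebraic route than the paper's. Both proofs fix an auxiliary slack parameter, derive a one-step contraction for the positive part of the deviation from an attractor, invoke a standard $\sum a_k = \infty$, $\sum U_k < \infty$ contraction lemma, and then send the slack to zero. The difference is where the $\sqrt{V_k}$ nonlinearity is tamed. The paper normalizes by the shifted attractor $\nu = C_2^2/(C_1-\delta)^2$, rewrites $C_2 k^{-(1-\varepsilon)}\sqrt{V_k}$ in terms of $\max\{\sqrt{V_k/\nu},\,V_k/\nu\}$, and exploits the elementary scalar inequality $\max\{\sqrt{x}-1,\,x-1\} \le \max\{0,\,x-1\}$; this collapses $V_{k+1}-\nu$ directly into a contraction for $\max\{0,V_k-\nu\}$ in a single pass, with the perturbation $f(k)$ absorbed into the linear coefficient and no separate boundedness step needed. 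You instead first establish $\sup_k V_k < \infty$ via Young's inequality (so that $f(k)V_k = o(1/k^{1-\varepsilon})$ becomes an additive perturbation), and then use the factorization $\sqrt{V_k}-\sqrt{K} = (V_k-K)/(\sqrt{V_k}+\sqrt{K})$ with $C_1 K = C_2\sqrt{K}$ to linearize around the \emph{exact} attractor $K = C_2^2/C_1^2$; the sign-flip observation that $C_2/(\sqrt{V_k}+\sqrt{K})\cdot(V_k-K) \le (C_1/2)(V_k-K)$ in both regimes $V_k \gtrless K$ is correct and does the work of the paper's $\max$ trick. Your route costs one extra boundedness lemma but exposes the fixed-point structure more transparently; the paper's route is shorter but the scaling by $\nu$ and the $\max$ inequality are less self-explanatory. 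Both are sound.
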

\begin{proof}
For any $\delta\in (0,C_1)$, there is a large enough $K\ge 0$ such that for all $k\ge K$, 
\[
\frac{C_1}{k^{1-\varepsilon}}-f(k)\ge \frac{C_1-\delta/2}{k^{1-\varepsilon}}.
\]
Define 
\[
\nu=\frac{C_2^2}{(C_1-\delta)^2}
% \qquad
% \tilde{V}_k=V_{k}-\nu
\]
for $k\ge 0$. Then
\begin{align*}
V_{k+1}
&\le  
\left(
1-\frac{C_1-\delta/2}{k^{1-\varepsilon}}
\right)
V_k 
+
\frac{C_2^2}{(C_1-\delta)k^{1-\varepsilon}}
\max\left\{
\sqrt{\frac{V_k}{\nu}}
,\frac{V_k}{\nu}
\right\}
+U_k
\\
V_{k+1}-\nu
&\le  
\left(
1-\frac{C_1-\delta/2}{k^{1-\varepsilon}}
\right)
\left(V_k -\nu\right)
-\frac{C_2^2\delta}{2k^{1-\varepsilon}(C_1-\delta)^2}\\
&\qquad\qquad\qquad\qquad
%-\frac{(C_1-\delta)\nu}{k^{1-\varepsilon}}+\frac{C_2^2}{(C_1-\delta)k^{1-\varepsilon}}
+
\frac{C_2^2}{(C_1-\delta)k^{1-\varepsilon}}
\max\left\{
\sqrt{\frac{V_k}{\nu}}-1
,\frac{V_k}{\nu}-1
\right\}
+U_k
\end{align*}
Note that $\max\{\sqrt{x}-1,x-1\}\le \max\{0,x-1\}$ for all $x\ge 0$. So
\begin{align*}
V_{k+1}-\nu
&\le  
\left(
1-\frac{C_1-\delta/2}{k^{1-\varepsilon}}
\right)
\left(V_k -\nu\right)+
\frac{C_2^2}{(C_1-\delta)k^{1-\varepsilon}}
\max\left\{
0,\frac{V_k}{\nu}-1
\right\}
+U_k
\\
&=
\left(
1-\frac{C_1-\delta/2}{k^{1-\varepsilon}}
\right)
\left(V_k -\nu\right)
+
\frac{C_1-\delta}{k^{1-\varepsilon}}
\max\left\{
0,V_k-\nu
\right\}
+U_k\\
&\le
\left(
1-\frac{C_1-\delta/2}{k^{1-\varepsilon}}
\right)
\max\left\{0,V_k-\nu\right\}
+
\frac{C_1-\delta}{k^{1-\varepsilon}}
\max\left\{0,V_k-\nu\right\}
+U_k\\
&=
\left(
1-\frac{\delta}{2k^{1-\varepsilon}}
\right)
\max\left\{0,V_k-\nu\right\}
+U_k
\end{align*}
for large enough $k$.
Since
\[
0\le \left(
1-\frac{\delta}{2k^{1-\varepsilon}}
\right)
\max\left\{0,V_k-\nu\right\}
+U_k
\]
for large enough $k$,
we have
\[
\max\left\{0,V_{k+1}-\nu\right\}
\le
\left(
1-\frac{\delta}{2k^{1-\varepsilon}}
\right)
\max\left\{0,V_k-\nu\right\}
+U_k
\]
%Using Lemma~\ref{lem:prod_induction},
With a standard recursion argument (e.g.\ Lemma~3 of \cite{polyak1987})
we conclude $\max\left\{0,V_k-\nu\right\}\rightarrow 0$.
Since this holds for any $\delta>0$, we conclude  $\limsup_{k\rightarrow\infty} V_{k}\le C^2_2/C_1^2$.
\end{proof}

\begin{lemma}
\label{lem:custom_summability}
Let $\varepsilon\in (0,1)$.
Let $\{V_k\}_{k\in \NN_+}$ and $\{U_k\}_{k\in \NN_+}$ be nonnegative (deterministic) sequences satisfying
\[
V_{k+1}\le  
\left(
1-\frac{C}{k^{1-\varepsilon}}
+f(k)
\right)
V_k 
+g(k)\sqrt{V_k}
+U_k
\]
where $C>0$, $f(k)=o(1/k^{1-\varepsilon})$, $g(k)=\mathcal{O}(1/k)$, and 
\[
\sum^\infty_{k=1}U_k<\infty.
\]
Then $V_k\rightarrow 0$.
\end{lemma}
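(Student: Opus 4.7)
The strategy is to exploit the sharper decay $g(k)=\mathcal{O}(1/k)$ compared to the $C_2/k^{1-\varepsilon}$ rate appearing in Lemma~\ref{lem:main0}, which will let us absorb the $g(k)\sqrt{V_k}$ term into the contraction coefficient while leaving a summable residual. Since $f(k)=o(1/k^{1-\varepsilon})$, pick $K$ such that $|f(k)|\le C/(4k^{1-\varepsilon})$ for all $k\ge K$. Applying Young's inequality in the form $g(k)\sqrt{V_k}\le \frac{C}{4k^{1-\varepsilon}}V_k+\frac{k^{1-\varepsilon}g(k)^2}{C}$ (obtained by choosing the Young's weight $\lambda=C/(4k^{1-\varepsilon})$) and substituting into the hypothesis yields, for all $k\ge K$,
\[
V_{k+1}\le \Bigl(1-\frac{C}{2k^{1-\varepsilon}}\Bigr)V_k+U'_k, \qquad U'_k:=\frac{k^{1-\varepsilon}g(k)^2}{C}+U_k.
\]

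Because $g(k)=\mathcal{O}(1/k)$ and $\varepsilon>0$, the residual satisfies $k^{1-\varepsilon}g(k)^2=\mathcal{O}(1/k^{1+\varepsilon})$, so $\sum U'_k<\infty$. The next step is to apply Lemma~\ref{thm:quasi-martingale} (specialized to the trivial filtration so that the hypotheses reduce to the deterministic recursion just derived) with $\beta_k=0$, $S_k=CV_k/(2k^{1-\varepsilon})$, and residual $U'_k$. This yields a limit $V_\infty\ge 0$ with $V_k\to V_\infty$ together with the summability statement $\sum_{k\ge K} V_k/k^{1-\varepsilon}<\infty$. Since $\varepsilon\in(0,1)$ implies that the reference series $\sum 1/k^{1-\varepsilon}$ diverges, the combination of $V_k\to V_\infty$ and $\sum V_k/k^{1-\varepsilon}<\infty$ forces $V_\infty=0$, which is the claim.

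The main obstacle is bookkeeping rather than conceptual: the Young's inequality weight must be chosen so that the absorbed $V_k$ term consumes only a fraction (a quarter) of the contraction strength $C/k^{1-\varepsilon}$, with $f(k)$ eating another quarter, so that a genuine net contraction of strength $C/(2k^{1-\varepsilon})$ survives to drive the Robbins--Siegmund step. Once this budget is set up, the rest is a routine invocation of Lemma~\ref{thm:quasi-martingale} followed by a contradiction argument against $V_\infty>0$ using divergence of the series $\sum 1/k^{1-\varepsilon}$; no additional structural ideas appear to be needed.
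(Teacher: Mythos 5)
Your proof is correct and takes a genuinely different, arguably cleaner route than the paper. The paper treats this lemma as a corollary of Lemma~\ref{lem:main0}: for arbitrary $\delta>0$ it rewrites the hypothesis in the form required by Lemma~\ref{lem:main0} (with $C_1=C-\delta$, $C_2=\delta$), concludes $\limsup V_k\le \delta^2/(C-\delta)^2$, and then lets $\delta\downarrow 0$. You instead exploit the extra decay $g(k)=\mathcal{O}(1/k)$ directly: a single Young step absorbs $g(k)\sqrt{V_k}$ into a quarter of the contraction strength and leaves a residual $k^{1-\varepsilon}g(k)^2/C=\mathcal{O}(1/k^{1+\varepsilon})$, which is summable precisely because $\varepsilon>0$, after which the standard $V_{k+1}\le(1-a_k)V_k+b_k$ recursion (with $\sum a_k=\infty$, $\sum b_k<\infty$) finishes. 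Your route avoids the rather delicate $\nu$-shifted max argument inside Lemma~\ref{lem:main0} and makes transparent exactly where the assumption $g(k)=\mathcal{O}(1/k)$ (rather than merely $\mathcal{O}(1/k^{1-\varepsilon})$) is used; the paper's route is shorter given that Lemma~\ref{lem:main0} is already in hand and is needed elsewhere.

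One small bookkeeping remark: you invoke Lemma~\ref{thm:quasi-martingale} and read off the conclusion $\sum_{k}S_k<\infty$. That is indeed what the Robbins--Siegmund theorem asserts, but the paper's statement of Lemma~\ref{thm:quasi-martingale} is weakened to ``$S_k\to 0$'', which here is vacuous (since $S_k=\tfrac{C}{2k^{1-\varepsilon}}V_k\to 0$ for any bounded $V_k$). Since the sequences in this lemma are deterministic, you can simply cite the classical recursion lemma (e.g.\ Lemma~3 of \cite{polyak1987}, already used in the paper's proof of Lemma~\ref{lem:main0}) in place of Robbins--Siegmund and sidestep the issue entirely.
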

\begin{proof}
For any $\delta>0$, there is a large enough $K\ge 0$ such that
\[
V_{k+1}\le  
\left(
1-\frac{C-\delta}{k^{1-\varepsilon}}
\right)
V_k 
+\frac{\delta}{k^{1-\varepsilon}}\sqrt{V_k}
+U_k
\]
for all $k\ge K$.
By Lemma~\ref{lem:main0}, we conclude $\limsup_{k\rightarrow\infty}V_k\le \delta^2/(C-\delta)^2$. Since this holds for all $\delta>0$, we conclude $V_k\rightarrow 0$.
\end{proof}

\subsection{Proof of Theorem~\ref{thm:anchoring_conv}}
For convenience, we restate the update, assumptions, and the theorem:
\[
%z_{k+1}=z_k-\frac{p}{k^{1-p}}G(z_k)+\frac{\gamma p}{k}(z_0-z_k)
z_{k+1}=z_k-\frac{1-p}{(k+1)^{p}}G(z_k)+\frac{(1-p)\gamma}{k+1}(z_0-z_k)
\tag{SimGD-A}
\]
\begin{gather*}
\text{$L$ is convex-concave and has a saddle point}\tag{A0}\\
\text{$L$ is  differentiable and $\nabla L$ is  $R$-Lipschitz continuous}
\tag{A3}
\end{gather*}
\paragraph{Theorem \ref{thm:anchoring_conv}.}
\emph{
Assume \eqref{assump:convex-concavity} and \eqref{tag:l-diff}.
If $p\in (1/2,1)$ and $\gamma\ge 2$, then SimGD-A converges in the sense of
\[
\|G(z_k)\|^2\le \mathcal{O}\left(\frac{1}{k^{2-2p}}\right).
\]
}

\paragraph{Proof outline.}
%The outline of the proof is as follows.
Lemma~\ref{lem:main1} shows the iterates $z_k$ are bounded.
Lemma~\ref{lem:main2} shows that $\|z_{k+1}-2z_k+z_{k-1}\|^2$, the analog of $\|\ddot{z}\|^2$, is small.
The second-order derivative $\ddot{z}$ does not arise in the continuous-time analysis of Section~\ref{s:anchoring-cont}.
In the discrete-time setup, $\|z_{k+1}-2z_k+z_{k-1}\|^2$ does arise, but we use Lemma~\ref{lem:main2} to show that its contribution is small.
The main proof follows by mimicking the continuous-time analysis by bounding the higher-order terms.

Throughout this section, write $g_k=G(z_k)$ for $k\ge -1$.

\begin{lemma}
\label{lem:main1}
For SimGD-A,
\[
\|z_{k}-z_\star\|^2\le C
\]
for all $k\ge 0$ for some $C>0$.
(This result depends on assumption $p>1/2$.)
\end{lemma}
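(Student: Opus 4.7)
The plan is to expand $\|z_{k+1}-z_\star\|^2$ directly from the recursion, discard a nonpositive inner product via monotonicity, use the Lipschitz bound to control $\|g_k\|^2$, and clean up the cross terms with Young's inequality. Denote $w_k=z_k-z_\star$ and $\alpha_k=(1-p)/(k+1)^p$, $\beta_k=(1-p)\gamma/(k+1)$, so that
\[
z_{k+1}-z_\star=(1-\beta_k)w_k+\beta_k(z_0-z_\star)-\alpha_k g_k.
\]
Squaring and then using $\langle w_k,g_k\rangle\ge 0$ (which follows from \eqref{eqref:monotone_ineq} and $g_\star=0$), I would get, for $k$ large enough that $1-\beta_k>0$,
\begin{align*}
\|w_{k+1}\|^2
&\le (1-\beta_k)^2\|w_k\|^2+\beta_k^2\|z_0-z_\star\|^2+\alpha_k^2\|g_k\|^2\\
&\qquad+2(1-\beta_k)\beta_k\langle w_k,z_0-z_\star\rangle-2\alpha_k\beta_k\langle z_0-z_\star,g_k\rangle.
\end{align*}

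Next I would kill the remaining inner products by \eqref{young:ineq}: the cross term $2(1-\beta_k)\beta_k\langle w_k,z_0-z_\star\rangle$ gets split into $(1-\beta_k)\beta_k\|w_k\|^2+(1-\beta_k)\beta_k\|z_0-z_\star\|^2$, which combines with $(1-\beta_k)^2\|w_k\|^2$ to give a clean coefficient $(1-\beta_k)\|w_k\|^2$. The Lipschitz assumption \eqref{tag:l-diff} gives $\|g_k\|\le R\|w_k\|$ (since $G(z_\star)=0$), so $\alpha_k^2\|g_k\|^2\le R^2\alpha_k^2\|w_k\|^2$, and the last term is bounded by $R\alpha_k\beta_k(\|w_k\|^2+\|z_0-z_\star\|^2)$. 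Collecting everything,
\[
\|w_{k+1}\|^2\le\bigl(1-\beta_k+R^2\alpha_k^2+R\alpha_k\beta_k\bigr)\|w_k\|^2+\bigl(\beta_k+R\alpha_k\beta_k\bigr)\|z_0-z_\star\|^2.
\]

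This is where the hypothesis $p>1/2$ enters: since $\alpha_k^2=\Theta(1/k^{2p})$ with $2p>1$ and $\beta_k=\Theta(1/k)$, we have $R^2\alpha_k^2=o(\beta_k)$ and $R\alpha_k\beta_k=o(\beta_k)$, so for all $k$ beyond some threshold $K$,
\[
\|w_{k+1}\|^2\le\bigl(1-\tfrac{1}{2}\beta_k\bigr)\|w_k\|^2+2\beta_k\|z_0-z_\star\|^2.
\]
A direct case analysis now closes the argument: if $\|w_k\|^2\ge 4\|z_0-z_\star\|^2$ the negative term dominates and $\|w_{k+1}\|^2\le\|w_k\|^2$; otherwise $\|w_{k+1}\|^2\le 4\|z_0-z_\star\|^2$. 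Either way $\|w_{k+1}\|^2\le\max\{\|w_k\|^2,4\|z_0-z_\star\|^2\}$ for $k\ge K$, so inductively $\|w_k\|^2\le\max\{\|w_K\|^2,4\|z_0-z_\star\|^2\}$ for all $k\ge K$. Taking $C$ to be the max of this quantity and the finitely many values $\|w_0\|^2,\ldots,\|w_{K-1}\|^2$ finishes the proof.

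The only real obstacle is bookkeeping: one has to verify that after applying Young's inequality the coefficient of $\|w_k\|^2$ is exactly $1-\beta_k$ plus strictly lower-order corrections, and this is precisely what fails when $p\le 1/2$ because then $\alpha_k^2 R^2$ is no longer dominated by $\beta_k$ and the recursion need not contract.
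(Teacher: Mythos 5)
Your proof is correct and follows essentially the same route as the paper: expand $\|z_{k+1}-z_\star\|^2$, drop the nonpositive $\langle g_k, z_k - z_\star\rangle$ term by monotonicity, bound $\|g_k\|\le R\|w_k\|$ by Lipschitz continuity of $\nabla L$, clean up cross terms with Young, and observe that for $p>1/2$ the $O(k^{-2p})$ and $O(k^{-(1+p)})$ corrections are dominated by the $\Theta(1/k)$ anchor term. The only difference is at the close: the paper plugs its final recursion into the general sequence Lemma~\ref{lem:main0} (which yields $\limsup\|w_k\|^2\le\|z_0-z_\star\|^2$), while you finish with a self-contained case analysis and induction, which is equally valid.
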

\begin{proof}
\begin{align*}
\|z_{k+1}-z_\star\|^2&= 
\|z_{k}-z_\star\|^2
-
\frac{2(1-p)}{(k+1)^p}\langle g_k,z_k-z_\star\rangle +
\frac{2\gamma(1-p)}{k+1}\langle z_0-z_k,z_k-z_\star\rangle\\
&\qquad\qquad +
\left\|
\frac{1-p}{(k+1)^p}g_k
+
\frac{\gamma(1-p)}{k+1}(z_0-z_k)
\right\|^2\\
&\le 
\left(
1-\frac{2\gamma(1-p)}{k+1}
\right)
\|z_{k}-z_\star\|^2
+
\frac{2\gamma(1-p)}{k+1}\langle z_0-z_\star,z_k-z_\star\rangle\\
&\qquad\qquad +
\frac{2(1-p)^2}{(k+1)^{2p}}\|g_k\|^2
+
\frac{2\gamma^2(1-p)^2}{(k+1)^2}
\|z_0-z_k\|^2\\
&\le 
\left(
1-\frac{2\gamma(1-p)}{k+1}
+
\frac{4\gamma^2(1-p)^2}{(k+1)^2}
\right)
\|z_{k}-z_\star\|^2
+
\frac{2\gamma(1-p)}{k+1}\| z_0-z_\star\|\|z_k-z_\star\|\\
&\qquad\qquad +
\frac{2(1-p)^2}{(k+1)^{2p}}
R^2\|z_k-z_0\|^2
+
\frac{4\gamma^2(1-p)^2}{(k+1)^2}
\|z_0-z_\star\|^2\\
&=
\left(
1-\frac{2\gamma(1-p)}{k+1}
+
\frac{4\gamma^2(1-p)^2}{(k+1)^2}
+
R_1^2
\frac{4(1-p)^2}{(k+1)^{2p}}
\right)
\|z_{k}-z_\star\|^2\\
&\qquad\qquad +
\frac{2\gamma(1-p)}{k+1}\| z_0-z_\star\|\|z_k-z_\star\|
+
\frac{4\gamma^2(1-p)^2}{(k+1)^2}
\|z_0-z_\star\|^2
\end{align*}
where the first inequality follows from \eqref{eqref:monotone_ineq}, the monotonicity inequality, and \eqref{beta:ineq}
and the second inequality follows from Assumption~\ref{tag:l-diff}.
We conclude the statement with Lemma~\ref{lem:main0}.
\end{proof}

\begin{lemma}
\label{lem:main2}
For SimGD-A,
\begin{align*}
&\|z_{k+1}-2z_k+z_{k-1}\|^2\\
&\qquad\qquad\le 
4(1-p)^2
\left(\frac{\gamma^2}{k^2}
+
\frac{R^2}{k^{2p}}
\right)
\left\|z_k-z_{k-1}\right\|^2
+
4(1-p)^2
\left(
\frac{p^2R^2}{k^{2+2p}}
+
\frac{\gamma^2}{k^4}
\right)
\left\|
z_0-z_k
\right\|^2
\end{align*}
\end{lemma}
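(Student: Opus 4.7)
I would derive a closed-form expression for $z_{k+1}-2z_k+z_{k-1}$ by applying the SimGD-A update at two consecutive indices, split the result into four atomic pieces corresponding to (i) the consecutive gradient difference, (ii) a small residual from the changing stepsize multiplied by $g_{k-1}$, (iii) an anchor drift against $(z_0-z_k)$, and (iv) an anchor drag against $(z_k-z_{k-1})$. Applying the four-term Young inequality $\|T_1+T_2+T_3+T_4\|^2\le 4\sum_i\|T_i\|^2$ then produces the overall factor of $4(1-p)^2$ in the statement.

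\paragraph{Key steps.}
First, I would subtract the SimGD-A update at index $k-1$ from that at index $k$ to obtain an explicit expression involving $g_k$, $g_{k-1}$, $z_0-z_k$, and $z_0-z_{k-1}$. Second, I would regroup by adding and subtracting $\tfrac{1-p}{(k+1)^p}g_{k-1}$ among the gradient terms (to isolate a Lipschitz-controllable $g_k-g_{k-1}$ and a small residual proportional to $g_{k-1}$), and by adding and subtracting $\tfrac{(1-p)\gamma}{k}(z_0-z_k)$ among the anchor terms (to split off a residual in $(z_0-z_k)$ and an explicit $(z_k-z_{k-1})$ contribution). Third, I would bound each of the four pieces: the gradient-difference piece uses \eqref{tag:l-diff} to produce the $R^2/k^{2p}$ coefficient on $\|z_k-z_{k-1}\|^2$; the small-stepsize residual uses Lemma~\ref{lem:kp-ineq1} with $\tfrac{1}{k^p}-\tfrac{1}{(k+1)^p}\le \tfrac{p}{k^{1+p}}$ to produce the $p^2/k^{2+2p}$ prefactor in front of $\|g_{k-1}\|^2$; and the two anchor pieces use $k(k+1)\ge k^2$ to contribute $\gamma^2/k^4$ on $\|z_0-z_k\|^2$ and $\gamma^2/k^2$ on $\|z_k-z_{k-1}\|^2$, respectively.

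\paragraph{Main obstacle.}
The delicate part is converting the residual $\|g_{k-1}\|^2$ appearing in the second piece into a bound in terms of $\|z_0-z_k\|^2$ scaled by $R^2$, to match the stated coefficient $p^2R^2/k^{2+2p}$. I would use $R$-Lipschitz continuity of $G$ at a saddle point $z_\star$ (so that $\|g_{k-1}\|\le R\|z_{k-1}-z_\star\|$) together with the triangle inequality and \eqref{beta:ineq} to decompose $\|z_{k-1}-z_\star\|^2$ into contributions along $\|z_0-z_k\|^2$ and $\|z_k-z_{k-1}\|^2$, with the latter absorbed back into the already-present $\|z_k-z_{k-1}\|^2$ coefficient group by virtue of $1/k^{2+2p}\le 1/k^{2p}$. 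Carefully tracking multiplicative constants through these estimates so that the final coefficients collapse exactly to $\tfrac{p^2R^2}{k^{2+2p}}$ and $\tfrac{\gamma^2}{k^4}$ after the factor of $4(1-p)^2$ is pulled out is the main bookkeeping effort.
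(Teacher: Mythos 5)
Your architecture matches the paper's: subtract the SimGD-A update at index $k-1$ from that at index $k$, split the four-term second difference by Young's inequality (the paper applies \eqref{beta:ineq} twice rather than a single four-term version, but the outcome is identical: a prefactor of $4$), and then bound each piece via Lipschitz continuity and Lemma~\ref{lem:kp-ineq1}. Your bounds on the gradient-difference piece, the anchor residual, and the anchor drag agree with the paper's.

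The divergence is in the gradient residual, precisely where you flag the ``main obstacle.'' The paper's regrouping is
\[
\frac{1-p}{(k+1)^p}g_k-\frac{1-p}{k^p}g_{k-1}
=\frac{1-p}{k^p}\left(g_k-g_{k-1}\right)
+\left(\frac{1-p}{(k+1)^p}-\frac{1-p}{k^p}\right)g_k,
\]
i.e.\ the small residual multiplies $g_k$, not $g_{k-1}$. This matters: the factor $\|z_0-z_k\|^2$ in the lemma then appears in one step, via $\|g_k\|\le R\|z_k-z_0\|$, with no index shift to absorb. Your choice of adding and subtracting $\tfrac{1-p}{(k+1)^p}g_{k-1}$ leaves the residual on $g_{k-1}$, and your proposed fix --- using $\|g_{k-1}\|\le R\|z_{k-1}-z_\star\|$ and then decomposing $\|z_{k-1}-z_\star\|^2$ against $\|z_0-z_k\|^2$ and $\|z_k-z_{k-1}\|^2$ by the triangle inequality and \eqref{beta:ineq} --- necessarily generates a leftover constant $\|z_0-z_\star\|^2$ term that has no home in the stated bound, so the coefficients cannot ``collapse exactly'' as you hope. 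Regrouping so the residual rides on $g_k$ removes the obstruction.

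One further remark: the paper's bound $\|g_k\|\le R\|z_k-z_0\|$ (used here and in Lemma~\ref{lem:main1}) is exact only when $0\in G(z_0)$; in general Lipschitz continuity gives $\|g_k\|\le R\|z_k-z_0\|+\|G(z_0)\|$. Your instinct to anchor the Lipschitz estimate at a saddle point $z_\star$, where the gradient genuinely vanishes, is the more rigorous one --- it just does not reproduce the lemma's coefficients verbatim, and the downstream rate analysis of Theorem~\ref{thm:anchoring_conv} tolerates the extra summable constant either way.
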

\begin{proof}
\begin{align*}
&\|z_{k+1}-2z_k+z_{k-1}\|^2\\
&=
\left\|
\frac{1-p}{(k+1)^{p}}g_{k}-\frac{1-p}{k^{p}}g_{k-1}
-\frac{(1-p)\gamma}{k+1}(z_0-z_k)+\frac{(1-p)\gamma}{k}(z_0-z_{k-1})
\right\|^2\\
&\le 
2\left\|
\frac{1-p}{(k+1)^{p}}g_{k}-\frac{1-p}{k^{p}}g_{k-1}
\right\|^2
+
2\left\|
\frac{\gamma(1-p)}{k+1}(z_0-z_k)-\frac{\gamma(1-p)}{k}(z_0-z_{k-1})
\right\|^2\\
&\le 
\frac{4(1-p)^2}{k^{2p}}
\left\|g_{k}-g_{k-1}
\right\|^2
+
4
\left(
\frac{1-p}{(k+1)^{p}}-\frac{1-p}{k^{p}}
\right)^2
\left\|
g_{k}
\right\|^2
\\
&\qquad\qquad
+
\frac{4\gamma^2(1-p)^2}{k^2}
\left\|z_k-z_{k-1}\right\|^2
+
4
\left(\frac{\gamma(1-p)}{k+1}-
\frac{\gamma(1-p)}{k}\right)^2
\left\|
z_0-z_k
\right\|^2\\
&\le 
4(1-p)^2
\left(\frac{\gamma^2}{k^2}
+
\frac{R^2}{k^{2p}}
\right)
\left\|z_k-z_{k-1}\right\|^2
+
4(1-p)^2
\left(
\frac{p^2R^2}{k^{2+2p}}
+
\frac{\gamma^2}{k^4}
\right)
\left\|
z_0-z_k
\right\|^2
\end{align*}
where the first and second inequalities follow from \eqref{beta:ineq}
and the third inequality follows from Assumptions~\eqref{tag:l-diff} and Lemma~\ref{lem:kp-ineq1}.
\end{proof}

\newpage
%\begin{proof}[Proof of Theorem~\ref{thm:anchoring_conv}]
\paragraph{Main proof.}
In Section~\ref{s:anchoring-cont}, we showed
\[
\frac{d}{dt}\frac{1}{2}\left\|\dot{z}(t)\right\|^2
\le
-\frac{\gamma }{t}\|\dot{z}(t)\|^2+\frac{\gamma }{t^2}\langle z(t)-z_0,\dot{z}\rangle
\]
in continuous time.
We mimic analogous calculations in the discrete-time setup:
\begin{align*}
\frac{1}{2}&\|z_{k+1}-z_k\|^2
-
\frac{1}{2}\|z_{k}-z_{k-1}\|^2\\
&=
\left<\frac{1}{2}(z_{k+1}-z_{k-1}),z_{k+1}-2z_k+z_{k-1}\right>\\
&=
-\frac{1-p}{k^p}\langle z_k-z_{k-1},g_k-g_{k-1}\rangle
+(1-p)\frac{(k+1)^p-k^k}{k^p(k+1)^p}\langle z_k-z_{k-1},g_k\rangle
-\frac{\gamma(1-p)}{k}\|z_k-z_{k-1}\|^2\\
&\quad-
\frac{\gamma(1-p)}{k(k+1)}\langle z_k-z_{k-1},z_0-z_k \rangle
+\frac{1}{2}\|z_{k+1}-2z_k+z_{k-1}\|^2\\
&\le
(1-p)\frac{(k+1)^p-k^k}{k^p(k+1)^p}\langle z_k-z_{k-1},g_k\rangle
-\frac{\gamma(1-p)}{k}\|z_k-z_{k-1}\|^2\\
&\quad-
\frac{\gamma(1-p)}{k(k+1)}\langle z_k-z_{k-1},z_0-z_k \rangle
+\frac{1}{2}\|z_{k+1}-2z_k+z_{k-1}\|^2\\
&=-\left(
\frac{\gamma(1-p)}{k}+
\frac{(k+1)^p-k^p}{k^p}-
\frac{\gamma(1-p)}{2k(k+1)}+
\frac{\gamma(1-p)}{2}\frac{(k+1)^p-k^p}{k^p(k+1)}
\right)\|z_k-z_{k-1}\|^2\\
&\quad
-\frac{(k+1)^p-k^p}{k^p}\langle z_k-z_{k-1},z_{k+1}-2z_k+z_{k-1}\rangle
+\frac{1}{2}\|z_{k+1}-2z_k+z_{k-1}\|^2
\\
&\quad
-\gamma (1-p)
\left(\frac{1}{k(k+1)}-\frac{(k+1)^p-k^p}{k^p(k+1)}\right)
\left<z_k-z_{k-1},z_0-\frac{1}{2}(z_k+z_{k1})\right>\\
&\le -\left(
\frac{\gamma(1-p)}{k}+
\frac{p}{k}-\frac{p(1-p)}{2k^2}
-
\frac{\gamma(1-p)}{2k(k+1)}+
\frac{\gamma p(1-p)}{2(k+1)^2}
\right)\|z_k-z_{k-1}\|^2\\
&\quad
-\frac{(k+1)^p-k^p}{k^p}\langle z_k-z_{k-1},z_{k+1}-2z_k+z_{k-1}\rangle
+\frac{1}{2}\|z_{k+1}-2z_k+z_{k-1}\|^2
\\
&\quad
-\gamma (1-p)
\left(\frac{1}{k(k+1)}-\frac{(k+1)^p-k^p}{k^p(k+1)}\right)
\left<z_k-z_{k-1},z_0-\frac{1}{2}(z_k+z_{k1})\right>\\
&\le -\left(
\frac{\gamma(1-p)}{k}+
\frac{p}{k}-\frac{p(1-p)}{2k^2}
-
\frac{\gamma(1-p)}{2k(k+1)}+
\frac{\gamma p(1-p)}{2(k+1)^2}
\right)\|z_k-z_{k-1}\|^2\\
&\quad
\frac{p}{2k^2}\|z_k-z_{k-1}\|^2
+
\frac{p}{2}\|z_{k+1}-2z_k+z_{k-1}\|^2\\
&\quad
+\frac{1}{2}\|z_{k+1}-2z_k+z_{k-1}\|^2
\\
&\quad
-\gamma (1-p)
\left(\frac{1}{k(k+1)}-\frac{(k+1)^p-k^p}{k^p(k+1)}\right)
\left<z_k-z_{k-1},z_0-\frac{1}{2}(z_k+z_{k1})\right>\\
&=
-\left(
\frac{\gamma(1-p)}{k}+
\frac{p}{k}-\frac{p(1-p)}{2k^2}
-
\frac{\gamma(1-p)}{2k(k+1)}+
\frac{\gamma p(1-p)}{2(k+1)^2}-
\frac{p}{2k^2}
\right)\|z_k-z_{k-1}\|^2\\
&\quad
+\frac{1+p}{2}\|z_{k+1}-2z_k+z_{k-1}\|^2
\\
&\quad
-\frac{\gamma(1-p)^2}{k(k+1)}
\left<z_k-z_{k-1},z_0-\frac{1}{2}(z_k+z_{k1})\right>
\\
&\quad
-\gamma (1-p)
\underbrace{\left(\frac{1}{k(k+1)}-\frac{(k+1)^p-k^p}{k^p(k+1)}-\frac{1-p}{k(k+1)}\right)}_{=C_1(k,p)}
\left<z_k-z_{k-1},z_0-\frac{1}{2}(z_k+z_{k1})\right>
\end{align*}
where the first inequality follows from \eqref{eqref:monotone_ineq}, the monotonicity inequality,
the second inequality follows from Lemma~\ref{lem:kp-ineq1} and \eqref{beta:ineq},
and the third inequality follows from Lemma~\ref{lem:kp-ineq1} and \eqref{young:ineq}, Young's inequality, with $\varepsilon=k$.
% \begin{align*}
% \frac{1}{2}&\|z_{k+1}-z_k\|^2
% -
% \frac{1}{2}\|z_{k}-z_{k-1}\|^2\\
% &\le -\left(
% \frac{\gamma(1-p)}{k}+
% \frac{p}{k}-\frac{p(1-p)}{2k^2}
% -
% \frac{\gamma(1-p)}{2k(k+1)}+
% \frac{\gamma p(1-p)}{2(k+1)^2}
% \right)\|z_k-z_{k-1}\|^2\\
% &\quad
% \frac{p}{2k\varepsilon}\|z_k-z_{k-1}\|^2
% +
% \frac{p\varepsilon}{2k}\|z_{k+1}-2z_k+z_{k-1}\|^2\\
% &\quad
% +\frac{1}{2}\|z_{k+1}-2z_k+z_{k-1}\|^2
% \\
% &\quad
% -\gamma (1-p)
% \left(\frac{1}{k(k+1)}-\frac{(k+1)^p-k^p}{k^p(k+1)}\right)
% \left<z_k-z_{k-1},z_0-\frac{1}{2}(z_k+z_{k1})\right>\\
% &=
% -\left(
% \frac{\gamma(1-p)}{k}+
% \frac{p}{k}-\frac{p(1-p)}{2k^2}
% -
% \frac{\gamma(1-p)}{2k(k+1)}+
% \frac{\gamma p(1-p)}{2(k+1)^2}-
% \frac{p}{2k^2}
% \right)\|z_k-z_{k-1}\|^2\\
% &\quad
% +\frac{1+p}{2}\|z_{k+1}-2z_k+z_{k-1}\|^2
% \\
% &\quad
% -\gamma (1-p)
% \left(\frac{1}{k(k+1)}-\frac{(k+1)^p-k^p}{k^p(k+1)}\right)
% \left<z_k-z_{k-1},z_0-\frac{1}{2}(z_k+z_{k1})\right>\\
% &=
% -\left(
% \frac{\gamma(1-p)}{k}+
% \frac{p}{k}-\frac{p(1-p)}{2k^2}
% -
% \frac{\gamma(1-p)}{2k(k+1)}+
% \frac{\gamma p(1-p)}{2(k+1)^2}-
% \frac{p}{2k^2}
% \right)\|z_k-z_{k-1}\|^2\\
% &\quad
% +\frac{1+p}{2}\|z_{k+1}-2z_k+z_{k-1}\|^2
% \\
% &\quad
% -\frac{\gamma(1-p)^2}{k(k+1)}
% \left<z_k-z_{k-1},z_0-\frac{1}{2}(z_k+z_{k1})\right>
% \\
% &\quad
% -\gamma (1-p)
% \underbrace{\left(\frac{1}{k(k+1)}-\frac{(k+1)^p-k^p}{k^p(k+1)}-\frac{1-p}{k(k+1)}\right)}_{=C_1(k,p)}
% \left<z_k-z_{k-1},z_0-\frac{1}{2}(z_k+z_{k1})\right>
% \end{align*}

By Lemma~\ref{lem:c1kp_ineq}, $|C_1(k,p)|\le \frac{p(1-p)}{2k^3}$. Using \eqref{young:ineq}, Young's inequality, with $\varepsilon=1/k$ and \eqref{beta:ineq} we get
\begin{align*}
&-\gamma (1-p)
\left(\frac{1}{k(k+1)}-\frac{(k+1)^p-k^p}{k^p(k+1)}-\frac{1-p}{k(k+1)}\right)
\left<z_k-z_{k-1},z_0-\frac{1}{2}(z_k+z_{k1})\right>\\
&\quad\le
\frac{\gamma p(1-p)^2}{4k^2}\|z_k-z_{k-1}\|^2
+
\frac{\gamma p(1-p)^2}{4k^4}\|z_0-\frac{1}{2}(z_k+z_{k-1})\|^2\\
&\quad\le
\frac{\gamma p(1-p)^2}{4k^2}\|z_k-z_{k-1}\|^2
+
\frac{\gamma p(1-p)^2}{8k^4}
\left(\|z_0-z_k\|^2+\|z_0-z_{k-1}\|^2\right).
\end{align*}
Putting these together we get
\begin{align*}
\frac{1}{2}&\|z_{k+1}-z_k\|^2
-
\frac{1}{2}\|z_{k}-z_{k-1}\|^2
+\frac{1}{k+1}\|z_k-z_{k-1}\|^2
-
\frac{\gamma(1-p)^2}{k(k+1)}
\left<z_k-z_{k-1},\frac{1}{2}(z_k+z_{k1})-z_0\right>\\
&\le 
-\left(
\frac{(\gamma-1)(1-p)}{k}
+
\frac{\gamma p(1-p)}{2(k+1)^2}
-\frac{p(1-p)}{2k^2}
-
\frac{\gamma(1-p)}{2k(k+1)}-
\frac{p}{2k^2}-\frac{\gamma p(1-p)^2}{4k^2}
\right)\|z_k-z_{k-1}\|^2\\
&\quad
+\frac{1+p}{2}\|z_{k+1}-2z_k+z_{k-1}\|^2
+
\frac{\gamma p(1-p)^2}{8k^4}
\left(\|z_0-z_k\|^2+\|z_0-z_{k-1}\|^2\right)
\end{align*}
With Lemma~\ref{lem:main1} and Lemma~\ref{lem:main2}, we get
\begin{align*}
\frac{1}{2}&\|z_{k+1}-z_k\|^2
-
\frac{1}{2}\|z_{k}-z_{k-1}\|^2
+\frac{1}{k+1}\|z_k-z_{k-1}\|^2
-
\frac{\gamma(1-p)^2}{k(k+1)}
\left<z_k-z_{k-1},\frac{1}{2}(z_k+z_{k1})-z_0\right>\\
&\le 
\scriptstyle{-\left(
\frac{(\gamma-1)(1-p)}{k}
-
\frac{2(1+p)(1-p)^2R^2}{k^{2p}}
+
\frac{\gamma p(1-p)}{2(k+1)^2}
-\frac{p(1-p)}{2k^2}
-
\frac{\gamma(1-p)}{2k(k+1)}-
\frac{p}{2k^2}-\frac{\gamma p(1-p)^2}{4k^2}
-
\frac{2\gamma^2(1+p)(1-p)^2}{k^2}
\right)}
\displaystyle{\|z_k-z_{k-1}\|^2}\\
&\quad
+
\left(
2(1+p)(1-p)^2
\left(
\frac{p^2R^2}{k^{2+2p}}
+
\frac{\gamma^2}{k^4}
\right)
+
\frac{\gamma p(1-p)^2}{8k^4}\right)C_2\\
&\le 
\underbrace{\left(-
\frac{(\gamma-1)(1-p)}{k}
+\mathcal{O}\left(\frac{1}{k^{2p}}\right)
\right)}_{=C_3(k,\gamma,p,R)}\|z_k-z_{k-1}\|^2
+
\mathcal{O}\left(
\frac{1}{k^{2+2p}}
\right)
\end{align*}
Note that there is a $K\in\mathbb{N}$ such that $C_3(k,\gamma,p,R)\le 0$ for all $k\ge K$ (with $\gamma$, $p$, and $R$ fixed).

In Section~\ref{s:anchoring-cont}, we multiplied the established inequality by $t^2$ and integrating both sides to get
\[
\frac{t^2}{2}\left\|\dot{z}(t)\right\|^2
\le 
\frac{\gamma}{2}\|z(t)-z_0\|^2.
\]
We mimic analogous calculations in the discrete-time setup.
Multiply both sides with $k(k+1)$ and sum both sides from $k=1$ to $k=k$, and apply Lemma~\ref{lem:sum1} and Lemma~\ref{lem:sum2} to get
\begin{align*}
\frac{k(k+1)}{2}&\|z_{k+1}-z_k\|^2
\le
\frac{\gamma(1-p)^2}{2}\|z_k-z_0\|^2
+C_4+\mathcal{O}\left(\frac{1}{k^{2p-1}}\right)
\end{align*}
where $C_4<\infty$ since $C_3(k,\gamma,p,R)>0$ for only finitely many $k$.
Reorganizing we get
\begin{align*}
&\frac{k(k+1)(1-p)^2}{2(k+1)^{2p}}\left\|g_k\right\|^2
+
\frac{k(1-p)^2\gamma^2}{2(k+1)}\left\|z_0-z_k\right\|^2
-
\frac{k(1-p)^2\gamma}{(k+1)^{p}}
\langle g_k,z_0-z_k\rangle\\
&\quad\le
\frac{\gamma(1-p)^2}{2}\|z_k-z_0\|^2
+C_4+\mathcal{O}\left(\frac{1}{k^{2p-1}}\right)
\end{align*}
Reorganizing yet again we get
\begin{align*}
&\frac{k(k+1)(1-p)^2}{2(k+1)^{2p}}\left\|g_k\right\|^2
-
\frac{k(1-p)^2\gamma}{(k+1)^{p}}
\langle g_k,z_0-z_k\rangle\\
&\quad\le
\frac{\gamma(1-p)^2}{2}
\left(1-\frac{\gamma k}{k+1}\right)
\|z_k-z_0\|^2
+C_4+\mathcal{O}\left(\frac{1}{k^{2p-1}}\right)\\
&\quad\le
C_4+\mathcal{O}\left(\frac{1}{k^{2p-1}}\right),
\end{align*}
where we use the assumption that $\gamma\ge 2$.
Reorganizing again, we get
\begin{align*}
\|g_k\|^2
&\le
\frac{2\gamma }{(k+1)^{1-p}}
\langle g_k,z_0-z_k\rangle
+
\frac{2C_4}{(1-p)^2k(k+1)^{1-2p}}
+\mathcal{O}\left(\frac{1}{k}\right)\\
&\le
\frac{2\gamma}{(k+1)^{1-p}}
\langle g_k,z_0-z_\star\rangle
+
\frac{4C_4}{(1-p)^2(k+1)^{2-2p}}
+\mathcal{O}\left(\frac{1}{k}\right)\\
&\le
\frac{1}{2}\|g_k\|^2+
\frac{2\gamma^2}{(k+1)^{2-2p}}
\|z_0-z_\star\|^2
+
\frac{4C_4}{(1-p)^2(k+1)^{2-2p}}
+\mathcal{O}\left(\frac{1}{k}\right)
\end{align*}
for $k\ge 1$, where the second inequality follows from \eqref{eqref:monotone_ineq}, the monotonicity inequality,
and the third inequality follows from \eqref{young:ineq}, Young's inquality, with $\varepsilon=\gamma/(k+1)^{1-p}$.
Finally, we have
\begin{align*}
\|g_k\|^2
\le
\frac{C}{k^{2-2p}}
+\mathcal{O}\left(\frac{1}{k}\right)
\end{align*}
with $C=4\gamma^2+8C_4/(1-p)^2$.
\hfill\ensuremath{\blacksquare}

% \section{Analysis of anchoring with stochastic gradients: Proof of Theorem~\ref{thm:stochastic_anchoring}}
\section{Analysis of Theorem~\ref{thm:stochastic_anchoring}}
For convenience, we restate the update, assumptions, and the theorem:
\[
z_{k+1}=z_k-\frac{1-p}{(k+1)^{p}}g(z_k;\omega_k)+\frac{(1-p)\gamma }{(k+1)^{1-\varepsilon}}(z_0-z_k)
\tag{SSSGD-A}
\]
\begin{gather*}
\text{$L$ is convex-concave and has a saddle point}\tag{A0}\\
\expec_{\omega_1,\omega_2} \|g(z_1;\omega_1)-g(z_2;\omega_2)\|^2\le R_1^2\|z_1-z_2\|^2+R_2^2
\quad\forall\,z_1,z_2 \in \reals^{m+n}\tag{A2}
\end{gather*}
\paragraph{Theorem \ref{thm:stochastic_anchoring}.}
\emph{
Assume \eqref{assump:convex-concavity} and \eqref{assump:second-moment}.
If $p\in (1/2,1)$, $\varepsilon\in(0,1/2)$, and $\gamma> 0$, then 
SSSGD-A converges in the sense of $z_k\stackrel{L^2}{\rightarrow} z_\star$, where $z_\star$ is a saddle point.
}

To clarify, we do not assume $L$ is differentiable for Theorem~\ref{thm:stochastic_anchoring}.

\paragraph{Proof outline.}
The key insight is to define $\zeta_k$ to be something like a ``fixed point'' of the $k$-th iteration of SSSGD-A and then to show $z_k$ shrinks towards to $\zeta_k$ in the following sense
\[
\|z_{k+1}-\zeta_{k+1}\|^2
\le 
(1-\text{something})
\|z_{k}-\zeta_{k}\|^2
+\text{(something small)}.
\]
Lemma~\ref{lem:asymp_resolve} states that $\zeta_{k}$ slowly (stably) converges to a solution.
Using the fact that $z_k$ shrinks towards $\zeta_k$ and the fact that $\zeta_{k}$ is a slowly moving target converging to a solution, we conclude $z_k$ converges to a solution.

\paragraph{Preliminary definition and result.}
More precisely, we define $\zeta_k$ to satisfy
\[
\zeta_k\in \zeta_k-\frac{1-p}{(k+1)^{p}}G(\zeta_k)+\frac{(1-p)\gamma }{(k+1)^{1-\varepsilon}}(z_0-\zeta_k).
\]
(However, $\zeta_k$ is not actually a fixed point, since SSSGD-A has noise and since $G$ is a multi-valued operator.)
We equivalently write
\[
\zeta_{k+1}= \left(I+\frac{(k+1)^{1-p-\varepsilon}}{\gamma}G\right)^{-1}(z_0).
\]

\begin{lemma}[Proposition 23.31 and Theorem 23.44 of \cite{BCBook}]
\label{lem:asymp_resolve}
Let $G$ be a maximal monotone operator such that $\zer(G)\ne\emptyset$.
Then $(I+\tau G)^{-1}(z_0)\rightarrow P_{\zer(G)}(z_0)$ and
\[
\|(I+(\tau+s) G)^{-1}(z_0)-(I+\tau G)^{-1}(z_0)\|\le \mathcal{O}\left(\frac{s}{\tau}\right)
\]
for any $s\ge 0$ as $\tau\rightarrow \infty$. 
\end{lemma}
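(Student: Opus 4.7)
The plan is to write $J_\tau = (I+\tau G)^{-1}$ and $w_\tau = J_\tau(z_0)$, so that the defining relation is $(z_0-w_\tau)/\tau \in G(w_\tau)$. Single-valuedness and everywhere-definedness of $J_\tau$ follow from Minty's theorem as already invoked in the paper. The proof then splits into three standard monotone-operator estimates: a boundedness bound on $w_\tau$, the Lipschitz-in-$\tau$ estimate, and identification of the limit as $P_{\zer(G)}(z_0)$.

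First I would establish boundedness. For any $z_\star \in \zer(G)$, applying monotonicity to the pair $(w_\tau,z_\star)$ with subgradients $(z_0-w_\tau)/\tau$ and $0$ yields $\langle z_0-w_\tau, w_\tau-z_\star\rangle \ge 0$, which rearranges to the Fejer-type estimate $\|w_\tau-z_\star\|^2 \le \langle z_0-z_\star, w_\tau-z_\star\rangle$. By Cauchy--Schwarz this gives $\|w_\tau-z_\star\| \le \|z_0-z_\star\|$, hence $\|z_0-w_\tau\| \le 2\|z_0-z_\star\|$ uniformly in $\tau$.

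Next I would prove the Lipschitz bound. Apply monotonicity to the pair $(w_\tau,w_{\tau+s})$ using the subgradients $(z_0-w_\tau)/\tau$ and $(z_0-w_{\tau+s})/(\tau+s)$. A short algebraic manipulation rewrites the difference of these subgradients as $[s(z_0-w_\tau) + \tau(w_{\tau+s}-w_\tau)]/(\tau(\tau+s))$, so monotonicity yields
\begin{equation*}
\tau\|w_{\tau+s}-w_\tau\|^2 \le s\,\langle z_0-w_\tau,\, w_\tau-w_{\tau+s}\rangle.
\end{equation*}
Cauchy--Schwarz then gives $\|w_{\tau+s}-w_\tau\| \le (s/\tau)\|z_0-w_\tau\|$, and combined with the boundedness step this is the claimed $\mathcal{O}(s/\tau)$ rate.

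Finally I would identify the limit. Boundedness produces cluster points along any sequence $\tau_n\to\infty$; fix one, say $w_{\tau_n}\to w_\infty$. Since $(z_0-w_{\tau_n})/\tau_n \in G(w_{\tau_n})$ and $(z_0-w_{\tau_n})/\tau_n\to 0$, the graph-closedness of the maximal monotone operator $G$ (noted in the appendix preliminaries) gives $0\in G(w_\infty)$, so $w_\infty\in \zer(G)$. Passing to the limit in $\|w_{\tau_n}-z_\star\|^2 \le \langle z_0-z_\star, w_{\tau_n}-z_\star\rangle$, which holds for every $z_\star\in \zer(G)$, produces the variational inequality $\langle w_\infty-z_0,\, w_\infty-z_\star\rangle \le 0$ for all $z_\star\in\zer(G)$. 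This is exactly the optimality condition characterizing $w_\infty = P_{\zer(G)}(z_0)$, which is unique since $\zer(G)$ is closed convex. Uniqueness of the cluster point plus boundedness then upgrades subsequential convergence to full convergence $w_\tau\to P_{\zer(G)}(z_0)$.

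The only nonroutine ingredient is the use of graph-closedness of maximal monotone operators to pass $0\in G(w_\infty)$ through the limit, but this is precisely the upper semicontinuity fact the paper has already stated. Everything else is a direct monotonicity-plus-Cauchy--Schwarz computation, so I do not expect a real obstacle; the main care point is simply making sure the boundedness estimate is in place before invoking it in the Lipschitz bound.
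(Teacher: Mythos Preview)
The paper does not actually supply its own proof of this lemma: it is stated with a bracket citation to Proposition~23.31 and Theorem~23.44 of \cite{BCBook} and then used directly. So there is no in-paper argument to compare against.

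Your proposal is a correct, self-contained proof. Each of the three steps checks out: the Fej\'er-type bound $\|w_\tau-z_\star\|\le\|z_0-z_\star\|$ follows exactly as you say from monotonicity at the pair $(w_\tau,z_\star)$; the algebraic rewriting of $(z_0-w_\tau)/\tau-(z_0-w_{\tau+s})/(\tau+s)$ as $[s(z_0-w_\tau)+\tau(w_{\tau+s}-w_\tau)]/(\tau(\tau+s))$ is correct and yields $\|w_{\tau+s}-w_\tau\|\le(s/\tau)\|z_0-w_\tau\|$ after Cauchy--Schwarz; and the limit identification via graph-closedness plus the variational inequality $\langle w_\infty-z_0,w_\infty-z_\star\rangle\le 0$ is precisely the characterization of $P_{\zer(G)}(z_0)$. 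This is essentially the standard textbook route (and indeed is how the cited reference proceeds), so there is nothing to flag beyond noting that you have reconstructed the omitted argument correctly.
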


\subsection{Proof of Theorem~\ref{thm:stochastic_anchoring}}

% \begin{proof}
% Let $\delta>0$.
% Let $\tilde{V}_k=\max\{V_{k}-\delta,0\}$ for $k\ge 0$. Then
% \begin{align*}
% V_{k+1}
% &\le  
% \left(
% 1-\frac{C}{k^{1-\varepsilon}}
% +f(k)
% \right)
% V_k 
% +
% \delta g(k)\max\left\{\sqrt{\frac{V_k}{\delta}},\frac{V_k}{\delta}\right\}
% +U_k
% \\
% \tilde{V}_{k+1}
% &\le  
% \left(
% 1-\frac{C}{k^{1-\varepsilon}}
% +f(k)
% \right)
% \left(V_k -\delta\right)
% +
% \delta f(k)
% -
% \frac{C\delta}{k^{1-\varepsilon}}
% +
% \delta g(k)
% +
% \delta g(k)\max\left\{\sqrt{\frac{V_k}{\delta}}-1,\frac{V_k}{\delta}-1\right\}
% +U_k
% \end{align*}
% Note that $\max\{\sqrt{x}-1,x-1\}\le \max\{0,x-1\}$ for all $x\ge 0$. So
% \begin{align*}
% \tilde{V}_{k+1}
% &\le  
% \left(
% 1-\frac{C}{k^{1-\varepsilon}}
% +f(k)
% \right)
% \tilde{V}_k
% +
% \delta f(k)
% -
% \frac{C\delta}{k^{1-\varepsilon}}
% +
% \delta g(k)
% +
% \delta g(k)\max\left\{0,\frac{V_k}{\delta}-1\right\}
% +U_k \\
% &=
% \left(
% 1-\frac{C}{k^{1-\varepsilon}}
% +f(k)+g(k)
% \right)
% \tilde{V}_k
% +
% \delta f(k)
% -
% \frac{C\delta}{k^{1-\varepsilon}}
% +
% \delta g(k)
% +U_k 
% \end{align*}
% for $k\ge K$, we have
% \[
% \tilde{V}_{k+1}
% \le
% \left(
% 1-\frac{C}{2k^{1-\varepsilon}}
% \right)
% \tilde{V}_k
% +
% \delta f(k)
% -
% \frac{C\delta}{k^{1-\varepsilon}}
% +
% \delta g(k)
% +U_k 
% \]
% XXX
% \end{proof}

\paragraph{Main proof.}
Since $1-p-\varepsilon>0$, Lemma~\ref{lem:asymp_resolve} gives us
\[
\zeta_{k}\rightarrow P_{\zer(G)}(z_0).
\]
Then we have
\begin{align*}
&
\expec\left[\left\|z_{k+1}-\zeta_{k+1}\right\|^2
\,\big|\,\mathcal{F}_k\right]\\
&=
\expec\left[
\Big\|z_{k}-\zeta_{k}
-\frac{1-p}{(k+1)^{p}}g(z_k;\omega_k)+\frac{(1-p)\gamma}{(k+1)^{1-\varepsilon}}(z_0-z_k)+\zeta_{k}
-\zeta_{k+1}\Big\|^2
\,\Big|\,\mathcal{F}_k\right]\\ 
&=
\left\|z_{k}-\zeta_{k}\right\|^2
-
\left<
\frac{1-p}{(k+1)^{p}}G(z_k)
+
\frac{(1-p)\gamma}{(k+1)^{1-\varepsilon}}(z_k-z_0),z_{k}-\zeta_{k}\right>\\
&
\quad+
\left< z_{k}-\zeta_{k},
\zeta_{k}
-\zeta_{k+1}
\right>\\
&
\quad+
\expec\left[
\left\|
\frac{1-p}{(k+1)^{p}}g(z_k;\omega_k)-\frac{(1-p)\gamma}{(k+1)^{1-\varepsilon}}(z_0-z_k)
-\zeta_{k}
+\zeta_{k+1}\right\|^2
\,\Big|\,\mathcal{F}_k\right]\\
&\le
\left(1-\frac{(1-p)\gamma}{(k+1)^{1-\varepsilon}}\right)
\left\|z_{k}-\zeta_k\right\|^2
+
\left\| z_{k}-\zeta_{k}\right\|
\left\|\zeta_{k}-\zeta_{k+1}
\right\|\\
&\quad+
\expec\left[
\mathcal{O}\left(\frac{1}{(k+1)^{2p}}\right)\|g(z_k;\omega_k)\|^2\,\Big|\,\mathcal{F}_k\right]
+
\mathcal{O}\left(\frac{1}{(k+1)^{2(1-\varepsilon)}}\right)\|z_0-z_k\|^2
+
\mathcal{O}\left(\frac{1}{(k+1)^{2}}\right)
\\
&\le
\left(1-\frac{(1-p)\gamma}{(k+1)^{1-\varepsilon}}\right)
\left\|z_{k}-\zeta_{k}\right\|^2
+
\mathcal{O}(1/k)
\left\| z_{k}-\zeta_{k}\right\|
\\
&\quad+
\mathcal{O}\left(\frac{1}{(k+1)^{2p}}\right)(R_3^2\|z_0-z_k\|^2+R_4^2)
+
\mathcal{O}\left(\frac{1}{(k+1)^{2(1-\varepsilon)}}\right)\|z_0-z_k\|^2
+
\mathcal{O}\left(\frac{1}{(k+1)^{2}}\right),
\end{align*}
where the first inequality follows from \eqref{eqref:monotone_ineq}, the monotonicity inequality, Cauchy-Schwartz inequality, and \eqref{beta:ineq},
Now we take the full expectation to get
\begin{align*}
&
\expec\left[\left\|z_{k+1}-\zeta_{k+1}\right\|^2\right]\\
&\le
\left(1-\mathcal{O}\left(\frac{1}{(k+1)^{1-\varepsilon}}\right)
+
\mathcal{O}\left(\frac{1}{(k+1)^{2p}}\right)
+
\mathcal{O}\left(\frac{1}{(k+1)^{2(1-\varepsilon)}}\right)
\right)
\expec\left[\left\|z_{k}-\zeta_{k}\right\|^2\right]\\
&\quad
+
\mathcal{O}(1/k)
\expec\left[\left\| z_{k}-\zeta_{k}\right\|^2\right]^{1/2}
\\
&\quad+
\mathcal{O}\left(\frac{1}{(k+1)^{2p}}\right)(\|z_0-z_\star\|^2+1)
+
\mathcal{O}\left(\frac{1}{(k+1)^{2(1-\varepsilon)}}\right)\|z_0-z_\star\|^2
+
\mathcal{O}\left(\frac{1}{(k+1)^{2}}\right),
\end{align*}
where we used 
$\expec[\| z_{k}-\zeta_{k}\|]^2\le 
\expec[\| z_{k}-\zeta_{k}\|^2]$.
Applying Lemma~\ref{lem:custom_summability}, we get
$\expec\left[\left\|z_{k}-\zeta_{k}\right\|^2\right]\rightarrow0$.
Since $\zeta_k\rightarrow P_{\zer(G)}(z_0)$, we conclude
$z_{k}\stackrel{L^2}{\rightarrow}P_{\zer(G)}(z_0)$.
\hfill\ensuremath{\blacksquare}

\end{document}